\theoremstyle{plain}
\newtheorem{theorem}{Theorem}[section]
\newtheorem{lemma}[theorem]{Lemma}
\newtheorem{corollary}[theorem]{Corollary}
\theoremstyle{definition}
\newtheorem{definition}[theorem]{Definition}
\newtheorem{assumption}[theorem]{Assumption}
\theoremstyle{remark}
\newtheorem{remark}[theorem]{Remark}
\def\ceil#1{\lceil #1 \rceil}
\def\floor#1{\lfloor #1 \rfloor}
\def\1{\bm{1}}
\def\eps{{\epsilon}}
\def\sqrtfrac#1#2{\sqrt{\frac{#1}{#2}}}
\DeclareMathAlphabet{\mathsfit}{\encodingdefault}{\sfdefault}{m}{sl}
\SetMathAlphabet{\mathsfit}{bold}{\encodingdefault}{\sfdefault}{bx}{n}
\def\gA{{\mathcal{A}}}
\def\gB{{\mathcal{B}}}
\def\gE{{\mathcal{E}}}
\def\gF{{\mathcal{F}}}
\def\gL{{\mathcal{L}}}
\def\gN{{\mathcal{N}}}
\def\gP{{\mathcal{P}}}
\def\gZ{{\mathcal{Z}}}
\def\sA{{\mathbb{A}}}
\def\sR{{\mathbb{R}}}
\def\sS{{\mathbb{S}}}
\newcommand{\E}{\mathbb{E}}
\newcommand{\R}{\mathbb{R}}
\newcommand{\I}[1]{\mathbb{1}\{#1\}}
\DeclareMathOperator*{\argmax}{arg\,max}
\DeclareMathOperator*{\argmin}{arg\,min}
\DeclarePairedDelimiterX{\infdivx}[2]{(}{)}{%
  #1\;\delimsize\|\;#2%
}
\DeclarePairedDelimiterX{\inp}[2]{\langle}{\rangle}{#1, #2}
\newcounter{protocol}
\newenvironment{nalign}{
\allowdisplaybreaks
    \begin{equation}
    \begin{aligned}
}{
    \end{aligned}
    \end{equation}
    \ignorespacesafterend
}
\newcommand\scalemath[2]{\scalebox{#1}{\mbox{\ensuremath{\displaystyle #2}}}}
\newcommand\hatg{\hat{g}}
\newcommand\hatf{\hat{f}}
\newcommand\hatC{\hat{C}}
\title{Beyond Minimax Rates in Group Distributionally Robust Optimization via a
Novel Notion of Sparsity}
\author{%
  Quan Nguyen \\
  Department of Computer Science\\
  University of Victoria\\
  \texttt{manhquan233@gmail.com} \\
  \And
  Nishant A. Mehta \\
  Department of Computer Science \\
  University of Victoria \\
  \texttt{nmehta@uvic.ca} \\
  \AND
  Crist\'{o}bal Guzm\'{a}n \\
  Institute for Mathematical and Computational Engineering \\
  Faculty of Mathematics and School of Engineering \\
  Pontificia Universidad Cat\'{o}lica de Chile \\
  \texttt{crguzmanp@mat.uc.cl} \\
}
\begin{document}

\maketitle

\begin{abstract}
    The minimax sample complexity of group distributionally robust optimization (GDRO) has been determined up to a $\log(K)$ factor, where $K$ is the number of groups. In this work, we venture beyond the minimax perspective via a novel notion of sparsity that we dub $(\lambda, \beta)$-sparsity. In short, this condition means that at any parameter $\theta$, there is a set of at most $\beta$ groups whose risks at $\theta$ all are at least $\lambda$ larger than the risks of the other groups. To find an $\epsilon$-optimal $\theta$, we show via a novel algorithm and analysis that the $\epsilon$-dependent term in the sample complexity can swap a linear dependence on $K$ for a linear dependence on the potentially much smaller $\beta$. 
    This improvement leverages recent progress in sleeping bandits, showing a fundamental connection between the two-player zero-sum game optimization framework for GDRO and per-action regret bounds in sleeping bandits. 
    We next show an adaptive algorithm which, up to log factors, gets a sample complexity bound that adapts to the best $(\lambda, \beta)$-sparsity condition that holds. 
    We also show how to get a dimension-free semi-adaptive sample complexity bound with a computationally efficient method.
    Finally, we demonstrate the practicality of the $(\lambda, \beta)$-sparsity condition and the improved sample efficiency of our algorithms on both synthetic and real-life datasets.
\end{abstract}

\section{Introduction}
\label{sec:intro}

\looseness=-1 Performing well across different data subpopulations and being robust to distribution-shift in testing are two of the most important goals in building machine learning models~\citep{Ben-Tal2013Robust,Williamson2019FairnessRM,Sagawa2020Distributionally}.
These goals are especially important for models making decisions that could have societal and safety impacts.
A recently proposed framework for achieving these goals is the group distributionally robust optimization (GDRO) framework, in which a learner aims to find a single hypothesis that minimizes the maximum risk over a finite number of data distributions.
This minimax objective is often considered in the context of fairness~\cite{Rawls1971theory,Williamson2019FairnessRM,Abernethy2022MinMaxFairness} when the distributions represent different demographic groups, or as a means to promote robustness when they represent possible shifts in the data distribution~\citep{Mohri2019,Sagawa2020Distributionally}.

\looseness=-1 More formally, given an $n$-dimensional hypothesis set $\Theta$ and a group of $K$ distributions $\gP_i$, the learner aims to solve the optimization $\min_{\theta \in \Theta}\max_{i\in\{1,\dots,K\}}R_i(\theta)$, where $R_i(\theta)$ is the risk of the learner with respect to $\gP_i$. 
Intuitively, this objective encourages the learner to find a model with good balance in performance with respect to a finite number of distributions of data, and avoid models that might perform extremely well on one distribution but have significantly worse performance on others. 
The GDRO framework assumes that the learner has access to a sampling oracle, which returns an i.i.d sample from $\gP_i$ upon receiving a request $i \in [K]$. The sample complexity of the learner is the number of samples needed to find an $\eps$-optimal hypothesis $\bar{\theta}$ such that the optimality gap $\max_{i}R_i(\bar{\theta}) - \max_{i}R_i(\theta^*)$ is smaller than a target value $\eps$, where $\theta^*$ is an optimal hypothesis.

Throughout the paper, the $\tilde{O}$ notation hides logarithmic factors.
Existing works~\citep{Soma2022,Zhang2023stochastic} have shown a sample complexity lower bound of order $\Omega(\frac{G^2D^2 + K}{\eps^2})$ and a near-matching $\tilde{O}\left(\frac{G^2D^2 + K}{\eps^2}\right)$  worst-case upper bound, where $D$ is the {$\ell_2$ diameter} of $\Theta$ and $G$ is 
the Lipschitz constant of the loss function.
While these existing results are useful for understanding worst-case scenarios, practical problems may have additional structure that allows for significantly lower sample complexity.
In particular, the $\Omega(\frac{G^2D^2 + K}{\eps^2})$ lower bound construction in~\cite{Soma2022} relies on having arbitrarily small gaps (i.e., difference in risks) between groups for all $\theta \in \Theta$. This property rarely holds in practice, where most hypotheses can have significant gaps between groups.  
For example, in car manufacturing, each car model often has noticeably different effects on different surfaces and road conditions.

\subsection{Contributions and Techniques} 
\looseness=-1 We transcend the established minimax bounds by considering problem instances with additional structure.
We formally define such a structure called $(\lambda,\beta)$-sparsity in Section~\ref{sec:lambdabetasparsity}. 
The main idea of $(\lambda,\beta)$-sparsity is that for all $\theta$, the groups can be divided into two sets: one contains groups with large risks and the other contains groups with small risks. The parameter $\lambda$ specifies the risk-difference between these two sets of groups, while $\beta$ specifies the number of groups with large risks.
Let $\beta_\lambda$ denote the smallest $\beta$ for which $(\lambda,\beta)$-sparsity holds. 
For problem with $(\lambda, \beta)$-sparsity, we show that the dependence on $K$ in the leading term (here and throughout, the term for which $1/\epsilon$ is of the highest order) can be reduced from $O(K \ln K)$ to $O(\beta_\lambda \ln K)$.
Table~\ref{table:results} 
summarizes
our main results, which consist of three high-probability upper bounds and a lower bound.
The leading terms in the upper bounds grow with $\tilde{O}\left(\frac{D^2G^2 + \beta_\lambda}{\eps^2}\right)$ instead of $\tilde{O}\left(\frac{D^2G^2 + K}{\eps^2}\right)$, where $\beta_\lambda$ could be much smaller than $K$.\footnote{
See Table~\ref{table:results} for full results.} 
To the best of our knowledge, these are the first bounds that go beyond the established minimax bound in~\cite{Soma2022,Zhang2023stochastic}.
The near-matching lower bound is of order $\Omega\left(\frac{D^2G^2 + \beta}{\eps^2}\right)$, generalizing the minimax lower bound in~\cite{Soma2022}.
\begin{table*}
    \caption{Summary of main results. 
    $\lambda$-adapt indicates if the bound is adaptive to the best $\lambda^*$ possible. Dimension-free indicates whether the bound depends on the dimension of $\Theta$. 
    $\delta$ is the failure probability.}
    \label{table:results}
    \centering
    \begin{tabular}{lcc}
       \toprule
      Upper and Lower Bounds     & $\lambda$-adapt?     & Dimension-free? \\
      \midrule
      $O\left(\frac{Kn\ln\left(\frac{GDK}{\delta}\right)}{\lambda^2} + \frac{(G^2D^2 + \beta_\lambda)\ln\left(\frac{K}{\delta}\right)}{\eps^2}\right)$(Theorem~\ref{thm:samplecomplexityknownlambda}) & $\times$  & $\times$       \\
      $O\left(\left(\frac{Kn\ln\left(\frac{GDK}{\delta}\right)}{(\lambda^*)^2} + \frac{(G^2D^2 + \beta_{\lambda^*})\ln\left(\frac{K}{\delta}\right)}{\eps^2}\right)\ln\left(\frac{1}{\eps}\right)\right)$(Theorem~\ref{thm:boundfullunknown})     & \checkmark & $\times$    \\
      $O\left(\frac{DKG\sqrt{(D^2G^2 + \beta)\ln(K/\delta)} \ln(\frac{KDG}{\eps\lambda\delta})}{\lambda^3\eps} + \frac{(D^2G^2 + \beta )\ln(K/\delta)}{\eps^2}\right)$ (Theorem~\ref{thm:samplecomplexitySBGDRODfree})     & $\times$        & \checkmark  \\
      $O\left(\frac{(D^2G^2 + \max(\ln(K),\beta_{\lambda^*}))\ln(K/\delta)}{\eps^2}\right)$ (Theorem~\ref{thm:highPrecisionBoundOfSB-GDRO-SA})     & \checkmark        & \checkmark  \\
      \midrule
      $\Omega\left(\frac{D^2G^2 + \beta}{\eps^2}\right)$(Theorem~\ref{thm:lowerbound}) & - & - \\
      \bottomrule
    \end{tabular}
  \end{table*}

Technically, our results are based on improving the sample complexity of the two-player zero-sum game framework for GDRO~\citep{Nemirovski2009,Zhang2023stochastic}.
In this framework, a game is played repeatedly as follows: in round $t$, the first player (the min-player) plays a hypothesis $\theta_t \in \Theta$ and the second player (the max-player) plays a group index $i_t \in [K]$ and draws a sample from distribution $\gP_{i_t}$. 
For the max-player, choosing one of $K$ groups and getting an i.i.d sample from that group is similar to pulling one of $K$ arms and getting feedback from that arm in a multi-armed bandit problem.
While existing works~\citep{Soma2022,Zhang2023stochastic} use a fixed set of $K$ arms in every round for the max-player to choose from, the $(\lambda,\beta)$-sparsity condition allows us to use a smaller, time-varying subset of active arms of size at most $\beta$.
To handle this time-varying action set, we use the sleeping bandits framework~\citep{Kleinberg2010Sleeping} to model the learning process of the max-player.
Critically, recent progress~\cite{NguyenAndMehta2024SBEXP3} in bounding the per-action regret in sleeping bandits (details in Section~\ref{sec:sleepingbandits}) enables us to reduce the max player's regret bound and improve the dependency on the number of groups from $K\ln K$ to $\beta\ln K$ in the leading term of the sample complexity.

For the two dimension-dependent bounds, the computation of the time-varying subsets of arms for the max-player is based on a uniform convergence bound for $\Theta$ that uses $\tilde{O}\left(\frac{Kn}{\lambda^2}\right)$ samples. 
The first bound is obtained by an algorithm called~\texttt{SB-GDRO} that takes $\lambda$ as input and outputs an $\eps$-optimal hypothesis using $\tilde{O}\left(\frac{Kn}{\lambda^2} + \frac{G^2D^2 + \beta_\lambda}{\eps^2}\right)$ samples.
Letting $\lambda^*$ be the $\lambda$ that minimizes the sample complexity bound of~\texttt{SB-GDRO}, a natural question is whether it is possible to nearly obtain this minimum sample complexity \emph{without} knowledge of $\lambda^*$. Surprisingly, in Section~\ref{sec:fullsparsity} we show that such adaptivity is possible.
A disadvantage of the fully-adaptive approach is that it is computationally expensive due to the explicit computation of covers of the potentially high-dimensional set $\Theta$. 
In Section~\ref{sec:SemiAdaptiveApproach}, we partially resolve this by proposing a computationally efficient semi-adaptive algorithm with a dimension-independent $\tilde{O}(\frac{D^2G^2 + \max(\ln(K), \beta_{\lambda^*})}{\eps^2})$ bound in high-precision settings where $\eps \ll \lambda^*$.

In Section~\ref{sec:experiments}, we present experimental results showing not only that this $(\lambda, \beta)$-sparsity condition holds for high-dimensional practical setting \emph{around} the optimal hypothesis $\theta^*$, but also our algorithms can efficiently (in both sample and computational complexity) compute an estimate of $\lambda^*$ and leverage it to find $\eps$-optimal hypotheses with significantly fewer samples compared to baseline methods.

\subsection{Related Works} 
\label{sec:relatedworks}
\looseness=-1 We consider the GDRO problem where the loss function is real-valued in $[0,1]$ and the hypothesis space $\Theta$ is convex and compact. 
In this setting,~\cite{Nemirovski2009}\footnote{~\cite{Nemirovski2009} do not impose the bounded loss assumption, although~\cite{Zhang2023stochastic} do adopt this assumption.} converts GDRO to a stochastic saddle point problem and uses stochastic mirror descent methods with $O(\frac{K(G^2D^2 + \ln(K))}{\eps^2})$ sample complexity guarantee.
~\cite{Sagawa2020Distributionally} adopts the two-player convex-concave game framework from the deterministic min-max optimization literature~\citep{Freund1999,PLGbook} to obtain $O\left(\frac{K^2(G^2D^2 + \ln(K))}{\eps^2}\right)$ sample complexity bound, which was improved to $\tilde{O}(\frac{D^2G^2 + K\ln(K)}{\eps^2})$ by~\cite{Zhang2023stochastic} by refining the approach.
An $\Omega\left(\frac{G^2D^2 + K}{\eps^2}\right)$ information-theoretic lower bound was shown in~\cite{Soma2022}.

A related, more constrained setting is the class of multi-distribution binary classification problems in which 
$\Theta$ has finite VC-dimension $d$~\citep{Nika2022OnDemandSampling, Awasthi23a}. 
Recent works have established tight minimax sample complexity bounds of order $\frac{d+K}{\eps^2}$  for this setting~\citep{Zhang2024optimalMDLVCdim,Peng2024optimalMDLVCdim}. Multi-distribution learning with multi-label prediction with offline data was recently explored in~\cite{Jang2024fairness}. We refer interested readers to~\cite{Nika2022OnDemandSampling,Zhang2024optimalMDLVCdim} for a more comprehensive discussion of related works in min-max fairness and federated learning settings.


\section{Problem Setup}
\label{sec:problemsetup}
Let $\Theta \subset \mathbb{R}^n$ be a compact convex set of hypotheses, $\mathcal{Z}$ be a sample space and $\ell: \Theta \times \mathcal{Z} \rightarrow [0, 1]$ be a loss function measuring the performance of a hypothesis on a data point. 
Similar to previous works in GDRO~\citep{Sagawa2020Distributionally,Nika2022OnDemandSampling}, we use the following assumption.
\begin{assumption}
    The diameter of $\Theta$ is bounded as $\norm{\theta}_2 \leq D$ for all $\theta \in \Theta$. The loss function $\ell$ is convex and $G$-Lipschitz in the first argument, i.e., $\abs{\ell(\theta, \cdot) - \ell(\theta', \cdot)} \leq G\norm{\theta - \theta'}_2$ for all $\theta, \theta' \in \Theta$.
\end{assumption}
There are $K$ groups, each associated with a distribution $(\gP_i)_{i,i=1,\dots,K}$ over $\mathcal{Z}$.
Let $[K] = \{1,2,\dots,K\}$.
Let $R_i(\theta) = \E_{z \sim \gP_i}[\ell(\theta,z)]$ be the risk of $\theta$ with respect to group $i$. 
The worst-case risk of a hypothesis $\theta$ is measured by its maximum risk over these distributions:
\begin{align*}
    \gL(\theta) = \max_{i \in [K]}R_i(\theta).
\end{align*}
The objective is find a hypothesis $\theta^*$ with minimum $\gL(\theta)$:
\begin{align}
    \theta^* = \argmin_{\theta \in \Theta}\gL(\theta) = \argmin_{\theta \in \Theta}\max_{i \in [K]}R_i(\theta).
    \label{eq:objfunction}
\end{align}
The optimality gap of a $\bar{\theta} \in \Theta$ is $
\mathrm{err}(\bar{\theta}) = \gL(\bar{\theta}) - \gL(\theta^*)$.
Similar to previous works, we assume that the learner has access to a sampling oracle that, for every query $i \in [K]$, returns an i.i.d sample $z \sim \gP_i$.
Given a target optimality $\eps$, the sample complexity of a learner is the number of samples to find an $\eps$-optimal hypothesis $\bar{\theta}$ such that $\mathrm{err}(\bar{\theta}) \leq \eps$.

\subsection[$(\lambda, \beta)$-Sparsity Structure]{$(\bm{\lambda, \beta})$-Sparsity Structure}
\label{sec:lambdabetasparsity}
First, we formally define the notion of a $\lambda$-dominant set.
\begin{definition}
	 For any $\lambda \in [0,1]$ and $\theta \in \Theta$, a non-empty set of groups $S \subseteq [K]$ is $\lambda$-dominant at $\theta$ if for all $j \notin S$,
    \begin{align}
        \min_{i \in S}R_i(\theta) \geq R_j(\theta) + \lambda.
        \label{eq:sparsitycondition}
    \end{align}
	\label{def:dominated}
\end{definition}
Note that $S = [K]$ is a dominant set since there is no $j$ in the empty set $[K] \setminus S$ such that $R_j(\theta) + \lambda > \min_{i \in [K]} R_i(\theta)$. Next, we introduce $(\lambda, \beta)$-sparsity, our novel condition for GDRO problems.
\begin{definition}
	For $\lambda \geq 0$ and $\beta \in [1, K]$, a GDRO problem is $(\lambda, \beta)$-sparse if for all $\theta \in \Theta$, there exists a $\lambda$-dominant set whose size is at most $\beta$. 
If $\lambda > 0$ and $\beta < K$, we say that $(\lambda, \beta)$ is nontrivial.
	\label{definition:sparsity}
\end{definition}
By definition, a GDRO instance can be $(\lambda, \beta)$-sparse for multiple $(\lambda, \beta)$. For example, a $(0.2, 10)$-sparse problem with $K = 20$ is also $(0.2, 11)$ and $(0.1, 10)$-sparse.
Similarly, there can be multiple $\lambda$-dominant sets at each $\theta$. 
Let $\sS_{\lambda,\theta}$ be the collection of all $\lambda$-dominant sets at $\theta$. 
Since $[K]$ is always a $\lambda$-dominant set, this collection always contains $[K]$.
Let 
$\beta_{\lambda,\theta} = \min_{S \in \sS_{\lambda,\theta}}\abs{S}$
be the size of the smallest $\lambda$-dominant set at $\theta \in \Theta$. Then, we have 
$\beta_\lambda = \max_{\theta \in \Theta} \beta_{\lambda,\theta}$
is the smallest value of $\beta$ such that 
$(\lambda, \beta)$-sparsity holds. 
Moreover, all GDRO instances are trivially $(0,1)$-sparse, in which case the $0$-dominant set contains one of the groups with maximum expected loss.
If $(\lambda, \beta)$-sparsity holds for nontrivial $(\lambda, \beta)$, then
for every model, there is a prominent gap in the outcome (i.e., risks) of applying that model across different groups. 
Figure~\ref{fig:example} (Right) illustrates 
the mathematical plausibility of nontrivial $(\lambda, \beta)$-sparsity in the continuous domain via a simple example with $\Theta = [0, 1]$.

\looseness=-1 In Section~\ref{sec:mainalgo}, we begin by presenting an algorithm which, for any input $\lambda \in (0, 1]$, returns an $\epsilon$-optimal hypothesis with sample complexity $\tilde{O}\left( \frac{K n}{\lambda^2} + \frac{D^2 G^2 + \beta_\lambda}{\epsilon^2} \right)$. For \emph{any} such $\lambda$, including trivial choices for which $\beta_\lambda = K$, this algorithm (with high probability) provides a valid sample complexity guarantee, but the guarantee is most useful for the unknown, optimal $\lambda$ --- call it $\lambda^*$ --- that minimizes the sample complexity. The focus of Section~\ref{sec:fullsparsity} is adaptive algorithms that obtain, without any knowledge of $\lambda^*$, sample complexity whose order is only larger than that of our previous algorithm (were it given $\lambda^*$) by a logarithmic factor.


\renewcommand{\algorithmiccomment}[1]{\textcolor{Green}{// #1}}
\begin{figure*}[t]
\begin{minipage}[b]{0.65\textwidth}
   \begin{algorithm}[H]     
        \caption{\texttt{SB-GDRO} with a known $\lambda$}
        \label{algo:SB-GDRO-Lambda}
        \begin{algorithmic}
        \STATE {\bfseries Input:} Constants $K, D, G, \lambda, \eps$, hypothesis set $\Theta \subset \R^n$
        \STATE  Draw $m$ (defined in Lemma~\ref{lemma:correctness}) samples from each group into set $V$\;
        \STATE   Initialize $\theta_1 = \argmin_{\theta \in \Theta}\norm{\theta}_2$\;
        \FOR{each round $t = 1, \dots, T$}
            \STATE $\hat{S}_{\theta_{t}} = \hyperref[algo:computedominantset]{\texttt{DominantSet}(\theta_{t}, V, 0.7\lambda)}$ \COMMENT{$0.4\lambda$-dominant set at $\theta_t$}\;%
            \STATE $q_{t} = \hyperref[algo:maxplayer]{\texttt{MaxP}(t, \hat{S}_{\theta_{t}})}$ \COMMENT{Action of max-player}\;%
            \STATE Draw $i_t \sim q_t$ and $z_{i_t,t} \sim \gP_{i_t}$\;                  
            \STATE $\theta_{t+1} = \hyperref[algo:minlayer]{\texttt{MinP}(\theta_t, z_{i_t,t})}$ \COMMENT{Action of min-player}\;
        \ENDFOR
        \STATE {\bfseries Return:} $\bar{\theta} = \frac{1}{T}\sum_{t=1}^T \theta_t$ 
    \end{algorithmic}
      \end{algorithm}
    \end{minipage}
    \hfill
    \begin{minipage}[b]{0.27\textwidth}
        \includegraphics[scale=0.82]{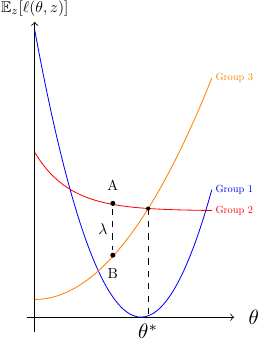}
    \end{minipage}
        \caption{(Left) SB-GDRO with known $\lambda$. (Right) A $(\lambda,\beta)$-sparse example with $K=3, \beta = 2$. 
        }
        \label{fig:example}
    \end{figure*}

\section{Two-Player Zero-Sum Game Approach}
\label{sec:mainalgo}
In this section, we present a new algorithm ~\texttt{SB-GDRO} 
that, for a given input $\lambda \in (0, 1]$, obtains 
an $O\left(\frac{Kn\ln(GDK/\delta)}{\lambda^2} + \frac{(G^2D^2 + \beta_\lambda)\ln(K/\delta)}{\eps^2}\right)$ sample complexity.
Let $\Delta_K$ be the $K$-dimensional probability simplex. 
For any $q \in \Delta_K$, let $\phi(\theta,q) = \sum_{i=1}^K q_iR_i(\theta)$ be the weighted sum of the risks of $\theta$ over $K$ groups.
Following~\cite{Nemirovski2009}, we write the objective function in~\eqref{eq:objfunction} as
\begin{align*}
    \min_{\theta \in \Theta}\gL(\theta) = \min_{\theta \in \Theta}\max_{q \in \Delta_K}\phi(\theta,q).
\end{align*}
The duality gap of $\bar{\theta} \in \Theta$ and $\bar{q} \in \Delta_K$ is defined as
    \begin{align*}
        \mathrm{err}(\bar{\theta}, \bar{q}) = \max_{q \in \Delta_K}\phi(\bar{\theta},q) - \min_{\theta \in \Theta} \phi(\theta, \bar{q}).
    \end{align*}
Since $\gL(\theta) \geq \phi(\theta,\bar{q})$ for all $\theta$, we have $\mathrm{err}(\bar{\theta}) \leq \mathrm{err}(\bar{\theta},\bar{q})$. 
To minimize $\mathrm{err}(\bar{\theta}, \bar{q})$, similar to existing works~\citep{Nemirovski2009, Soma2022}, we employ the following two-player zero-sum game approach: a game is run in $T$ rounds, where in each round, there are two players $\gA_\theta$ and $\gA_q$ corresponding to the $\min$ and $\max$ operators in the objective function~\eqref{eq:objfunction}. 
In round $t$, the min-player $\gA_\theta$ first plays a hypothesis $\theta_t$, and then the max-player $\gA_q$ plays a vector $q_t \in \Delta_K$.
Then, a random group $i_t \sim q_t$ is drawn, and the sampling oracle returns a sample $z_{i_t,t} \sim \gP_{i_t}$. 
The two players compute $\theta_{t+1}$ and $q_{t+1}$ for the next round based on $i_t$ and $z_{i_t,t}$.
The min-player's goal is to minimize its regret with respect to the best hypothesis in hindsight:
    \begin{align}
        R_{\gA_\theta} = \sum_{t=1}^T \phi(\theta_t, q_t) - \min_{\theta \in \Theta}\sum_{t=1}^T \phi(\theta, q_t).
    \end{align}
The max-player's goal is to minimize its regret with respect to the best weight vector in hindsight:
    \begin{align}
        R_{\gA_q} = \max_{q \in \Delta_K}\sum_{t=1}^T \phi(\theta_t, q) - \sum_{t=1}^T \phi(\theta_t, q_t).
    \end{align}
The~\texttt{SB-GDRO} algorithm is illustrated in Algorithm~\ref{algo:SB-GDRO-Lambda}. 
Before the game starts,~\texttt{SB-GDRO} draws a set $V_i$ of $m$ samples from each group $i \in [K]$, where $m$ is defined in Lemma~\ref{lemma:correctness}. Let $V = \{V_1, \dots, V_K\}$ be the collection of these sets.
The strategies of the two players are as follows:
\begin{itemize}
    \item The min-player $\gA_\theta$ 
    follows the stochastic mirror descent framework similar to~\cite{Zhang2023stochastic}. Specifically, given a sample $z_{i_t,t} \sim \gP_{i_t}$ and an existing $\theta_t$, $\gA_\theta$ computes $\theta_{t+1}$ by
    \begin{align}
        \scalemath{0.95}
        {
            \theta_{t+1} =  \argmin_{\theta \in \Theta}\left\{\eta_{w,t}\inp{\tilde{g}_t}{\theta - \theta_t} + \frac{1}{2}\norm{\theta - \theta_t}_2^2\right\}
        }
        \label{eq:minplayer}
    \end{align}
    where $\eta_{w,t} = \frac{D}{G\sqrt{t}}$ is a time-varying learning rate and $\tilde{g}_t = \nabla\ell(\theta_t, z_{i_t,t})$ is a stochastic gradient of $R_{i_t}(\theta_t)$.
    Note that $\theta_1 = \argmin_{\theta \in \Theta}\norm{\theta}_2$. 
    We refer to the strategy of the min-player as~\texttt{MinP}, whose formal procedure is given in Algorithm~\ref{algo:minlayer} in Appendix~\ref{appendix:proofsforSBGDRO}.
    \item The max-player $\gA_q$ 
    uses $\theta_t$ and $V$ 
    to compute a set of ``active'' groups $\hat{S}_{\theta_t}$. A group $i$ is \emph{active} if the empirical risk of $\theta$ with respect to $V_i$ is sufficiently large. 
    Then, a sleeping bandits algorithm called SB-EXP3 is used to compute a group-sampling probability vector $q_t \in \Delta_K$, where $q_{i,t} > 0$ for $i \in \hat{S}_{\theta_t}$ and $q_{i,t}=0$ for $i \notin \hat{S}_{\theta_t}$.
    We refer to the strategy of the max-player as~\texttt{MaxP}, whose details are given in Algorithm~\ref{algo:maxplayer}.
\end{itemize}
    \looseness=-1 Compared to existing works~\citep{Soma2022,Zhang2023stochastic,Nika2022OnDemandSampling}, our two-player zero-sum game procedure has two additional steps: the construction of the collection $V$ and the computation of the set $\hat{S}_{\theta_t}$.
At the end of round $T$, the hypothesis $\bar{\theta} = \frac{1}{T}\sum_{t=1}^T \theta_t$ is returned.
As shown in~\cite{Soma2022}, $\mathrm{err}(\bar{\theta}, \bar{q})$ is bounded by $\frac{1}{T}(R_{\gA_\theta} + R_{\gA_q})$. The min-player $\gA_\theta$ uses a variant of the stochastic online mirror descent algorithm in~\cite{Zhang2023stochastic} that uses time-varying learning rates instead of fixed learning rates, and obtains the same high-probability $O(DG\sqrt{T\ln(1/\delta)})$ regret bound. 
Our focus is to obtain an improved bound for the max-player $\gA_q$ with a modified strategy.

\looseness=-1 Next, in Section~\ref{sec:computedominanteset}, we compute the size of $V$ needed for constructing the $\lambda$-dominant sets in each round. Section~\ref{sec:sleepingbandits} presents the strategy of using $V$ to improve the regret of $\gA_q$.

\begin{algorithm}[tb]
    \caption{\texttt{MaxP}: the sleeping bandits max-player $\sA_q$}
    \label{algo:maxplayer}
    \begin{algorithmic}
        \STATE {\bfseries Input:} Time step $t > 0$, a dominant set $\hat{S}_{\theta_t}$    
    \STATE \textbf{if} $t = 1$ \textbf{then} \hspace{0em} Initialize $\tilde{q}_{i,t} = 1$ for $i \in [K]$\; 
    \STATE \textbf{else}\;
        \STATE \quad Let $h_{i,s} = 1 - \ell(\theta_s, z_{i,s})$ for $s = 1, 2, \dots, t-1$\;               
        \STATE \quad Compute $\tilde{q}_{i,t}$ by Equation~\eqref{eq:tildeqit}\;
    
    \STATE {\bfseries Return:} {$q_t$} where $q_{i,t} = \frac{\I{i \in \hat{S}_{\theta_t}}\tilde{q}_{i,t}}{\sum_{j=1}^K \I{j \in \hat{S}_{\theta_t}}\tilde{q}_{j,t}}$\;
    \end{algorithmic}
\end{algorithm}
\begin{algorithm}[tb]
    \caption{\texttt{DominantSet}: compute a dominant set $\hat{S}_{\theta_t}$}
    \label{algo:computedominantset}
    \begin{algorithmic}
    \STATE {\bfseries Input:} $\theta_t \in \Theta$, collection of samples $V$, threshold $\tau > 0$
    \STATE Compute $\hat{R}_{i}(\theta_t) = \frac{1}{m}\sum_{j=1}^m \ell(\theta_t, V_{i,j})$ for $i \in [K]$\;
    \STATE Sort $\hat{R}_{i}(\theta_t)$ in \emph{decreasing} order and let $\mathrm{ord}(i)$ be the sorted order of group $i$\;
    \STATE Compute $\mathrm{nxt}(i)$ by $\mathrm{ord}(\mathrm{nxt}(i)) = \mathrm{ord}(i)+1$ \;
    \STATE Let $\hat{i}$ be the first group in $\mathrm{ord}$ such that $\hat{R}_{i}(\theta_t) \geq \hat{R}_{\mathrm{nxt}(i)}(\theta_t) + \tau$, or $-1$ if no such groups exist.
    \STATE {\bfseries Return:} $\hat{S}_{\theta_t} = \{i \in [K]: \mathrm{ord}(i) \leq \mathrm{ord}(\hat{i})\}$ if $\hat{i} \neq -1$, otherwise  $\hat{S}_{\theta_t} =[K]$.
    \end{algorithmic}
\end{algorithm}
\subsection{Computing the Dominant Sets}
\label{sec:computedominanteset}

Before the game starts, a set of $m$ samples is drawn from each of $K$ groups. 
Let $V_{i,j} \in V_i$ be the $j$-th sample collected from group $i$.  
Let $\hat{R}_{i}(\theta) = \frac{1}{m}\sum_{j=1}^m \ell(\theta,V_{i,j})$ be the empirical risk of $\theta$ with respect to $V_i$.
To compute a $0.4\lambda$-dominant set at $\theta_t$, we use the algorithm~\texttt{DominantSet} (Algorithm~\ref{algo:computedominantset}) which traverses the groups in order of decreasing $\hat{R}_{i}(\theta_t)$ and returns a set $\hat{S}_{\theta_t}$ of groups up to (and including) the first group whose empirical risk exceeds the next group’s empirical risk by at least $\tau = 0.7\lambda$.
The following lemma shows that if $m$ is sufficiently large, then the set $\hat{S}_{\theta_t}$ returned by Algorithm~\ref{algo:computedominantset} is a $0.4\lambda$-dominant set at $\theta_t$ whose size does not exceed $\beta_\lambda$.
This implies that the max-player only needs to sample the groups in $\hat{S}_{\theta_t}$ in order to maximize the cumulative risks over $T$ rounds.
\begin{lemma}
     Let $m = \frac{384n\ln(\frac{741GDK}{\delta})}{0.01\lambda^2}$. With probability at least $1-\delta/2$, for any $t \in [T]$, ~\texttt{DominantSet} returns a $0.4\lambda$-dominant set $\hat{S}_{\theta_t}$  at $\theta_t$ satisfying $\abs{\hat{S}_{\theta_t}} \leq \beta_\lambda$.
    \label{lemma:correctness}
\end{lemma}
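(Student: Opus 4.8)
The plan is to establish two things simultaneously via a uniform convergence argument: (i) the returned set $\hat{S}_{\theta_t}$ is $0.4\lambda$-dominant at $\theta_t$, and (ii) its size is at most $\beta_\lambda$. Both will follow from a single high-probability event, namely that the empirical risks $\hat{R}_i(\theta)$ are uniformly close to the true risks $R_i(\theta)$ over all $\theta \in \Theta$ and all $i \in [K]$. Concretely, I would aim to show that with the stated choice of $m$, with probability at least $1-\delta/2$ we have $\sup_{\theta \in \Theta}\max_{i \in [K]} \abs{\hat{R}_i(\theta) - R_i(\theta)} \leq 0.15\lambda$ (the constant $0.15$ is chosen so that the slack terms add up correctly below; the exact split can be tuned).

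First I would prove this uniform bound. Fix a group $i$. Since $\ell$ is $G$-Lipschitz in $\theta$ and $\Theta$ has $\ell_2$-diameter $D$, the function class $\{\theta \mapsto \ell(\theta, z) : z \in \mathcal{Z}\}$ is $G$-Lipschitz on a set of diameter $D$, so a standard covering-number bound gives an $\alpha$-net of $\Theta$ of size at most $(3D/\alpha)^n$ (or $(1 + 2D/\alpha)^n$). On the net, apply Hoeffding's inequality (losses are in $[0,1]$) plus a union bound over the $\leq (3D/\alpha)^n$ net points and the $K$ groups; then use Lipschitzness to transfer the bound from the net to all of $\Theta$, paying an extra $G\alpha$. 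Choosing $\alpha \asymp 1/G$ (so that $G\alpha$ is a small constant multiple of $\lambda$) and solving for $m$ yields $m = \Theta\!\left(\frac{n \ln(GDK/\delta)}{\lambda^2}\right)$, matching the stated constant $\frac{384 n \ln(741 GDK/\delta)}{0.01\lambda^2}$ up to the bookkeeping of universal constants.

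Next, condition on this event and fix $t \in [T]$. For part (i): \texttt{DominantSet} returns $\hat{S}_{\theta_t}$ as the prefix (in decreasing-$\hat{R}$ order) up through the first index $\hat i$ with an empirical gap $\hat{R}_{\hat i}(\theta_t) \geq \hat{R}_{\mathrm{nxt}(\hat i)}(\theta_t) + 0.7\lambda$ (returning $[K]$ if no such gap exists, which is trivially $0.4\lambda$-dominant). For any $i \in \hat{S}_{\theta_t}$ and $j \notin \hat{S}_{\theta_t}$ we have $\hat{R}_i(\theta_t) \geq \hat{R}_{\hat i}(\theta_t)$ and $\hat{R}_j(\theta_t) \leq \hat{R}_{\mathrm{nxt}(\hat i)}(\theta_t)$ by the sorting, so $\hat{R}_i(\theta_t) \geq \hat{R}_j(\theta_t) + 0.7\lambda$; applying the uniform bound twice ($R_i \geq \hat{R}_i - 0.15\lambda$, $\hat{R}_j \geq R_j - 0.15\lambda$) gives $R_i(\theta_t) \geq R_j(\theta_t) + 0.7\lambda - 0.3\lambda = R_j(\theta_t) + 0.4\lambda$, so $\hat{S}_{\theta_t}$ is $0.4\lambda$-dominant. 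For part (ii): let $S^\star$ be a $\lambda$-dominant set at $\theta_t$ of size $\beta_{\lambda,\theta_t} \leq \beta_\lambda$. I claim every group in $\hat{S}_{\theta_t}$ lies in $S^\star$ — equivalently, no $j \notin S^\star$ can be placed in $\hat{S}_{\theta_t}$ ahead of the cutoff. If $j \notin S^\star$, then $R_j(\theta_t) \leq \min_{i \in S^\star} R_i(\theta_t) - \lambda$, and via the uniform bound $\hat{R}_j(\theta_t) \leq \min_{i \in S^\star}\hat{R}_i(\theta_t) - \lambda + 0.3\lambda < \min_{i \in S^\star}\hat{R}_i(\theta_t) - 0.7\lambda$, so there is an empirical gap of more than $0.7\lambda$ \emph{between} $S^\star$ and the complement in the sorted order; hence the first qualifying gap $\hat i$ occurs no later than the last element of $S^\star$, forcing $\hat{S}_{\theta_t} \subseteq S^\star$ and $\abs{\hat{S}_{\theta_t}} \leq \beta_\lambda$. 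A union bound over $t \in [T]$ is unnecessary because the uniform convergence event already covers all $\theta \in \Theta$, in particular all $\theta_t$; this is the reason the lemma can be stated for all $t$ simultaneously at no extra cost.

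\textbf{Main obstacle.} The technical heart is the uniform convergence bound and, more subtly, pinning down the constants so that the three slack budgets — the $0.7\lambda$ algorithmic threshold, the $0.4\lambda$ target dominance, and the $\lambda$-gap of $S^\star$ — are mutually consistent after paying $0.15\lambda$ on each empirical-risk estimate. The chain in part (ii) needs $0.7\lambda + 2(0.15\lambda) < \lambda$, i.e. strict slack, which is what makes the argument work; in part (i) we need $0.7\lambda - 2(0.15\lambda) \geq 0.4\lambda$, which holds with equality. So the delicate step is verifying that a single deviation level ($0.15\lambda$ here) makes both inequalities go through, and then back-solving the covering-number-plus-Hoeffding calculation to get exactly the $m$ claimed; the probabilistic content is otherwise routine.
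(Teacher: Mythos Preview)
Your proposal is correct and follows essentially the same line as the paper: establish a uniform deviation bound $\sup_{\theta,i}|\hat R_i(\theta)-R_i(\theta)|\le 0.15\lambda$, then deduce that the empirical sort separates any true $\lambda$-dominant set $S^\star$ from its complement by an empirical gap of at least $0.7\lambda$, so the algorithm's first $\ge 0.7\lambda$ gap fires no later than the boundary of $S^\star$ (giving the size bound), and any $0.7\lambda$ empirical gap certifies a $0.4\lambda$ true gap (giving dominance).

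Two small notes. First, your ``strict slack'' claim in the obstacle is incorrect: $0.7\lambda + 2(0.15\lambda) = \lambda$, not $<\lambda$, so in part~(ii) you only get $\hat R_j \le \min_{i\in S^\star}\hat R_i - 0.7\lambda$ with a non-strict inequality. This is still sufficient precisely because the algorithm's cutoff condition is $\ge 0.7\lambda$ (not $>$); be careful not to write ``$<$'' there. Second, for the uniform convergence step the paper takes a slightly different route, bounding the empirical Rademacher complexity of the loss class via Dudley's entropy integral rather than your direct covering-plus-Hoeffding argument; both yield the same $m=\Theta(n\ln(GDK/\delta)/\lambda^2)$ and your more elementary approach is perfectly valid here.
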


\subsection{Non-Oblivious Sleeping Bandits}
\label{sec:sleepingbandits}
In this section, we discuss the sleeping bandits problem~\citep{Kleinberg2010Sleeping}.
Sleeping bandits is a variant of the adversarial multi-armed bandit problem with $K$ arms, where arms can be non-active in each round.
Formally, in round $t = 1, 2, \dots, T$, 
an adaptive adversary gives the learner a set $\sA_t \subseteq [K]$ of active arms. For each arm $i \in \sA_t$, the adversary also selects a (hidden) loss value $h_{i,t} \in [0,1]$.   
The learner pulls one active arm $i_t \in \sA_t$, observes and incurs the loss $h_{i_t, t}$. 
Let $I_{i,t}=\I{i \in \sA_t}$. For any $a \in [K]$, the per-action regret of the learner with respect to arm $a$ is the difference in the cumulative loss of the learner and that of arm $a$ over the rounds in which $a$ is active:
\begin{align}
    \mathrm{Regret}(a) = \sum_{t=1}^T I_{a,t}(h_{i_t,t}- h_{a,t}).
    \label{eq:regretperarm}
\end{align}

\textbf{Modified EXP3-IX for sleeping bandits.} We use an algorithm called SB-EXP3~\cite{NguyenAndMehta2024SBEXP3} for sleeping bandits. SB-EXP3 uses the standard IX-loss estimate~\citep{Neu2015ExploreNM} as the loss estimate $\tilde{h}_{i,t}$ in round $t$, i.e., $\tilde{h}_{i,t} = \frac{h_{i,t}\I{i_t = i}}{q_{i,t} + \gamma_t}$,
where $\gamma_t > 0$ is the exploration factor in round $t$.
For each arm $i$, over $T$ rounds SB-EXP3 maintains a weight vector $\tilde{q}_t \in \R_+^K$ defined as
\begin{align}
    \scalemath{0.95}
    {
    \tilde{q}_{i,t} = \exp(\eta_{q,t}\sum_{s=1}^{t-1}I_{i,s}(h_{i_s,s}-\tilde{h}_{i,s} - \gamma_s\sum_{j \in \hat{S}_{\theta_s}} \tilde{h}_{j,s})),
    }
    \label{eq:tildeqit}
\end{align}
where $\eta_{q,s} > 0$ is the learning rate and $\tilde{h}_{i,s}$ is the loss estimate of arm $i$ in round $s$.
Initially $\tilde{q}_{i,1} = 1$ for $i \in [K]$. The sampling probability $q_t$ is computed by a filtering step, where inactive arms have $q_{i,t}=0$ and the weights of active arms are normalized 
{as} $
    q_{i,t} = \frac{I_{i,t}\tilde{q}_{i,t}}{\sum_{j=1}^K I_{j,t}\tilde{q}_{j,t}}$.
The following theorem bounds the per-action regret of SB-EXP3.
\begin{theorem}
    With $\eta_{q,t} = 2\gamma_t = \sqrt{\frac{\ln(3K/\delta)}{\sum_{s=1}^t \abs{\sA_s}}}$, SB-EXP3 guarantees that with probability $1-\delta$,
    \begin{align*}
        \max_{a \in [K]}\mathrm{Regret}(a) &\leq O\left(\sqrt{\ln(K/\delta)\sum_{t=1}^T \abs{\sA_t}}\right) .
    \end{align*}    
    \label{theorem:SBEXP3Regret}
\end{theorem}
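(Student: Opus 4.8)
The plan is to follow the standard EXP3-IX analysis, adapted to the sleeping setting via the ``per-action'' trick: for a fixed comparator arm $a$, only the rounds in which $a$ is active contribute, and on those rounds the learner's distribution $q_t$ is supported on $\sA_t \ni a$. First I would fix $a \in [K]$ and write $\mathrm{Regret}(a) = \sum_{t=1}^T I_{a,t}(h_{i_t,t} - h_{a,t})$. As usual, I split this into an estimation-error part (replacing true losses by IX-estimates) and a ``weighted regret'' part that the exponential-weights update controls. The IX estimator $\tilde h_{i,t} = h_{i,t}\I{i_t=i}/(q_{i,t}+\gamma_t)$ is biased downward, and the key feature of the IX scheme is that this bias is uniformly bounded and, crucially, concentrates: the standard lemma (e.g.\ Neu 2015) gives, with probability $1-\delta'$, a bound of the form $\sum_t (\tilde h_{a,t} - h_{a,t})\I{\cdot} \lesssim \frac{1}{2\gamma}\ln(1/\delta')$ when $\gamma_t\equiv\gamma$, and more generally $\frac{\ln(K/\delta')}{2}\sum_t$-type control with time-varying $\gamma_t$. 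Because $\sum_{s=1}^t |\sA_s|$ is the natural ``effective horizon,'' the choice $\gamma_t = \tfrac12\sqrt{\ln(3K/\delta)/\sum_{s\le t}|\sA_s|}$ makes $1/\gamma_t$ grow like $\sqrt{\sum_{s\le t}|\sA_s|/\ln(3K/\delta)}$, so summing the per-round bias contributions telescopes/integrates to $O(\sqrt{\ln(K/\delta)\sum_t|\sA_t|})$.

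Next I would handle the weighted-regret term $\sum_t \sum_{i\in\sA_t} q_{i,t}\tilde h_{i,t} - \sum_t \tilde h_{a,t}I_{a,t}$ via the usual potential argument on $W_t = \sum_i I_{i,t}\tilde q_{i,t}$ (or a suitably normalized version), using $\ln(1/\tilde q_{a,T+1}) = \eta_{q}\sum_t I_{a,t}(\tilde h_{a,t}+\gamma_s\sum_{j\in\hat S_{\theta_s}}\tilde h_{j,s} - h_{i_s,s})$ from \eqref{eq:tildeqit}. The inequality $e^{-x} \le 1 - x + x^2/2$ for $x\ge 0$ applied to $\eta_{q,t}\tilde h_{i,t}$ (nonnegative) gives the standard per-step drift $\eta_{q,t}\sum_i q_{i,t}\tilde h_{i,t}$ minus a quadratic term $\tfrac{\eta_{q,t}^2}{2}\sum_i q_{i,t}\tilde h_{i,t}^2$; the IX estimator is designed precisely so that $\sum_i q_{i,t}\tilde h_{i,t}^2 \le \sum_{i\in\sA_t}\tilde h_{i,t}$ (using $q_{i,t}\le q_{i,t}+\gamma_t$), and the exploration correction $\gamma_s\sum_{j\in\hat S_{\theta_s}}\tilde h_{j,s}$ in the exponent is exactly what cancels the leftover $\tfrac{\eta_{q,t}}{2}\sum_{i\in\sA_t}\tilde h_{i,t}$ when $\eta_{q,t}=2\gamma_t$. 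Summing over $t$ and telescoping the log-potential, and using $\tilde h_{i,t}\ge 0$ together with the normalization, yields the weighted-regret bound $\lesssim \frac{\ln K}{\eta_{q,T}} + \sum_t \eta_{q,t}(\text{quadratic residual})$, and with the stated step sizes both pieces are $O(\sqrt{\ln(K/\delta)\sum_t|\sA_t|})$. I would be slightly careful that $\eta_{q,t}$ and $\gamma_t$ are decreasing in $t$ (since $\sum_{s\le t}|\sA_s|$ is nondecreasing), which is what legitimizes the telescoping-with-varying-rates step — this is the one place where a standard ``non-oblivious / anytime'' argument (à la doubling or the monotone-rate lemma) is needed, and it relies on the adversary being adaptive but the rate depending only on past $|\sA_s|$.

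Finally, I would combine the two parts, take a union bound over the at most $K$ comparator arms $a$ (replacing $\delta$ by $\delta/K$-type quantities, already absorbed in the $\ln(3K/\delta)$ in the rate), and take the maximum over $a$ to obtain $\max_a \mathrm{Regret}(a) \le O(\sqrt{\ln(K/\delta)\sum_t |\sA_t|})$ with probability $1-\delta$. The main obstacle I anticipate is \emph{not} any single inequality but making the time-varying-rate bookkeeping rigorous against an \emph{adaptive} adversary: the IX concentration lemma and the potential telescoping must both be stated so that they tolerate $\sA_t$ (and hence $\eta_{q,t}$, $\gamma_t$) being revealed online, and one must verify that the martingale structure underlying the IX high-probability bias bound survives when the active set — not just the losses — is adversarially chosen. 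This is precisely the content imported from \cite{NguyenAndMehta2024SBEXP3}, so in the write-up I would isolate that concentration statement as a cited black box and devote the bulk of the argument to the (now deterministic-given-the-estimates) potential computation.
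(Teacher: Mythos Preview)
Your proposal is correct and follows essentially the same approach as the paper: FTRL/potential analysis on the IX-estimated losses (with the sleeping correction term $\gamma_s\sum_{j\in\sA_s}\tilde h_{j,s}$ in the exponent making inactive rounds cancel), Neu's high-probability IX concentration lemma applied both to the comparator arm and to the stability sum $\sum_t(\eta_t/2+\gamma_t)\sum_{j\in\sA_t}\tilde h_{j,t}$, and then the integral bound $\sum_t A_t/\sqrt{\sum_{s\le t}A_s}\le 2\sqrt{\sum_t A_t}$ to close. The only cosmetic difference is that the paper first proves a more general bound via the $\alpha$-Tsallis FTARL framework (their Theorem~\ref{thm:FTARLRegretBound}) and then takes $\alpha\to 1$ to obtain the Shannon-entropy version (Theorem~\ref{thm:FTARLShannonRegretBound}), whereas you work directly with the exponential-weights formulation; the resulting bound and the handling of the time-varying rates against an adaptive adversary are identical.
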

Our Theorem~\ref{theorem:SBEXP3Regret} is a relatively straightforward but important extension of~\citet[][Theorem 3]{NguyenAndMehta2024SBEXP3}. While the latter requires knowing $\max(\abs{\sA_t})_{t}$ for tuning $\eta_{q,t}$ and $\gamma_t$, we obtain the same bound using adaptive learning rates without knowing anything about future active sets.

\subsection{Sample Complexity of~\texttt{SB-GDRO}}
In~\texttt{SB-GDRO}, the max-player uses SB-EXP3 to compute the group-sampling probability $q_t$. 
For the max-player, the set $\hat{S}_{\theta_t}$ in Algorithm~\ref{algo:maxplayer} is similar to the set $\sA_t$ in sleeping bandits as the set of ``active groups'' in round $t$ depends on $\theta_t$, which is decided by a non-oblivious adversary (i.e., the min-player).
Furthermore, choosing a group $i_t \sim q_t$ and then drawing $z_{i_t,t} \sim \gP_{i_t}$ is mathematically equivalent to having $K$ samples $\{z_{i,t} \sim \gP_i \mid i \in [K]\}$ (one from each group) but observing only $z_{i_t,t}$. 
	The hidden stochastic loss of group $i$ in round $t$ is $\ell(\theta_t, z_{i,t})$. 
	Note that SB-EXP3 is formulated in terms of minimizing losses rather than maximizing gains, so similar to~\citep{Zhang2023stochastic}, we set $
		h_{i,t} = 1 - \ell(\theta_t, z_{i,t})
	$
	to be the (hidden) stochastic losses of arms $i$ for SB-EXP3. 
A fundamental connection between the two-player zero-sum game approach in GDRO and sleeping bandits is shown in the following lemma, which states that the regret of the max-player $R_{\gA_q}$ is bounded by the per-action regret with $\hat{S}_{\theta_t}$ being the set of active groups at round $t$.
\begin{lemma}
   With probability at least $1-\delta/2$, the regret of the max-player is bounded by 
    \begin{align*}
        R_{\gA_q} &\leq \max_{i \in [K]}\sum_{t=1}^T \I{i \in \hat{S}_{\theta_t}}\left(R_{i}(\theta_t) - \phi(\theta_t,q_t)\right).
    \end{align*}
    \label{lemma:RAqbyPerActionRegret}
\end{lemma}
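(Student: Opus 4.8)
The plan is to deduce this bound from two elementary facts together with Lemma~\ref{lemma:correctness}. The first fact is that a linear function on the simplex is maximized at a vertex, which lets us rewrite the benchmark term; the second is that the distribution $q_t$ returned by \texttt{MaxP} is supported on the dominant set $\hat{S}_{\theta_t}$, which will force every ``sleeping'' round (a round $t$ with $i \notin \hat{S}_{\theta_t}$) to contribute a nonpositive amount to the regret with respect to arm $i$. First I would note that, since $\phi(\theta_t, q) = \sum_{i} q_i R_i(\theta_t)$ is linear in $q$, we have $\max_{q \in \Delta_K}\sum_{t=1}^T \phi(\theta_t, q) = \max_{i \in [K]}\sum_{t=1}^T R_i(\theta_t)$, and hence
\begin{align*}
    R_{\gA_q} = \max_{i \in [K]}\sum_{t=1}^T\bigl(R_i(\theta_t) - \phi(\theta_t, q_t)\bigr).
\end{align*}
Let $i^*$ be an index attaining this maximum.

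Next I would show that on the probability-$(1-\delta/2)$ event of Lemma~\ref{lemma:correctness} --- on which $\hat{S}_{\theta_t}$ is a $0.4\lambda$-dominant set at $\theta_t$ for every $t \in [T]$ --- each round $t$ with $i^* \notin \hat{S}_{\theta_t}$ satisfies $R_{i^*}(\theta_t) - \phi(\theta_t, q_t) \leq -0.4\lambda < 0$. Indeed, Definition~\ref{def:dominated} applied with $j = i^* \notin \hat{S}_{\theta_t}$ gives $R_{i^*}(\theta_t) \leq \min_{k \in \hat{S}_{\theta_t}} R_k(\theta_t) - 0.4\lambda$. On the other hand, \texttt{MaxP} returns $q_t$ with $q_{k,t} = 0$ for $k \notin \hat{S}_{\theta_t}$ and $\sum_k q_{k,t} = 1$, so $\phi(\theta_t, q_t) = \sum_{k \in \hat{S}_{\theta_t}} q_{k,t} R_k(\theta_t)$ is a convex combination of $\{R_k(\theta_t) : k \in \hat{S}_{\theta_t}\}$ and therefore $\phi(\theta_t, q_t) \geq \min_{k \in \hat{S}_{\theta_t}} R_k(\theta_t)$. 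Subtracting the two displays yields the claimed pointwise bound.

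Finally I would split the sum defining $R_{\gA_q}$ over the rounds with $i^* \in \hat{S}_{\theta_t}$ and those with $i^* \notin \hat{S}_{\theta_t}$; by the previous paragraph the second group of terms is $\leq 0$, so
\begin{align*}
    R_{\gA_q} \leq \sum_{t=1}^T \I{i^* \in \hat{S}_{\theta_t}}\bigl(R_{i^*}(\theta_t) - \phi(\theta_t, q_t)\bigr) \leq \max_{i \in [K]}\sum_{t=1}^T \I{i \in \hat{S}_{\theta_t}}\bigl(R_i(\theta_t) - \phi(\theta_t, q_t)\bigr),
\end{align*}
which is exactly the asserted inequality, valid on the stated event. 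I do not anticipate a real obstacle: the only substantive input is Lemma~\ref{lemma:correctness}, which is what bridges the gap between the empirically computed set $\hat{S}_{\theta_t}$ (built from the samples in $V$ via \texttt{DominantSet}) and a genuine $0.4\lambda$-dominant set in terms of the true risks $R_i$; the rest is the vertex-maximization identity and the convex-combination inequality. The one point requiring care is bookkeeping of the failure probability --- the bound holds only on the good event of Lemma~\ref{lemma:correctness}, and this $\delta/2$ must be tracked through the downstream sample-complexity analysis.
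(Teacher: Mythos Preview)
Your proposal is correct and follows essentially the same approach as the paper: reduce the simplex maximum to a vertex via linearity, then use Lemma~\ref{lemma:correctness} (the source of the $1-\delta/2$ event) together with the fact that $q_t$ is supported on $\hat{S}_{\theta_t}$ to show that rounds with $i \notin \hat{S}_{\theta_t}$ contribute nonpositively, and drop them. The paper's write-up differs only cosmetically---it proves the per-round inequality for every $i$ rather than fixing a maximizer $i^*$, and it expands $\phi(\theta_t,q_t)=\sum_{k\in\hat S_{\theta_t}} q_{k,t}R_{k,t}$ directly instead of invoking the convex-combination lower bound---but the logic is identical.
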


Theorem~\ref{theorem:SBEXP3Regret} and Lemma~\ref{lemma:RAqbyPerActionRegret} imply the following sample complexity bound for~\texttt{SB-GDRO}.
\begin{theorem}
    For any $\eps > 0, \delta \in (0,1)$, with probability $1-\delta$, Algorithm~\ref{algo:SB-GDRO-Lambda} has sample complexity
    \begin{align}
            O\left(\frac{Kn\ln(GDK/\delta)}{\lambda^2} + \frac{(D^2G^2 + \beta_\lambda)\ln(K/\delta)}{\eps^2}\right).
            \label{eq:boundKnownLambda}
    \end{align}
    \label{thm:samplecomplexityknownlambda}
\end{theorem}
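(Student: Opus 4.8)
The plan is to split the sample budget into the $mK$ samples drawn before the game (the batch $V_i$ for each group) plus the $T$ samples drawn during the game, and then to choose $T$ just large enough to push $\mathrm{err}(\bar\theta)$ below $\eps$. By Lemma~\ref{lemma:correctness} we have $m=O\!\left(\frac{n\ln(GDK/\delta)}{\lambda^2}\right)$, so the pre-game batch already produces the first term $O\!\left(\frac{Kn\ln(GDK/\delta)}{\lambda^2}\right)$ of \eqref{eq:boundKnownLambda}; the rest amounts to showing that $T=\Theta\!\left(\frac{(D^2G^2+\beta_\lambda)\ln(K/\delta)}{\eps^2}\right)$ rounds suffice. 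I would start from the standard saddle-point reduction $\mathrm{err}(\bar\theta)\le\mathrm{err}(\bar\theta,\bar q)\le\frac1T\bigl(R_{\gA_\theta}+R_{\gA_q}\bigr)$ (as in \cite{Soma2022}), so it is enough to bound both regrets by $O\!\left(\sqrt{(D^2G^2+\beta_\lambda)\,T\ln(K/\delta)}\right)$. For the min-player I would invoke the stated high-probability bound $R_{\gA_\theta}=O\bigl(DG\sqrt{T\ln(1/\delta)}\bigr)$ for the time-varying-step stochastic mirror descent in \eqref{eq:minplayer}: convexity of $\phi(\cdot,q_t)$ passes from the mirror-descent regret on $\langle\tilde g_t,\theta_t-\theta\rangle$ to a bound on $\phi(\theta_t,q_t)-\phi(\theta,q_t)$, and an Azuma bound handles the gradient-noise martingale $\sum_t\langle\nabla_\theta\phi(\theta_t,q_t)-\tilde g_t,\theta_t-\theta\rangle$.

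The substantive part is the max-player. Lemma~\ref{lemma:RAqbyPerActionRegret} bounds $R_{\gA_q}$ by $\max_{i\in[K]}\sum_{t=1}^T\I{i\in\hat S_{\theta_t}}\bigl(R_i(\theta_t)-\phi(\theta_t,q_t)\bigr)$, a per-action regret written in terms of the \emph{true} risks. I would connect this to the per-action regret actually controlled by SB-EXP3, which is run with losses $h_{i,t}=1-\ell(\theta_t,z_{i,t})$ so that $h_{i_t,t}-h_{i,t}=\ell(\theta_t,z_{i,t})-\ell(\theta_t,z_{i_t,t})$, via the pointwise identity
\begin{align*}
R_i(\theta_t)-\phi(\theta_t,q_t)=\bigl(R_i(\theta_t)-\ell(\theta_t,z_{i,t})\bigr)+\bigl(h_{i_t,t}-h_{i,t}\bigr)+\bigl(\ell(\theta_t,z_{i_t,t})-\phi(\theta_t,q_t)\bigr),
\end{align*}
using the standard coupling in which round $t$ draws one fresh i.i.d.\ sample $z_{i,t}\sim\gP_i$ per group and reveals only $z_{i_t,t}$ with $i_t\sim q_t$. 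Summing against $\I{i\in\hat S_{\theta_t}}$, the middle terms telescope to exactly $\mathrm{Regret}(i)$ for SB-EXP3 with active sets $\sA_t=\hat S_{\theta_t}$. Since $V$ is drawn before the game, $\theta_t$ — hence $\hat S_{\theta_t}$ and $q_t$ — is measurable with respect to the history before round $t$, so the first and third sums are martingale-difference sequences with increments in $[-1,1]$, and Azuma–Hoeffding with a union bound over the $K$ arms bounds each by $O\bigl(\sqrt{T\ln(K/\delta)}\bigr)$.

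To finish the max-player, I would condition on the good event of Lemma~\ref{lemma:correctness}, on which $\sum_{t=1}^T|\hat S_{\theta_t}|\le T\beta_\lambda$; then Theorem~\ref{theorem:SBEXP3Regret} (with its prescribed adaptive step sizes, which depend only on $\sum_{s\le t}|\hat S_{\theta_s}|$ and are therefore computable) gives $\max_{i}\mathrm{Regret}(i)=O\bigl(\sqrt{\beta_\lambda T\ln(K/\delta)}\bigr)$, hence $R_{\gA_q}=O\bigl(\sqrt{\beta_\lambda T\ln(K/\delta)}\bigr)$. Combining with the min-player bound and using $a+b\le\sqrt2\sqrt{a^2+b^2}$ and $\beta_\lambda\ge1$, $R_{\gA_\theta}+R_{\gA_q}=O\bigl(\sqrt{(D^2G^2+\beta_\lambda)\,T\ln(K/\delta)}\bigr)$, so $\mathrm{err}(\bar\theta)=O\!\left(\sqrt{(D^2G^2+\beta_\lambda)\ln(K/\delta)/T}\right)$. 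Choosing $T$ to make the right-hand side at most $\eps$ gives $T=\Theta\!\left((D^2G^2+\beta_\lambda)\ln(K/\delta)/\eps^2\right)$, which is the second term of \eqref{eq:boundKnownLambda}; adding $mK$ gives the claimed total.

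The last step is the failure-probability bookkeeping: a union bound over the $O(1)$ high-probability events — Lemmas~\ref{lemma:correctness} and \ref{lemma:RAqbyPerActionRegret}, the min-player regret bound, Theorem~\ref{theorem:SBEXP3Regret}, and the two Azuma steps — with $\delta$ rescaled by a constant yields total success probability $1-\delta$, the logarithmic factors remaining $O(\ln(GDK/\delta))$ and $O(\ln(K/\delta))$ throughout. I expect the main obstacle to be the max-player argument of the second paragraph: setting up the filtration so that $\hat S_{\theta_t}$ and $q_t$ are genuinely predictable (so the coupling-based quantities $R_i(\theta_t)-\ell(\theta_t,z_{i,t})$ and $\ell(\theta_t,z_{i_t,t})-\phi(\theta_t,q_t)$ are martingale differences), matching Lemma~\ref{lemma:RAqbyPerActionRegret}'s per-action regret to SB-EXP3's under the correct gain/loss convention, and controlling the martingale remainders with only a $\ln(K/\delta)$ rather than a $\ln(KT/\delta)$ penalty.
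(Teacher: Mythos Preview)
Your proposal is correct and follows essentially the same route as the paper. The three-term decomposition you write for $R_i(\theta_t)-\phi(\theta_t,q_t)$ is exactly the paper's decomposition in its Lemma bounding $R_{\gA_q}$ (their terms $A$, $B$, $C$, up to reordering and the $h\leftrightarrow\ell$ sign flip), the two martingale pieces are handled by Azuma--Hoeffding with a union bound over the $K$ arms just as you describe, and the SB-EXP3 piece is bounded via Theorem~\ref{theorem:SBEXP3Regret} after conditioning on the event of Lemma~\ref{lemma:correctness} so that $\sum_t|\hat S_{\theta_t}|\le T\beta_\lambda$. Your worry about incurring $\ln(KT/\delta)$ is unfounded for exactly the reason you give: Azuma is applied once per arm to a length-$T$ martingale, so the penalty is $\ln(K/\delta)$.
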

 In Theorem~\ref{thm:samplecomplexityknownlambda}, because $\lambda$ is a fixed problem-dependent quantity while the required optimality gap $\epsilon$ can be arbitrarily small, the dependency on $K$ in Theorem~\ref{thm:samplecomplexityknownlambda} is dominated by $O\left(\frac{\beta_\lambda\ln(K/\delta)}{\epsilon^2}\right)$. 
The following lower bound shows that the upper bound in Theorem~\ref{thm:samplecomplexityknownlambda} is essentially near-optimal. 
\begin{theorem}
    For any algorithm $\gA$ and any $\lambda \geq 0.5,\beta \geq 3$, there exists a $(\lambda, \beta)$-sparse GDRO instance with $\beta_\lambda = \beta$ so that the sample complexity of $\gA$ is at least $\Omega\left(\frac{G^2D^2 + \beta}{\eps^2}\right)$.
    \label{thm:lowerbound}
\end{theorem}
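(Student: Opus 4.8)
Since $(G^2D^2+\beta)/\eps^2 \asymp \max\{G^2D^2/\eps^2,\ \beta/\eps^2\}$, I would build, for each algorithm $\gA$, a \emph{single} $(\lambda,\beta)$-sparse instance with $\beta_\lambda=\beta$ that is hard for two complementary reasons, and then argue that neither $o(G^2D^2/\eps^2)$ nor $o(\beta/\eps^2)$ samples can succeed on it. Take $\beta$ ``informative'' groups and (optionally) $K-\beta$ ``dummy'' groups. Each informative group $i\in[\beta]$ carries a hidden sign $\sigma_i\in\{\pm1\}$ acting on a private coordinate block, and group $1$ additionally embeds the standard stochastic-convex-optimization hard instance with a hidden sign $\sigma_0$; all informative risks are translated so that $R_i(\theta)\in(\lambda,1]$ for all $\theta$, the losses are kept $G$-Lipschitz and $[0,1]$-valued by using absolute-value (``$\vee$''-shaped) losses, the signal strengths are set at scale $\eps$ (resp.\ $\eps/(GD)$ for $\sigma_0$), and $\Theta$ is chosen so the optimum is a vertex --- so that an $\eps$-optimal output must commit to the correct $\sigma_0$ and to every $\sigma_i$ (there is no ``lazy center'' hypothesis that is $\eps$-optimal without learning). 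Dummy groups, if present, are given a fixed risk $\mu\in[0,\ \inf_{i,\theta}R_i(\theta)-\lambda)$, which is a nonempty interval precisely because $\lambda\ge 1/2$ leaves room for the informative band to sit above $\mu+\lambda$. Morally this is the minimax construction of \cite{Soma2022} with $\beta$ groups, translated into $(\lambda,1]$ and padded with dummies.

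\textbf{Sparsity verification.} At every $\theta$: (i) $S=[\beta]$ is $\lambda$-dominant, since any omitted group is a dummy with $R_j(\theta)=\mu<\min_{i\in[\beta]}R_i(\theta)-\lambda$, so \eqref{eq:sparsitycondition} holds; (ii) no set omitting an informative group $j$ is $\lambda$-dominant, because $R_j(\theta)+\lambda>\lambda+\lambda\ge1\ge\min_{i\in S}R_i(\theta)$ (here $\lambda\ge 1/2$ is used), so \eqref{eq:sparsitycondition} fails; (iii) no set with $[\beta]\subsetneq S\subsetneq[K]$ is $\lambda$-dominant, since it contains a dummy, giving $\min_{i\in S}R_i(\theta)=\mu<\mu+\lambda=R_j(\theta)+\lambda$ for another omitted dummy $j$. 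Hence the $\lambda$-dominant sets at every $\theta$ are exactly $\{[\beta],[K]\}$, so $\beta_{\lambda,\theta}=\beta$ uniformly in $\theta$: the instance is $(\lambda,\beta)$-sparse with $\beta_\lambda=\beta$. (If one prefers $K=\beta$, the dummies are unnecessary and this is immediate.)

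\textbf{The lower bound.} Fix $\gA$ with sample budget $N$. If $N\le c_1 G^2D^2/\eps^2$: only group-$1$ samples inform $\sigma_0$, and each such sample's law depends on $\sigma_0$ through total variation $O(\eps/(GD))$, so by a two-point (Le Cam) argument --- marginalizing over $\sigma_{1:\beta}$ --- there is a sign configuration under which $\gA$ cannot identify $\sigma_0$, hence is $>\eps$-suboptimal with probability $\ge 1/3$. If $N\le c_2\beta/\eps^2$: coordinate $i$ is informed only by the $N_i$ samples drawn from group $i$, each carrying total variation $O(\eps)$ about $\sigma_i$, so Assouad's lemma over $\sigma_{1:\beta}$ (with $\sum_iN_i=N$) produces a configuration under which $\gA$ gets some $\sigma_i$ wrong --- hence is $>\eps$-suboptimal --- with probability $\ge1/3$. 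Therefore any $N\le c\max\{G^2D^2,\beta\}/\eps^2$ fails on some instance of the family, which yields the claimed $\Omega\!\left((G^2D^2+\beta)/\eps^2\right)$ sample-complexity lower bound for every $\gA$.

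\textbf{Main obstacle.} The delicate point is building the informative block so that three requirements hold at once: all risks confined to the width-$(1-\lambda)$ band $(\lambda,1]$ (needed for $\beta_\lambda=\beta$), the loss $G$-Lipschitz and $[0,1]$-valued, and the optimum a vertex with no lazy $\eps$-optimal output; the last two are in tension with placing an independent $\eps$-scale signal on each of $\beta$ orthogonal coordinates in the regime $GD<\sqrt\beta$. The clean resolution is to not rebuild this from scratch but to import \cite{Soma2022}'s minimax hard instance for $K=\beta$ groups --- which already realizes $\Omega((G^2D^2+\beta)/\eps^2)$ and whose group-risks agree at the minimax optimum, hence cluster within an $O(\eps)$ band --- and merely translate it into $(\lambda,1]$ and (for $K>\beta$) append the dummy groups; the sparsity and lower-bound arguments above then apply essentially verbatim.
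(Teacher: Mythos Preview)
Your final proposal---import the minimax hard instance of \cite{Soma2022} on $\beta$ groups, shift the informative risks into a high band, and append $K-\beta$ dummy groups with low constant risk---is exactly the paper's construction (see Figure~\ref{fig:lowerbound} and Appendix~\ref{appendix:proofoflowerbound}). The only cosmetic difference is in verifying $\beta_\lambda=\beta$: the paper argues that the pairwise gaps among the $\beta$ informative groups are at most $|\Delta(1-2\theta)|<\lambda$, so no informative group can be dropped, whereas you confine the informative risks to $(\lambda,1]$ and use $2\lambda\ge 1$; both arguments are correct and arrive at the same conclusion.
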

\section[$\lambda^*$-Adaptive Sample Complexity]{$\bm{\lambda^*}$-Adaptive Sample Complexity}
\label{sec:fullsparsity}
Theorem~\ref{thm:samplecomplexityknownlambda} suggests that a desirable $\lambda$ must be significantly larger than $\epsilon$ (so that $\frac{K}{\lambda^2} \ll \frac{K}{\epsilon^2}$) but also small enough that $\beta_\lambda \ll K$. In this section, we define the notion of an optimal $\lambda^*$ and present a sample-efficient approach for adapting to this unknown $\lambda^*$. First, we write the sample complexity in Theorem~\ref{thm:samplecomplexityknownlambda} in the form
\begin{align}
\ln(K/\delta) \left( \frac{C}{\lambda^2} + \frac{\beta_\lambda}{\epsilon^2} \right) 
+\frac{D^2 G^2 \ln(K/\delta)}{\epsilon^2} , \label{eqn:gdro-sample-complexity}
\end{align}
where $C = \frac{K n \ln \left( \frac{G D K}{\delta} \right)}{\ln(K/\delta)}$. 
By definition, $\beta_\lambda' \leq \beta_\lambda$ for any $\lambda' \leq \lambda$, and thus $\lambda \mapsto \beta_\lambda$ is non-decreasing. 
Let $\lambda^*$ be the $\lambda$ that minimizes \eqref{eqn:gdro-sample-complexity}. Our goal is to develop a sample-efficient method to find $\lambda^*$.

To describe our approach for finding $\lambda^*$, it will be useful to frame the idea of an optimal $\lambda$ more generically. Consider any $C > K \geq 1$ (not necessarily taking the value above), $\epsilon \in (0, 1)$, and $\delta$ in $(0, 1)$. Let $g \colon [0, 1] \rightarrow [1, K]$ be a nondecreasing function which is unknown. Now, let $\lambda^*_{C,g}$ be the minimizer, among all $\lambda \in [0, 1]$, of
\begin{align}
\mathrm{Cost}^{(\textsc{gdro})}_{C,g}(\lambda) 
:= \frac{C}{\lambda^2} + \frac{g(\lambda)}{\epsilon^2} .
\label{eq:OPTCepsg}
\end{align}
Clearly, if $C = \frac{K n \ln \left( \frac{G D K}{\delta} \right)}{\ln(K/\delta)}$ and $g(\lambda) = \beta_\lambda$, then $\lambda^*_{C,g} = \lambda^*$. 
In general, $g$ (e.g., $\lambda \mapsto \beta_\lambda$) is unknown. However, $g$ can be evaluated at any $\lambda \in (0, 1]$ at a cost of $\mathrm{Cost}^{(\mathrm{Query})}_{C,g}(\lambda) = O(C \ln(K/\delta) / \lambda^2)$ samples. The problem $\mathrm{OPT}(C, g)$ is to find $\lambda^*_{C,g}$ using as few samples as possible.

Now, at a high level (our actual approach in Section~\ref{sec:lambdastaradaptivebound} slightly differs), by solving $\mathrm{OPT}(C, g)$ for $C$ as above and $g(\lambda) = \beta_\lambda$, we obtain a fully adaptive algorithm for GDRO that adapts to $\lambda^*$ and has total (including the cost of finding $\lambda^*$) sample complexity whose rate (in big-$O$) is equal to the product of $\ln(1/\epsilon)$ and \eqref{eqn:gdro-sample-complexity} with $\lambda$ replaced by $\lambda^*$; here, $\ln(1/\epsilon)$ is the price paid for adaptivity. We present this algorithm in Section~\ref{sec:lambdastaradaptivebound}. 
However, this algorithm is computationally intractable for large $n$, and so Section~\ref{sec:SemiAdaptiveApproach} introduces a computationally efficient semi-adaptive algorithm with total sample complexity that, in high-precision settings where $\eps \ll \lambda^*$, swaps the $\beta_{\lambda^*}$ in the fully adaptive algorithm's sample complexity with $\max\{ \ln K, \beta_{\lambda^*} \}$; moreover it entirely avoids the dimension-dependent term $\frac{C}{(\lambda^*)^2}$, making it dimension-free.

\subsection[$\lambda^*$-Adaptive Sample Complexity for GDRO]{$\bm{\lambda^*}$-Adaptive Sample Complexity for GDRO}
\label{sec:lambdastaradaptivebound}
We present an algorithm called \texttt{SB-GDRO-A}, shown in full in Algorithm~\ref{algo:SB-GDROunknownlambda} in Appendix~\ref{appendix:proofsectionusinglambdastar}. The idea of this algorithm is to \emph{(i)} construct a non-decreasing function $\hat{g}$ so that
$\mathrm{Cost}^{(\textsc{gdro})}_{C,{\hat{g}}}(\lambda_{C, \hat{g}}^*)$ 
is sufficiently close to 
$\mathrm{Cost}^{(\textsc{gdro})}_{C,\beta(\cdot)}(\lambda_{C,\beta}^*)$ with high probability; \emph{(ii)} solve $\mathrm{OPT}(C, \hat{g})$ to get $\lambda_{C,\hat{g}}^*$; \emph{(iii)} input $\lambda_{C,\hat{g}}^*$ into \texttt{SB-GDRO}. 
Our approach uses at most $O(\mathrm{Cost}^{(\textsc{gdro})}_{{C, \beta(\cdot)}}(\lambda_{C, \beta}^*) \ln(\frac{1}{\epsilon}))$ samples for steps \emph{(i)} and \emph{(ii)}, which, together with Theorem~\ref{thm:samplecomplexityknownlambda}, gives us the following theorem (proved in Appendix~\ref{appendix:proofsectionusinglambdastar}).

\begin{theorem}
    For any $\eps > 0, \delta \in (0,1)$, with probability at least $1-\delta$,~\texttt{SB-GDRO-A}~(Algorithm~\ref{algo:SB-GDROunknownlambda}) with $\eta_{w,t}, \eta_{q,t}$ and $\gamma_t$ defined in Theorem~\ref{thm:samplecomplexityknownlambda} has sample complexity
   \begin{align*}
       \scalemath{0.975}{ 
           O\left( \left(\frac{Kn\ln(\frac{GDK}{\delta})}{(\lambda^*)^2} + \frac{(D^2G^2 + \beta_{\lambda^*})\ln(\frac{K}{\delta})}{\eps^2} \right)\ln(\frac{1}{\eps}) \right).
       }
  \end{align*}
  \label{thm:boundfullunknown}
\end{theorem}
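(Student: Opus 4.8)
The plan is to charge the three phases of \texttt{SB-GDRO-A} separately. Abbreviate $\Psi(\lambda):=\mathrm{Cost}^{(\textsc{gdro})}_{C,\beta}(\lambda)=\frac{C}{\lambda^{2}}+\frac{\beta_{\lambda}}{\eps^{2}}$ and $\widehat\Psi(\lambda):=\frac{C}{\lambda^{2}}+\frac{\hat g(\lambda)}{\eps^{2}}$, and recall that with $C=\frac{Kn\ln(GDK/\delta)}{\ln(K/\delta)}$ one has $\Psi(\lambda)\ln(K/\delta)=\frac{Kn\ln(GDK/\delta)}{\lambda^{2}}+\frac{\beta_{\lambda}\ln(K/\delta)}{\eps^{2}}$, so the claimed bound is $O\big((\Psi(\lambda^{*})\ln(K/\delta)+\tfrac{D^{2}G^{2}\ln(K/\delta)}{\eps^{2}})\ln(1/\eps)\big)$, where $\lambda^{*}=\lambda^{*}_{C,\beta}$. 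I would establish: \emph{(a)} with probability $1-\delta/3$ the monotone estimator $\hat g$ built in step (i) obeys the two-sided bound $\beta_{\lambda}\le\hat g(\lambda)\le\beta_{2\lambda}$ at every grid value it touches, each such query costing $O(C\ln(K/\delta)\ln(1/\eps)/\lambda^{2})$ samples; \emph{(b)} then $\hat\lambda:=\lambda^{*}_{C,\hat g}$ satisfies $\Psi(\hat\lambda)=O(\Psi(\lambda^{*}))$, and steps (i)--(ii) spend $O(\Psi(\lambda^{*})\ln(K/\delta)\ln(1/\eps))$ samples in total; \emph{(c)} step (iii) is Theorem~\ref{thm:samplecomplexityknownlambda} invoked with input $\hat\lambda$, using $O\big(\frac{Kn\ln(GDK/\delta)}{\hat\lambda^{2}}+\frac{(D^{2}G^{2}+\beta_{\hat\lambda})\ln(K/\delta)}{\eps^{2}}\big)$ samples with probability $1-\delta/3$ to return an $\eps$-optimal $\bar\theta$. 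Granting these, the lower half of (a), $\beta_{\hat\lambda}\le\hat g(\hat\lambda)$, turns the step-(iii) count into $\le\ln(K/\delta)\widehat\Psi(\hat\lambda)+\frac{D^{2}G^{2}\ln(K/\delta)}{\eps^{2}}$, which is $O(\Psi(\lambda^{*})\ln(K/\delta)+\frac{D^{2}G^{2}\ln(K/\delta)}{\eps^{2}})$ by (b); adding the step-(i,ii) budget, absorbing the additive $D^{2}G^{2}$ term using $\ln(1/\eps)\ge1$, and a union bound over the three failure events give the theorem.

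For (a): at each visited grid value $\lambda_{j}=2^{-j}$, step (i) draws $\Theta\big(n\lambda_{j}^{-2}\ln(GDK\lambda_{j}^{-1}\delta^{-1})\big)$ samples from each group, covers $\Theta$ at Euclidean radius $\Theta(\lambda_{j}/G)$, estimates all group risks on all cover points, runs the \texttt{DominantSet} routine of Algorithm~\ref{algo:computedominantset} at each cover point with a \emph{conservative} threshold of order $\lambda_{j}$, sets $\hat g(\lambda_{j})$ to the largest dominant-set size seen, and monotonizes by running maxima in $\lambda$. A Hoeffding/union-bound argument over cover points and over the $O(\ln(1/\eps))$ visited grid values, together with $G$-Lipschitz transfer between a cover point and a nearby $\theta$ --- precisely the reasoning of Lemma~\ref{lemma:correctness} --- shows that w.h.p.\ the set returned at a cover point is $\lambda_{j}$-dominant at some nearby $\theta$ yet has size $\le\beta_{2\lambda_{j},\theta}$; maximizing over the cover gives $\beta_{\lambda_{j}}\le\hat g(\lambda_{j})\le\beta_{2\lambda_{j}}$. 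The per-query count matches $\mathrm{Cost}^{(\mathrm{Query})}_{C,g}(\lambda_{j})=O(C\ln(K/\delta)/\lambda_{j}^{2})$ up to a $\ln(1/\lambda_{j})$ cover-size factor, and since only $\lambda_{j}=\Omega(\lambda^{*})$ are ever queried and $\lambda^{*}=\Omega(\eps)$ (immediate from $\Psi(\lambda^{*})\le\Psi(\eps)\le 2C/\eps^{2}$, using $C>K\ge\beta_{\lambda}$, and $\Psi(\lambda^{*})\ge C/(\lambda^{*})^{2}$), that factor is $O(\ln(1/\eps))$.

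For (b), the crux: scan the grid downward from $\lambda_{0}=1$, maintain $m_{j}:=\min_{j'<j}\widehat\Psi(\lambda_{j'})$, halt at the first index $\bar\jmath$ with $C/\lambda_{\bar\jmath}^{2}\ge M\,m_{\bar\jmath}$ for a large absolute constant $M$ (say $M=128$), and output $\hat\lambda=\arg\min_{j\le\bar\jmath}\widehat\Psi(\lambda_{j})$. The engine of the analysis is that \emph{over}-estimation $\hat g\ge\beta(\cdot)$ forces $\widehat\Psi\ge\Psi\ge\Psi(\lambda^{*})$ pointwise, hence $m_{j}\ge\Psi(\lambda^{*})$ for all $j\ge1$. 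From this: \emph{(not too early)} for every $\lambda_{j}\ge\lambda^{*}/8$ we have $C/\lambda_{j}^{2}\le 64C/(\lambda^{*})^{2}\le 64\Psi(\lambda^{*})\le 64m_{j}<Mm_{j}$, so the test never triggers while $\lambda_{j}\ge\lambda^{*}/8$; in particular the scan queries both $\lambda_{j_{1}}\in[\lambda^{*}/4,\lambda^{*}/2)$ and $\lambda_{j_{1}+1}$, so $j_{1}\le\bar\jmath-2$. \emph{(Not too late)} the test failed at $\bar\jmath-1$, so $C/\lambda_{\bar\jmath}^{2}=4C/\lambda_{\bar\jmath-1}^{2}<4M\,m_{\bar\jmath-1}\le 4M\,\widehat\Psi(\lambda_{j_{1}})$, and since $2\lambda_{j_{1}}<\lambda^{*}$ the upper half of (a) gives $\widehat\Psi(\lambda_{j_{1}})=\frac{C}{\lambda_{j_{1}}^{2}}+\frac{\hat g(\lambda_{j_{1}})}{\eps^{2}}\le\frac{16C}{(\lambda^{*})^{2}}+\frac{\beta_{2\lambda_{j_{1}}}}{\eps^{2}}\le\frac{16C}{(\lambda^{*})^{2}}+\frac{\beta_{\lambda^{*}}}{\eps^{2}}\le 16\,\Psi(\lambda^{*})$ --- here it is essential that $\hat g$ is only ever \emph{used} at a grid point a full factor $2$ below $\lambda^{*}$, so its one-grid-step looseness still lands at or below $\beta_{\lambda^{*}}$, and a jump of $\beta(\cdot)$ at $\lambda^{*}$ does no harm. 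Hence $C/\lambda_{\bar\jmath}^{2}=O(\Psi(\lambda^{*}))$, so the geometric sum of per-query costs over $j\le\bar\jmath$ telescopes to $O(C\ln(K/\delta)\ln(1/\eps)/\lambda_{\bar\jmath}^{2})=O(\Psi(\lambda^{*})\ln(K/\delta)\ln(1/\eps))$, and $\widehat\Psi(\hat\lambda)\le\widehat\Psi(\lambda_{j_{1}})=O(\Psi(\lambda^{*}))$, which with $\beta_{\hat\lambda}\le\hat g(\hat\lambda)$ yields $\Psi(\hat\lambda)=O(\Psi(\lambda^{*}))$.

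The hardest part is (b): making the two-sided control of the stopping index go through rests on the over-estimation property of $\hat g$ (it both floors $m_{j}$ by $\Psi(\lambda^{*})$ and makes $\beta_{\hat\lambda}$ self-certifying once $\widehat\Psi(\hat\lambda)$ is small), and one must fix the \texttt{DominantSet} threshold, the cover radius, the grid ratio, and the stopping constant $M$ compatibly so that the one-grid-step sandwich in (a) holds and the displayed inequalities are strict. A minor but necessary check is that $\eps$ re-enters the search cost only through the $\ln(1/\lambda_{j})\le\ln(1/\eps)$ cover-size factor, so that steps (i)--(ii) incur exactly one $\ln(1/\eps)$ of overhead rather than a power of it.
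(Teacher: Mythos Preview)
Your high-level decomposition into three phases matches the paper, but there is a genuine gap: the sandwich you claim in (a), namely $\beta_{\lambda}\le\hat g(\lambda)\le\beta_{2\lambda}$, goes in the \emph{wrong direction} for the algorithm as stated. Algorithm~\ref{algo:glambda} calls \texttt{DominantSet} with threshold $\tau=0.7\lambda$ and cover radius $0.1\lambda/G$; under the $0.15\lambda$ empirical accuracy this yields (Lemma~\ref{lemma:mlambdastar}) the \emph{under-estimation} sandwich $\beta_{0.2\lambda}\le\hat g(\lambda)\le\beta_{\lambda}$, not your over-estimation. Concretely, the set returned at a cover point $\hat\theta$ is only $0.4\lambda$-dominant at $\hat\theta$, hence only $0.2\lambda$-dominant at nearby $\theta$ via Lipschitz transfer; this gives $|\hat S_{\hat\theta}|\ge\beta_{0.2\lambda,\theta}$, not $|\hat S_{\hat\theta}|\ge\beta_{\lambda,\theta}$. (A small example: risks $0.9,0.8,0.7,0.1,\dots$ at $\theta$ with $\lambda=0.5$ have $\beta_{\lambda,\theta}=3$ but $\beta_{0.2\lambda,\theta}=1$.) To get your direction you would need $\tau\ge 1.5\lambda$, which is not what the algorithm uses.

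This reversal breaks both pillars of your part (b). First, the ``engine'' $\widehat\Psi\ge\Psi\ge\Psi(\lambda^{*})$ fails: with under-estimation one has $\widehat\Psi(\lambda)\le\Psi(\lambda)$, so $m_{j}$ need not be floored by $\Psi(\lambda^{*})$, and your ``not too early'' test $C/\lambda_{j}^{2}<Mm_{j}$ may trigger at a $\lambda_{j}$ with $\widehat\Psi(\lambda_{j})\ll\Psi(\lambda^{*})$. Second, the self-certifying step $\beta_{\hat\lambda}\le\hat g(\hat\lambda)$ that you use to absorb the step-(iii) cost is exactly backwards: the actual guarantee is $\hat g(\hat\lambda)\le\beta_{\hat\lambda}$, which gives you no control on $\beta_{\hat\lambda}$ at all.

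The paper's proof works around both issues by exploiting under-estimation in the opposite way. For \texttt{SolveOpt}, under-estimation gives $\widehat\Psi(\lambda^{*})\le\Psi(\lambda^{*})$ directly, and the $[L,U]$-interval mechanism (Lemmas~\ref{lemma:rangehaslambdastar}--\ref{lemma:allisgood}) shows the returned $\hat\lambda$ satisfies $\widehat\Psi(\hat\lambda)\le 50\,\widehat\Psi(\lambda^{*}_{C,\hat g})\le 50\,\widehat\Psi(\lambda^{*})\le 50\,\Psi(\lambda^{*})$ --- no pointwise floor on $\widehat\Psi$ is needed. For step (iii), the paper does \emph{not} invoke Theorem~\ref{thm:samplecomplexityknownlambda} as a black box: Algorithm~\ref{algo:SB-GDROunknownlambda} reuses the precomputed $0.4\hat\lambda$-dominant sets from the cover built during \texttt{EstG}, so the active-set sizes are bounded by $\hat\beta_{\hat\lambda}=\hat g(\hat\lambda)$ itself (not by $\beta_{\hat\lambda}$), and $\hat g(\hat\lambda)/\eps^{2}\le\widehat\Psi(\hat\lambda)\le 50\,\Psi(\lambda^{*})$ closes the loop. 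Your proposal would go through essentially unchanged if the threshold in \texttt{EstG} were raised to about $1.5\lambda$, but as written it does not analyze the stated algorithm.
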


Compared to Theorem~\ref{thm:samplecomplexityknownlambda}, the sample complexity bound in Theorem~\ref{thm:boundfullunknown} contains an additional multiplicative factor of $O(\ln(1/\epsilon))$, which we consider a small price for not knowing $\lambda^*$ beforehand. Next, we briefly describe the two main steps
above, with the full details in Appendix~\ref{appendix:proofsfullsparsity}.

\paragraph{First Step: Constructing $\bm{\hat{g}}$}
We first describe a method that, given $\lambda \in [0, 1]$, returns an estimate $\hat{\beta}_\lambda$ for $\beta_\lambda$ using at most $O(\frac{C \ln(K/\delta)}{\lambda^2})$ samples. 
This method constructs a $\frac{0.1 \lambda}{G}$-cover for $\Theta$, uses Algorithm~\ref{algo:computedominantset} to compute a $0.4 \lambda$-dominant set at each element of the cover, and then returns as its estimate $\hat{\beta}_\lambda$ the maximum cardinality among these dominant sets. 
Now, the function $\hat{g}$ is defined by setting $\hat{g}(\lambda)$ equal to 1 for $\lambda \leq \frac{\epsilon}{2}$, setting it to $\hat{\beta}_\lambda$ for $\lambda$ in the geometric sequence $(1, \frac{1}{5}, \frac{1}{5^2}, \ldots)$, and then interpolating at other $\lambda$ to form a non-decreasing step function. In Appendix~\ref{appendix:proofsectionusinglambdastar}, we prove that with high probability, $\beta_{0.2 \lambda} \leq \hat{\beta}_\lambda \leq  \beta_\lambda$ and $\hat{g}$ is non-decreasing, leading to $\lambda_{C,\hat{g}}^*$ being close to $\lambda_{C,\beta}^*$.

\paragraph{Second Step: Solving for $\bm{\lambda_{C, \hat{g}}^*}$}

Our method for solving $\mathrm{OPT}(C,g)$ is called \texttt{SolveOpt}. It outputs $\hat{\lambda}$ such that 
$\mathrm{Cost}^{(\textsc{gdro})}(\hat{\lambda}) 
= O(\mathrm{Cost}^{(\textsc{gdro})}(\lambda^*))$ 
while using 
$O(\mathrm{Cost}^{(\textsc{gdro})}(\lambda^*) \ln(1/\epsilon))$ 
samples; note that we drop the subscripts $C$ and $g$.  
The main idea of \texttt{SolveOpt} is to maintain two variables $U$ and $L$ which specify an interval $[L, U]$ that always contains a good estimate of $\lambda^*$. We iteratively evaluate $g(\lambda)$ for $\lambda \in [L, U]$ and shrink this interval, i.e., $U$ monotonically decreases while $L$ monotonically increases. The shrinking process is based on comparing $\mathrm{Cost}^{(\textsc{gdro})}(\lambda)$ and $\mathrm{Cost}^{(\textsc{gdro})}(U)$: 
if $\mathrm{Cost}^{(\textsc{gdro})}(\lambda) 
< \mathrm{Cost}^{(\textsc{gdro})}(U)$, then $U$ is set to $\lambda$ and $L$ is increased accordingly. The process stops when $\lambda < L$, at which point the algorithm return the last value of $U$ as its estimate of $\lambda^*$. The value of $\lambda$ is taken from a geometric sequence; this ensures that at most $\ln(1/\epsilon)$ values of $g(\lambda)$ are evaluated, leading to the  $\ln(1/\epsilon)$ multiplicative factor in the final bound.

\subsection{A Semi-Adaptive Bound in High-Precision Settings}
\label{sec:SemiAdaptiveApproach}
While~\texttt{SB-GDRO-A} is fully adaptive to $\lambda^*$, it relies on building covers for $\Theta$, which is computationally intensive when $n$ is large.
We now propose a semi-adaptive, computationally efficient algorithm called~\texttt{SB-GDRO-SA} that 
avoids covers. 
The main idea is to merge the $\lambda^*$-estimation process into the two-player zero-sum game: starting with $\lambda = 1$, if the dominant sets $S_{\lambda, \theta_t}$ computed in round $t$ of the game is bigger than a threshold (e.g. $\ln(K)$), then similar to $\texttt{SolveOpt}$, we decrease $\lambda$ exponentially (e.g. $\lambda \leftarrow \lambda/2$).
%
To avoid a too small $\lambda$, we also set a lower threshold $L$ so that $\lambda$ stops decreasing once $\lambda \leq L$. These two thresholds, one for $\abs{S_{\lambda, \theta_t}}$ and one for $\lambda$, determine the trade-off between adaptivity and sample complexity.
In~\texttt{SB-GDRO-SA}, we use $\ln(K)$ and $L = \tilde{O}(\eps\sqrt{Kn})$ as the two thresholds. 
Let $\lambda_{\ln(K)}$ be the largest $\lambda$ such that $\beta_\lambda = \ln(K)$.
In high-precision settings where $\eps \ll \lambda^*$, the following theorem states that Algorithm~\ref{algo:SB-GDRO-SA} is adaptive to $\max(\lambda_{\ln(K)}, \lambda^*)$.
\begin{theorem}
    If $\eps\sqrtfrac{C}{\ln(K)} < \lambda^*$, then with probability at least $1-\delta$, \texttt{SB-GDRO-SA} (Algorithm~\ref{algo:SB-GDRO-SA} in Appendix~\ref{appendix:alternativeApproach}) has sample complexity
    \begin{align*}
        \scalemath{1.0}{
            O\left( \frac{(D^2G^2 + \max(\ln(K), \beta_{\lambda^*}))}{\eps^2} \ln(K/\delta) \ln{\frac{1}{\eps}} \right)
        }
    \end{align*}
    \label{thm:highPrecisionBoundOfSB-GDRO-SA}
\end{theorem}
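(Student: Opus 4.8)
The plan is to graft the monotone $\lambda$-search of \texttt{SolveOpt} onto the two-player game of \texttt{SB-GDRO}, and then to show that, under the hypothesis $\eps\sqrt{C/\ln K}<\lambda^*$, every term in the resulting bound except the leading $\frac{\max(\ln K,\beta_{\lambda^*})}{\eps^2}$ one is absorbed. Recall that the candidate values of $\lambda$ form the geometric grid $1,\tfrac12,\tfrac14,\dots$ truncated at the floor $L$, which is of order $\eps\sqrt{C/\ln K}$ (equivalently $\tilde{O}(\eps\sqrt{Kn})$); the grid therefore has only $O(\ln(1/\eps))$ entries, $L\le 1$, and the hypothesis is exactly what forces $L<\lambda^*$, hence $\beta_L\le\beta_{\lambda^*}$ by monotonicity of $\lambda\mapsto\beta_\lambda$.

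\textbf{Good event and control of the active sets.} I would first fix an event $\gE$ of probability $\ge 1-\delta$ on which: (i) for \emph{every} value $\lambda$ on the grid the uniform-convergence guarantee underlying Lemma~\ref{lemma:correctness} holds with $m_\lambda=\tilde{O}(n/\lambda^2)$ samples per group, so every call to \texttt{DominantSet} issued while the algorithm sits at level $\lambda$ returns a genuine $0.4\lambda$-dominant set at the current (data-dependent) iterate $\theta_t$, of size at most $\beta_\lambda$; (ii) the SB-EXP3 per-action bound of Theorem~\ref{theorem:SBEXP3Regret} holds for the max-player with active sets $\hat{S}_{\theta_t}$; (iii) $R_{\gA_\theta}=O(DG\sqrt{T\ln(1/\delta)})$; and (iv) Lemma~\ref{lemma:RAqbyPerActionRegret} applies, so $R_{\gA_q}\le\max_{i}\sum_{t}\I{i\in\hat{S}_{\theta_t}}\big(R_i(\theta_t)-\phi(\theta_t,q_t)\big)$. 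The only novelty over the proof of Theorem~\ref{thm:samplecomplexityknownlambda} is that (i) must hold at all $O(\ln(1/\eps))$ grid values simultaneously, which costs merely a union bound and a slightly larger constant inside the logarithm in $m_\lambda$; also, (iv) uses only that each $\hat{S}_{\theta_t}$ is a genuine dominant set, supplied by (i), so it transfers verbatim to the time-varying-$\lambda$ setting. On $\gE$, $\lambda$ is non-increasing and is decremented only in a round where the computed $|\hat{S}_{\theta_t}|$ exceeds $\ln K$; since $\lambda$ starts at $1$ and cannot fall below $L=\Theta(\eps\sqrt{Kn})\ge\Theta(\eps)$, there are at most $O(\ln(1/\eps))$ such decrements, with no dependence on $n$. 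Hence every round is either (a) one of the $O(\ln(1/\eps))$ decrement rounds, where $|\hat{S}_{\theta_t}|\le\beta_\lambda\le K$; (b) a non-decrement round above the floor, where $|\hat{S}_{\theta_t}|\le\ln K$; or (c) a round at the floor, where $|\hat{S}_{\theta_t}|\le\beta_L\le\beta_{\lambda^*}$ by (i) and $L<\lambda^*$. Summing, deterministically on $\gE$,
\begin{align*}
  \sum_{t=1}^{T}|\hat{S}_{\theta_t}|\;\le\;O\!\big(T\cdot\max(\ln K,\beta_{\lambda^*})+\ln(1/\eps)\cdot K\big).
\end{align*}

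\textbf{Assembling the sample complexity.} Using $\mathrm{err}(\bar\theta)\le\tfrac1T(R_{\gA_\theta}+R_{\gA_q})$ with (ii)--(iv) and the display above, on $\gE$
\begin{align*}
  \mathrm{err}(\bar\theta)\;\le\;\frac1T\Big(O\big(DG\sqrt{T\ln(1/\delta)}\big)+O\Big(\sqrt{\ln(K/\delta)\big(T\max(\ln K,\beta_{\lambda^*})+\ln(1/\eps)K\big)}\Big)\Big).
\end{align*}
Every quantity on the right is observable (in particular $\sum_t|\hat{S}_{\theta_t}|$), so the game is run until this data-dependent bound drops to $\eps$; solving gives a game length $T=\tilde{O}\!\big(\tfrac{(D^2G^2+\max(\ln K,\beta_{\lambda^*}))\ln(K/\delta)}{\eps^2}\big)+O\!\big(\tfrac{\sqrt{\ln(K/\delta)\ln(1/\eps)K}}{\eps}\big)$. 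It remains to add the \texttt{DominantSet} sampling cost, which over the visited grid values $\lambda\ge L$ is the geometric sum $\sum_{\lambda\ge L}Km_\lambda=\tilde{O}\!\big(Kn\ln(GDK/\delta)/L^2\big)$, dominated by its smallest term; since $L^2=\Theta(\eps^2 C/\ln K)$ and $C=\frac{Kn\ln(GDK/\delta)}{\ln(K/\delta)}$, this is $\tilde{O}(\ln K\ln(K/\delta)/\eps^2)$, hence negligible. Finally the stray term $O\!\big(\sqrt{\ln(K/\delta)\ln(1/\eps)K}/\eps\big)$ in $T$ is absorbed using the hypothesis: $\eps\sqrt{C/\ln K}<\lambda^*\le 1$ forces $\eps\sqrt K$ to be polylogarithmic in $K,1/\delta$ (since $\eps^2 K<\ln K/(n\,\ln(GDK/\delta)/\ln(K/\delta))$), so that term is $\tilde{O}\!\big(\sqrt{\ln(K/\delta)\ln(1/\eps)}\cdot\ln K/\eps^2\big)$, which is at most $\tilde{O}\!\big(\max(\ln K,\beta_{\lambda^*})\ln(K/\delta)\ln(1/\eps)/\eps^2\big)$ because $\max(\ln K,\beta_{\lambda^*})\ge\ln K$ and $\ln(K/\delta),\ln(1/\eps)\ge 1$. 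Adding the game and sampling costs yields the claimed bound.

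\textbf{Main obstacle.} I expect the delicate step to be controlling $\sum_t|\hat{S}_{\theta_t}|$, because that is where the algorithm's adaptivity meets the analysis: one must argue that the \emph{data-dependent} iterates $\theta_t$ produced at a given level $\lambda$ are all covered by the single level-$\lambda$ uniform-convergence event from (i) --- rather than by a separate event per iterate --- so that every computed set is simultaneously a genuine $0.4\lambda$-dominant set of size $\le\beta_\lambda$, and that the number of decrements really is $O(\ln(1/\eps))$ with no hidden $n$-dependence (which holds precisely because the floor $L$ is kept of order $\eps\sqrt{Kn}\ge\eps$ rather than exponentially small). The remainder is bookkeeping: invoking Theorem~\ref{theorem:SBEXP3Regret}, the known mirror-descent regret, and Lemma~\ref{lemma:RAqbyPerActionRegret}, and verifying that the calibration of $L$ makes both the $C/L^2$ sampling term and the $K$-dependent slack disappear into the leading term once $\eps\sqrt{C/\ln K}<\lambda^*$.
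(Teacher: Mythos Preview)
Your proposal is correct and follows essentially the paper's route: a union-bounded uniform-convergence event at each grid level $\lambda$ so that every computed $\hat S_{\theta_t}$ is a genuine dominant set of size $\le\beta_{\lambda_t}$, the SB-EXP3 per-action bound for the max-player via Lemma~\ref{lemma:RAqbyPerActionRegret}, and absorption of the \texttt{DominantSet} sampling cost through the calibration $C/L^{2}=\ln K/\eps^{2}$. The only cosmetic difference is that the paper first proves the bound with $\beta_L$ in place of $\beta_{\lambda^*}$ (so the hypothesis $L<\lambda^*$ enters only at the end to replace $\beta_L\le\beta_{\lambda^*}$) and absorbs the at-most-$O(\ln(1/\eps))$ ``large'' rounds inside the $\bar\beta_T$ estimate via $K\le\hat\beta T$, whereas you carry a stray $\sqrt{K\ln(K/\delta)\ln(1/\eps)}/\eps$ term and kill it at the end using $\eps^{2}K\lesssim\ln K$ (which indeed follows from $L<\lambda^*\le 1$); both arguments give the same final bound.
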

We emphasize that Theorem~\ref{thm:highPrecisionBoundOfSB-GDRO-SA} holds without knowing $\lambda^*$. This bound guarantees that in high-precision settings, Algorithm~\ref{algo:SB-GDRO-SA} enjoys (on average) dominant sets of small sizes that never exceed $\max(\beta_{\lambda^*},\ln(K))$. Remarkably, this bound is also dominantly dimension-free although the algorithm still uses the dimension $n$.
In Appendix~\ref{appendix:proofsSectionDFree}, we present a completely dimension-free approach that, if a $(\lambda, \beta)$-sparsity condition is known, obtains an $\tilde{O}(\frac{DKG\sqrt{D^2G^2 + \beta}}{\lambda^3\eps} + \frac{D^2G^2 + \beta}{\eps^2})$ sample complexity based on the stability property of the regularized update~\eqref{eq:minplayer} and the Lipschitzness of the loss function $\ell$.



\section{Experimental Results}
\label{sec:experiments}
\looseness=-1 We support our theoretical findings with empirical results in two different GDRO instances: one with the lower bound environment constructed in Theorem~\ref{thm:lowerbound}, and another with the Adult dataset~\citep{DatasetAdult}. 
On the lower bound environment, we set $\eps = 0.005, K = 10, \lambda^* = 0.2$ and $\beta_{\lambda^*} = 2$ so that the maximum risks can only be attained by the first two groups for any $\theta$. 
On the Adult dataset, we use the same setup as~\cite{Soma2022} and divide \num{48842} samples into groups based on $\texttt{race} \times \texttt{gender}$ with the goal of finding a linear classifier that determines whether the annual outcome of a person exceeds USD \num{50000} based on $n=5$ features: age, years of education, capital gain, capital loss, and number of working hours. 
Similar to~\cite{Soma2022}, $\gP_i$ is the empirical distribution over samples in group $i$.
One difference from~\cite{Soma2022} is we have $K=10$ groups from $5$ races and $2$ genders instead of $6$ groups, so that the difference between $\ln(K)$ and $K$ is amplified.
With $\eps = 0.001$, we use hinge loss and normalize the features so that the losses are in $[0,1]$. 
We set $T = 10^6$ and $\delta = 0.01$ on both GDRO instances. 
The results are aggregated from five independent runs with random seeds $\{0,1,2,3,4\}$.
To compute $\theta^*$, we run the two-player zero-sum game with \emph{ideal players} who have access to the underlying distributions $\gP_i$.
More experimental details are in Appendix~\ref{appendix:experiments}.

\looseness=-1 On both GDRO instances, we compare~\texttt{SB-GDRO-SA} (Algorithm~\ref{algo:SB-GDRO-SA}) to the Stochastic Mirror Descent for GDRO algorithm (\texttt{SMD-GDRO}) proposed by~\citep{Zhang2023stochastic}. 
To the best of our knowledge,~\texttt{SMD-GDRO} is the only suitable baseline with a near-optimal high-probability guarantee in the minimax regime. 

\begin{figure}[t]
    \begin{center}
    \centerline{\includegraphics[scale=0.6]{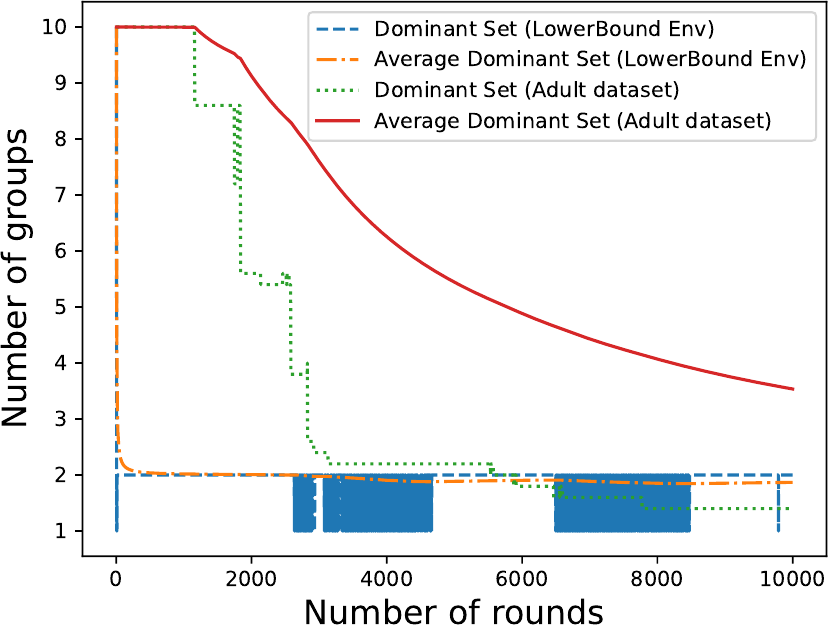}}
    \caption{Sizes of the dominant sets in the first $10000$ rounds computed by~\texttt{SB-GDRO-SA}.}
    \label{fig:domsetsizes}
    \end{center}
\end{figure}
\begin{figure}[ht]
    \begin{center}
    \centerline{\includegraphics[scale=0.5]{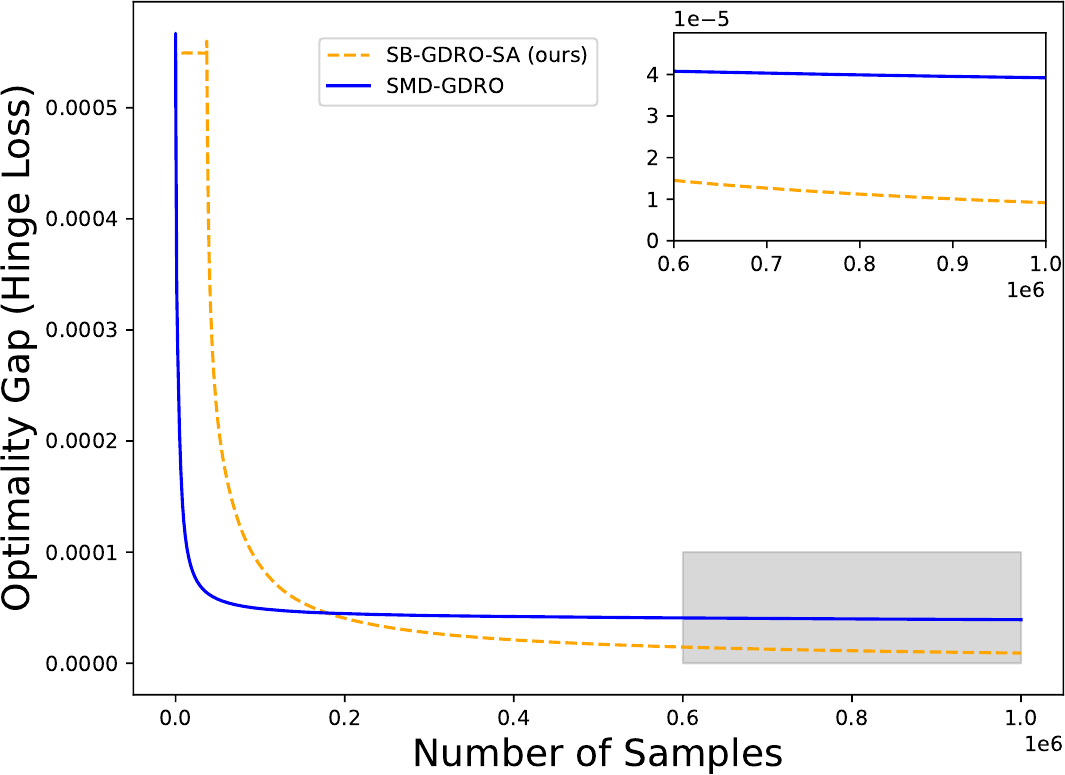}}
    \caption{The optimality gap of~\texttt{SB-GDRO-SA} and~\texttt{SMD-GDRO} on GDRO with the Adult dataset. Lower is better.}
    \label{fig:risk}
    \end{center}
\end{figure}
\begin{figure}[ht]
    \begin{center}
    \centerline{\includegraphics[scale=0.6]{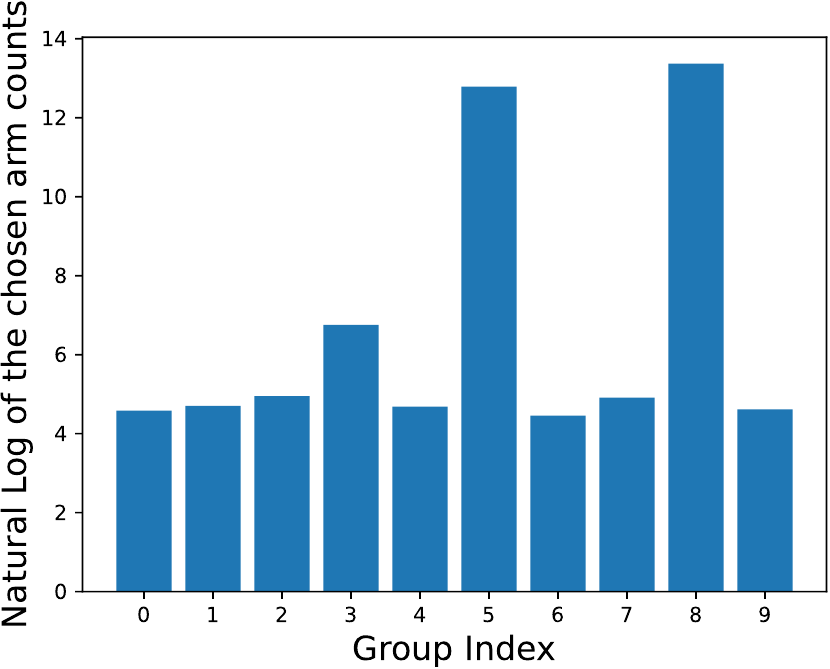}}
    \caption{The number of times a group is selected by the max-player, displayed in natural log. The highest group (group 8) is female Amer-Indian-Eskimo people.}
    \label{fig:groupcount}
    \end{center}
\end{figure}
\subsection[Discovering non-trivial $(\lambda, \beta)$-sparsity]{Discovering non-trivial $(\bm{\lambda, \beta})$-sparsity}
\looseness=-1 Figure~\ref{fig:domsetsizes} shows the sizes $\abs{\hat{S}_{\theta_t}}$ and the average $\frac{1}{t}\sum_{h=1}^t \abs{\hat{S}_{\theta_h}}$ computed by~\texttt{SB-GDRO-SA} in the first \num{10000} rounds. 
On GDRO with Adult dataset, it indicates that~\texttt{SB-GDRO-SA} quickly discovers dominant sets of sizes smaller than $\ceil{\ln(K)}$ within the first \num{3000} rounds. 
This shows that a non-trivial $(\lambda, \ln(K))$-sparsity condition indeed holds for hypotheses~\emph{around} $\theta^*$ in practical settings.
Further inspection reveals this $(\lambda, \ln(K))$-sparsity is discovered early in the game without using too many samples: on the lower bound environment the final $\lambda$ is $0.125 \approx 0.5\lambda^*$ using roughly \num{3000} samples, while on the Adult dataset the final $\lambda$ is $\frac{1}{2^9} \approx \eps\sqrtfrac{C}{\ln(K)}$ using roughly \num{36000} samples.
Both of these values are much smaller than $T$, and as $T$ is scaled with $\frac{1}{\eps^2}$, this empirically supports the  insight in Theorem~\ref{thm:highPrecisionBoundOfSB-GDRO-SA} that the sample complexity is dominated by the number of rounds needed in the two-player zero-sum game.
\subsection{Convergence Properties of~\texttt{SB-GDRO-SA}}
\looseness=-1 Next, we show results indicating that~\texttt{SB-GDRO-SA} finds a $\eps$-optimal hypothesis using fewer samples than~\texttt{SMD-GDRO}.
Figure~\ref{fig:risk} shows the optimality gap $\mathrm{err}$ of $\bar{\theta}_t$ of~\texttt{SB-GDRO-SA} and~\texttt{SMD-GDRO} as a function of the number of drawn samples on the Adult dataset.
Initially,~\texttt{SB-GDRO-SA} uses more samples than~\texttt{SMD-GDRO} because~\texttt{SB-GDRO-SA} needs to estimate $\lambda^*$. 
However, as $\theta_t$ gets closer to $\theta^*$, the optimality gap of~\texttt{SB-GDRO-SA} decreases much quicker since it only collects samples from the two groups with the largest risks.
While~\texttt{SMD-GDRO} struggles to get an optimality gap under $4 \times 10^{-5}$ even after nearly $T = 10^6$ samples,~\texttt{SB-GDRO-SA} manages to do so well below $4 \times 10^5$ samples. 
Figure~\ref{fig:groupcount} shows an interesting observation that more than $60\%$ of the samples drawn by~\texttt{SB-GDRO-SA} are from the \emph{female Amer-Indian-Eskimo} group. 
This is in stark contrast to the fact that this group constitutes only $0.3\%$ of the dataset ($186$ out of \num{48842} samples). This underlines the \emph{robustness} aspect of GDRO, which is different compared to the traditional empirical risk minimization regime where samples from the largest groups contribute more to the optimization process.


\section{Conclusion and Future Work}
\label{sec:Conclusion}
\looseness=-1 We introduced a new structure called $(\lambda,\beta)$-sparsity into the GDRO problem. 
We showed a fundamental connection between the per-action regret in sleeping bandits and the optimality gap of the two-player zero-sum game approach for the GDRO problem, and then improved the dependency from $O(K\ln(K))$ to $O(\beta\ln(K))$ in the leading term of the sample complexity of $(\lambda,\beta)$-sparse problems, even when the optimal $\lambda$ is unknown.
We also showed a near-matching lower bound, which both extends and generalizes the lower bound construction in minimax settings to the $(\lambda,\beta)$-sparse settings. One interesting future direction is relax the $(\lambda, \beta)$-sparsity to hold only within some neighborhood of $\theta^*$. This seems to require last iterate convergence of the sequence of $\theta_t$'s in stochastic games, which is still an open problem.

\bibliography{icml2025sparsegdro}
\bibliographystyle{icml2025}

\newpage
\appendix
\onecolumn
\section{Proofs for Section~\ref{sec:mainalgo}}
\label{appendix:proofsforSBGDRO}
For a pseudo-metric space $(\gF, \norm{.})$, for any $\nu > 0$, let $\gN(\gF, \nu, \norm{.})$ be the $\nu$-covering number of $\gF$; that is $\gN(\gF, \nu, \norm{.})$ is the minimal number of balls of radius $\nu$ needed to cover $\gF$.

First, we prove the following lemma on a uniform convergence bound that holds for a sufficiently large value of $m$.
\begin{lemma}
    Let $m = \frac{384n\ln(\frac{741GDK}{\delta})}{0.01\lambda^2}$. With probability at least $1-\delta/2$, the event
    \begin{align}
        \gE_{i,\theta} = \{|\hat{R}_{i}(\theta) - R_i(\theta)| \leq 0.15\lambda\}
        \label{eq:Econditionm}
    \end{align}
    holds simultaneously for all $i \in [K]$ and $\theta \in \Theta$.
    \label{lemma:m}
\end{lemma}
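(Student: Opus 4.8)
The plan is to establish the uniform convergence bound in~\eqref{eq:Econditionm} via a standard covering-number argument combined with a Hoeffding bound. First I would fix a group $i \in [K]$ and construct a minimal $\nu$-cover $\gC_i$ of $\Theta$ with respect to the $\ell_2$ norm, where the radius $\nu$ will be chosen at the end (it will be of order $\frac{\lambda}{G}$ so that the Lipschitz discretization error is controlled). Since $\Theta$ has $\ell_2$ diameter at most $2D$, a standard volumetric bound gives $|\gC_i| \leq \gN(\Theta, \nu, \norm{\cdot}_2) \leq \left(\frac{3D}{\nu}\right)^n$ (using that $\Theta$ fits in a ball of radius $D$; any such explicit bound with the right shape works, and the constant $741$ in the statement is engineered precisely so the arithmetic below closes).

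Next, for a \emph{fixed} $\theta \in \gC_i$, the quantity $\hat{R}_i(\theta) = \frac{1}{m}\sum_{j=1}^m \ell(\theta, V_{i,j})$ is an average of $m$ i.i.d.\ random variables bounded in $[0,1]$ with mean $R_i(\theta)$, so Hoeffding's inequality gives $\Pr\left[|\hat{R}_i(\theta) - R_i(\theta)| > 0.1\lambda\right] \leq 2\exp(-2m(0.1\lambda)^2) = 2\exp(-0.02 m \lambda^2)$. A union bound over the $K$ groups and over all centers in each cover then yields failure probability at most $2K\left(\frac{3D}{\nu}\right)^n \exp(-0.02 m \lambda^2)$. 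Substituting the stated $m = \frac{384 n \ln(741 GDK/\delta)}{0.01\lambda^2}$ makes $\exp(-0.02 m\lambda^2)$ small enough that, after choosing $\nu$ on the order of $\frac{\lambda}{G}$, this whole expression is at most $\delta/2$; I would carry out the bookkeeping to confirm the constants $384$, $0.01$, and $741$ are consistent (this is the one genuinely fiddly part, but it is purely arithmetic).

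Finally, to pass from the cover to all of $\Theta$: given arbitrary $\theta \in \Theta$, pick $\theta' \in \gC_i$ with $\norm{\theta - \theta'}_2 \leq \nu$. By $G$-Lipschitzness of $\ell(\cdot, z)$, we have $|\ell(\theta, z) - \ell(\theta', z)| \leq G\nu$ pointwise, hence $|\hat{R}_i(\theta) - \hat{R}_i(\theta')| \leq G\nu$ and $|R_i(\theta) - R_i(\theta')| \leq G\nu$. Combining with the bound on the cover,
\[
|\hat{R}_i(\theta) - R_i(\theta)| \leq |\hat{R}_i(\theta) - \hat{R}_i(\theta')| + |\hat{R}_i(\theta') - R_i(\theta')| + |R_i(\theta') - R_i(\theta)| \leq 2G\nu + 0.1\lambda .
\]
Choosing $\nu = \frac{0.025\lambda}{G}$ gives $2G\nu = 0.05\lambda$, so the total is at most $0.15\lambda$, which is exactly the event $\gE_{i,\theta}$. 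Taking a union bound over $i \in [K]$ (already absorbed above) completes the argument. The main obstacle I anticipate is not conceptual but making the explicit constants line up so that the covering-number exponent $n\ln(3D/\nu)$ is dominated by the $\ln(741GDK/\delta)$ term after multiplying through by $m$; I would handle this by noting $\nu$ depends only on $\lambda/G$, so $\ln(3D/\nu) = \ln(120 DG/\lambda)$, which is why a factor like $DG$ (and the freedom in the leading constant $384$) appears inside the logarithm in the definition of $m$.
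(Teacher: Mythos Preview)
Your single-scale covering argument is a perfectly valid route to a uniform convergence bound, and it is genuinely different from what the paper does: the paper goes via empirical Rademacher complexity and bounds the latter by Dudley's entropy integral (chaining), then invokes a standard symmetrization theorem. Your approach is more elementary and self-contained.

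There is, however, a real gap in your claim that the constants in the stated $m$ close. Because your cover radius is $\nu = 0.025\lambda/G$, the covering number satisfies $\ln|\gC_i| \leq n\ln(120DG/\lambda)$, and after the union bound the sufficient sample size becomes
\[
m \;\gtrsim\; \frac{n\ln(DG/\lambda) + \ln(K/\delta)}{\lambda^2}.
\]
The lemma's $m$ has only $\ln(GDK/\delta)$ inside the logarithm, with no $1/\lambda$. Your final sentence suggests that ``the freedom in the leading constant $384$'' absorbs the discrepancy, but it cannot: $n\ln(1/\lambda)$ is unbounded as $\lambda\to 0$, so no fixed multiplicative constant in front of $n\ln(GDK/\delta)$ can dominate it for all $\lambda\in(0,1]$. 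This is exactly what chaining buys: Dudley's integral $\int_0^{1/2}\sqrt{n\ln(4GD/\nu)}\,d\nu$ is finite and independent of the target accuracy $\lambda$, so the resulting Rademacher bound (and hence the required $m$) carries no $\ln(1/\lambda)$ term.

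For the paper's downstream results this extra $\ln(1/\lambda)$ would be harmless, since the algorithms call the lemma only with $\lambda$ bounded below by quantities like $\epsilon$ or $L = \epsilon\sqrt{C/\ln K}$, and the final sample-complexity theorems already carry $\ln(1/\epsilon)$ factors. So your argument would still deliver the main theorems up to logs. But it does not prove the lemma exactly as stated; for that you need either chaining or an added assumption lower-bounding $\lambda$.
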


\subsection{Proof of Lemma~\ref{lemma:m}}
\label{appendix:proofofLemmam}

Our proof for the uniform convergence bound in Lemma~\ref{lemma:m} is based on the Rademacher complexity bound of the class of functions $L_{\Theta}$ defined as follows:
\begin{align}
    L_{\Theta} = \{\ell(\theta, .): \gZ \to [0,1], \theta \in \Theta\},
    \label{eq:defineL}
\end{align}
which is the set of all possible functions $\ell(\theta, .)$ for $\theta \in \Theta$.
First, we state the following bound for the empirical Rademacher complexity based on the chaining argument~\cite{Dudley1967TheSO, RenjieLiao2020}.
\begin{lemma}(Dudley's Entropy Integral Bound~\citep{Dudley1967TheSO, RenjieLiao2020}) Let $\gF = \{f: \gZ \to \R\}$ be a class of real-valued functions, $S = \{z_1, z_2, \dots, z_m\}$ be a set of $m$ random i.i.d samples. For a function $f \in \gF$, let
    \begin{align}
        \norm{f}_{2,S} = \sqrt{\frac{1}{m}\sum_{j=1}^m (f(z_j))^2}
    \end{align}
    be an $S$-dependent seminorm of $f$. Assuming 
    \begin{align*}
        \sup_{f \in \gF} \norm{f}_{2, S} \leq c,
    \end{align*}
    where $c$ is a positive constant, we have 
    \begin{align}
        \mathrm{Rad}(\gF, S) \leq \inf_{\eps \in [0, \frac{c}{2}]}\left(4\eps + \frac{12}{\sqrt{m}}\int_{\eps}^{\frac{c}{2}} \sqrt{\ln(\gN(\gF, \nu, \norm{.}_{2,S}))} d\nu \right),
    \end{align}
    where $\mathrm{Rad}(\gF, S) = \frac{1}{m}\E_{\sigma \in \{\pm 1\}^m}\left[\sup_{f \in \gF} \sum_{j=1}^m \sigma_j f(z_j)\right]$ is the empirical Rademacher complexity of $\gF$ and $\gN(\gF, \nu, \norm{.}_{2,S})$ is the size of a $\nu$-cover of $\gF$.
    \label{lemma:dudleyintegralbound}
\end{lemma}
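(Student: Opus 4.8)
\emph{Proof plan.} The plan is to prove the bound by the classical multiscale chaining argument, working conditionally on the sample $S = \{z_1,\dots,z_m\}$ so that the only randomness is in the Rademacher signs $\sigma \in \{\pm 1\}^m$. Define the Rademacher process $G_f = \sum_{j=1}^m \sigma_j f(z_j)$, so that $\mathrm{Rad}(\gF, S) = \frac{1}{m}\E_\sigma[\sup_{f \in \gF} G_f]$. The first step is to record that the increments of this process are sub-Gaussian with respect to $\norm{\cdot}_{2,S}$: for any $f, g \in \gF$, the variable $G_f - G_g = \sum_j \sigma_j(f(z_j) - g(z_j))$ is a sum of independent, mean-zero terms bounded by $\abs{f(z_j)-g(z_j)}$, so Hoeffding's lemma gives $\E_\sigma[e^{t(G_f - G_g)}] \le \exp\!\big(\tfrac{t^2}{2}\sum_j (f(z_j)-g(z_j))^2\big) = \exp\!\big(\tfrac{t^2 m}{2}\norm{f-g}_{2,S}^2\big)$. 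Hence $G$ has sub-Gaussian increments with variance proxy $m\,\norm{f-g}_{2,S}^2$.

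Next I would set up the chain. Fix $\eps \in [0, c/2]$ and let $\nu_k = 2^{-k}c$ for $k = 0,1,2,\dots$. At scale $\nu_0 = c$ the single ball of radius $c$ centered at the origin covers $\gF$ (because $\norm{f}_{2,S}\le c$), so I may take $\pi_0(f) \equiv 0$, giving $G_{\pi_0(f)} = 0$; for $k \ge 1$ let $T_k$ be a minimal $\nu_k$-cover of $(\gF, \norm{\cdot}_{2,S})$, so $\abs{T_k} = \gN(\gF,\nu_k,\norm{\cdot}_{2,S})$, with nearest-point map $\pi_k$ satisfying $\norm{f - \pi_k(f)}_{2,S} \le \nu_k$. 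Let $N$ be the largest index with $\nu_N \ge 2\eps$ (so $\eps \le \nu_{N+1} < 2\eps \le \nu_N < 4\eps$). The telescoping identity
\begin{align*}
G_f = \sum_{k=1}^N \big(G_{\pi_k(f)} - G_{\pi_{k-1}(f)}\big) + \big(G_f - G_{\pi_N(f)}\big)
\end{align*}
splits the process into $N$ ``links'' plus a residual term. The residual is controlled deterministically: $\abs{G_f - G_{\pi_N(f)}} \le \sum_j \abs{f(z_j)-\pi_N(f)(z_j)} \le \sqrt m\,\sqrt{\sum_j(\cdots)^2} = m\,\norm{f-\pi_N(f)}_{2,S} \le m\nu_N$ by Cauchy--Schwarz, so $\tfrac1m\E_\sigma[\sup_f(G_f - G_{\pi_N(f)})] \le \nu_N < 4\eps$, which is exactly the additive $4\eps$ offset.

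The heart of the argument is bounding each link via the finite maximal inequality. As $f$ ranges over $\gF$, the pair $(\pi_k(f),\pi_{k-1}(f))$ takes at most $\abs{T_k}\abs{T_{k-1}} \le \abs{T_k}^2$ values (using $\abs{T_{k-1}}\le\abs{T_k}$ since $\gN$ is nonincreasing in the radius), and each such increment is sub-Gaussian with variance proxy $m\,\norm{\pi_k(f)-\pi_{k-1}(f)}_{2,S}^2 \le m(\nu_k+\nu_{k-1})^2 = m(3\nu_k)^2$ by the triangle inequality. The bound $\E[\max_{1\le i \le M}X_i]\le s\sqrt{2\ln M}$ for mean-zero sub-Gaussian variables of variance proxy $s^2$, applied with $s = 3\nu_k\sqrt m$ and $M = \abs{T_k}^2$, yields $\E_\sigma[\sup_f(G_{\pi_k(f)}-G_{\pi_{k-1}(f)})] \le 3\nu_k\sqrt m\,\sqrt{2\ln\abs{T_k}^2} = 6\nu_k\sqrt m\,\sqrt{\ln\abs{T_k}}$. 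Summing over $k=1,\dots,N$ and dividing by $m$ contributes $\tfrac{6}{\sqrt m}\sum_{k=1}^N \nu_k\sqrt{\ln\gN(\gF,\nu_k,\norm{\cdot}_{2,S})}$. Since $\nu\mapsto\gN(\gF,\nu,\norm{\cdot}_{2,S})$ is nonincreasing and $\nu_k-\nu_{k+1}=\nu_k/2$, each summand obeys $\nu_k\sqrt{\ln\gN(\nu_k)} \le 2\int_{\nu_{k+1}}^{\nu_k}\sqrt{\ln\gN(\nu)}\,d\nu$; telescoping the integrals and using $\nu_{N+1}\ge\eps$ bounds the sum by $2\int_{\eps}^{c/2}\sqrt{\ln\gN(\gF,\nu,\norm{\cdot}_{2,S})}\,d\nu$, producing the $\tfrac{12}{\sqrt m}\int_\eps^{c/2}$ term. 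Combining with the residual and taking the infimum over $\eps\in[0,c/2]$ gives the claim.

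The main obstacle I expect is the bookkeeping that makes the stated constants ($4$ and $12$) and integration limits ($\eps$ to $c/2$) come out exactly: one must choose the level $N$ (via $\nu_N\ge 2\eps$) and the dyadic scales so that the Cauchy--Schwarz residual is absorbed precisely into $4\eps$ and the Riemann-sum comparison $\sum_k\nu_k\sqrt{\ln\gN(\nu_k)}\le 2\int\sqrt{\ln\gN}\,d\nu$ lands on the correct lower limit. The conceptual ingredients --- sub-Gaussianity of the increments and the logarithmic maximal inequality over a finite net --- are routine; matching the constants is the only delicate part.
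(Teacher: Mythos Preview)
Your chaining argument is correct and is the standard proof of Dudley's entropy integral bound; the constants and integration limits check out with the choices you make (in particular, taking $N$ as the largest index with $\nu_N \ge 2\eps$ gives $\eps \le \nu_{N+1} < 2\eps \le \nu_N < 4\eps$, which is exactly what is needed for the $4\eps$ residual and the lower limit $\eps$ in the integral). The paper itself does not prove this lemma at all --- it simply states the result and cites \cite{RenjieLiao2020} for a proof --- so there is no ``paper's own proof'' to compare against. Your write-up is essentially the textbook derivation and would serve as a self-contained proof where the paper defers to a reference.
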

A proof of this lemma can be found in~\citep{RenjieLiao2020}. We now prove Lemma~\ref{lemma:m}.
\begin{proof}[Proof (of Lemma~\ref{lemma:m})]
    For $i \in [K]$, Theorem 26.5 in~\cite{ShaiAndShaiBook2014} states that with probability at least $1-\frac{\delta}{4K}$ over the set $V_i$ of size $m$, for all $\theta \in \Theta$, 
\begin{align*}
	\abs{\frac{1}{m}\sum_{j=1}^m \ell(\theta,V_{i,j}) - R_i(\theta)} \leq 2\mathrm{Rad}(L_{\Theta},V_i) + \sqrt{\frac{32\ln(4K/\delta)}{m}}.
\end{align*}
Our proof is based on the fact that the covering number of the compact set $\Theta \subset \R^n$ is finite, and hence the empirical Rademacher complexity $\mathrm{Rad}(L_{\Theta},V_i)$  is bounded for all $i \in [K]$.  
Because the values of the loss function $\ell$ is in $[0,1]$, we have $\norm{f}_{2, V_i} \leq 1$ for all $f \in L_{\Theta}$. Moreover, the diameter of $L_{\Theta}$ measured in $\norm{.}_{2,V_i}$ is 
\begin{align*}
    \max_{\theta, \theta' \in \Theta} \sqrt{\frac{1}{m}\sum_{j=1}^m \left(\ell(\theta, V_{i,j}) - \ell(\theta', V_{i,j})\right)^2}
    &\leq \max_{\theta, \theta' \in \Theta}\sqrt{\frac{1}{m}\sum_{j=1}^m G^2\norm{\theta - \theta'}_2^2} \\
    &\leq \sqrt{\frac{1}{m}\sum_{j=1}^m G^2D^2} \\
    &= GD,
\end{align*}
where the first inequality is due to the Lipschitzness of the loss function $\ell$ and the second inequality is due to $D$ being the diameter of $\Theta$ measured in $\ell_2$-norm.
Applying Lemma~\ref{lemma:dudleyintegralbound} with $c = 1$ and $\eps = 0$, we have
\begin{align*}
	\mathrm{Rad}(L_{\Theta},V_i) &\leq \frac{12}{\sqrt{m}}\int_0^{\frac{1}{2}} \sqrt{\ln(\gN(L_{\Theta}, \nu, \norm{.}_{2,V_i}))} d\nu \\
    &\leq \frac{12G}{\sqrt{m}}\int_0^{\frac{1}{2}} \sqrt{\ln\left(\frac{4GD}{\nu}\right)^n}d\nu \\
    &= \frac{12\sqrt{n}}{\sqrt{m}}\int_{0}^{\frac{1}{2}} \sqrt{\ln\left(\frac{4GD}{\nu}\right)} d\nu
\end{align*}
where the second inequality is due to a result that the size of the smallest $\nu$-cover on a set $\gF$ with diameter $d$ is bounded by $(\frac{4d}{\nu})^n$~\citep[see e.g.][Equation 1.1.10]{Carl_Stephani_1990}.
To compute this integral, let $u = \sqrt{\ln(4GD/\nu)}$. We then have $\nu = 4GDe^{-u^2}$, and $d\nu = 4GDd(e^{-u^2})$. As $\nu \to 0, u \to \infty$. As $\nu \to \frac{1}{2}, u \to \sqrt{\ln(8GD)}$. Hence, 
\begin{align*}
	\int_{0}^{\frac{1}{2}} \sqrt{\ln\left(\frac{4GD}{\nu}\right)} d\nu &= 4GD\int^{\sqrt{\ln(8GD)}}_\infty ud(e^{-u^2}) \\
	&= 4GD\left(ue^{-u^2} \mid^{ \sqrt{\ln(8GD)}}_\infty - \int^{ \sqrt{\ln(8GD)}}_\infty e^{-u^2}du \right) \\
	&= 4GD\left( \frac{\sqrt{\ln(8GD)}}{8GD} + \int^{\infty}_{\sqrt{\ln(8GD)}} e^{-u^2}du\right) \\
    &\leq 4GD\left( \frac{\sqrt{\ln(8GD)}}{8GD} + \frac{\sqrt{\pi}}{2}e^{-\ln(8GD)}\right) \\
    &= \frac{2\sqrt{\ln(8GD)} + \sqrt{\pi}}{4},
\end{align*}
where the second equality is integration by parts and the inequality is by a Chernoff-type bound on the Gaussian error function $\frac{2}{\sqrt{\pi}}\int_x^\infty e^{-t^2}dt \leq e^{-x^2}$~\cite{Chang2011BoundGaussianError}. Overall, we have 
\begin{align*}
	\mathrm{Rad}(L_{\Theta}, V_i) &\leq \frac{3\sqrt{n}}{\sqrt{m}}\left(2\sqrt{\ln(8GD)} + \sqrt{\pi}\right).
\end{align*}
We conclude that the uniform convergence bound is
\begin{align*}
	\abs{\frac{1}{m}\sum_{j=1}^m \ell(\theta,V_{i,j}) - R_i(\theta)} &\leq \frac{3\sqrt{n}}{\sqrt{m}}\left(2\sqrt{\ln(8GD)} + \sqrt{\pi}\right) + \sqrt{\frac{32\ln(4K/\delta)}{m}}.
\end{align*}
By setting the right-hand side to $0.15\lambda$, solving for $m$ and simplifying, we obtain the following sufficient condition on $m$:
\begin{align}
    m \geq \frac{384n\ln(\frac{741GDK}{\delta})}{0.01\lambda^2}
\end{align}
so that with probability at least $1-\frac{\delta}{4K}$, we have $\abs{\frac{1}{m}\sum_{j=1}^m \ell(\theta,V_{i,j}) - R_i(\theta)} \leq 0.15\lambda$ for all $\theta \in \Theta$.
Taking a union bound over all $K$ groups leads to the desired statement.
\end{proof}

\subsection{Proof of Lemma~\ref{lemma:correctness}}
\label{appendix:proofOfLemmaCorrectness}
\begin{algorithm}[tb]
    \caption{\texttt{MinP}: the stochastic-OMD min-player $\sA_\theta$}
    \label{algo:minlayer}
    \begin{algorithmic}
        \STATE {\bfseries Input:} $\theta_t \in \Theta$, sample $z_{i_t,t}$
        \STATE Compute $\tilde{g}_t = \nabla\ell(\theta_t, z_{i_t,t})$\;
        \STATE Compute $\theta_{t+1} = \argmin_{\theta \in \Theta}\{\eta_{w,t}\inp{\tilde{g}_t}{\theta - \theta_t} + \frac{1}{2}\norm{\theta - \theta_t}_2^2\}$ by Equation~\eqref{eq:minplayer}\;
        \STATE {\bfseries Return:}  $\theta_{t+1}$           
    \end{algorithmic}    
\end{algorithm}

\begin{proof} 
    From Lemma~\ref{lemma:m}, we immediately have the event $\gE_{i, \theta}$ holds simultaneously for all $i \in [K]$ and $\theta \in \Theta$ with probability at least $1-\frac{\delta}{2}$.
    Thus, it suffices to prove the desired statement assuming that all $\gE_{i, \theta}$ hold.
    Let  $R_{i,t} = R_i(\theta_t)$ and $\hat{R}_{i,t} = \hat{R}_i(\theta_t)$ be the risk and empirical risk of $\theta_t$ with respect to group $i$, respectively.
    We consider two cases: $\beta_\lambda < K$ and $\beta_\lambda = K$.
    
    \subsection*{When $\beta_\lambda < K$:} 
    In this case, there exists a non-empty set $\lambda$-dominant set $S_{\lambda,\theta_t}$ whose size is  smaller than $\beta_\lambda < K$.
    This implies that the set $[K] \setminus S_{\lambda,\theta_t}$ is also non-empty.
    For any $i \in S_{\lambda, \theta_t}$ and $k \in [K] \setminus S_{\lambda, \theta_t}$, due to $\gE_{i,\theta_t}, \gE_{k,\theta_t}$ and by Definition~\ref{def:dominated}, we have 
    \begin{align*}
        \hat{R}_{i,t} - \hat{R}_{k,t} &\geq (R_{i,t} - 0.15\lambda) - (R_{k,t} + 0.15\lambda) \\
        &= R_{i,t} - R_{k,t} - 0.3\lambda \\
        &\geq \lambda - 0.3\lambda \\
        &= \tau > 0.
    \end{align*}
Thus, at any time $t$, the sorted sequence of groups can be divided into two non-empty parts: the first contains all groups in $S_{\lambda,\theta_t}$ and the second contains the rest. 
Since $\abs{S_{\lambda,\theta_t}} \leq \beta_\lambda$, the size of the first part is at most $\beta_\lambda$. Let $i^* = \argmax_{j \in S_{\lambda,\theta_t}}\{\mathrm{ord}(j)\}$ be the last group in the first part. Since $\mathrm{nxt}(i^*) \in [K]\setminus S_{\lambda,\theta_t}$,  we have $\hat{R}_{i^*,t} \geq \hat{R}_{\mathrm{nxt}(i^*),t} + \tau$. 
This satisfies the condition in Algorithm~\ref{algo:computedominantset}, therefore the resulting set $\hat{S}_{\theta_t}$ is non-empty and its size does not exceed $\beta_\lambda$. 
To show that $\hat{S}_{\theta_t}$ is a $0.4\lambda$-dominant set, for any $i' \in \hat{S}_{\theta_t}$ and $k' \in [K] \setminus \hat{S}_{\theta_t}$, we have 
\begin{nalign}
    R_{i',t} - R_{k',t} &\geq (\hat{R}_{i',t} - 0.15\lambda) - (\hat{R}_{k',t} + 0.15\lambda) \\
    &= \hat{R}_{i',t} - \hat{R}_{k',t} - 0.3\lambda \\
    &\geq \hat{R}_{\hat{i},t} - \hat{R}_{\mathrm{nxt}(\hat{i}),t} - 0.3\lambda \\
    &\geq \tau - 0.3\lambda = 0.4\lambda,
    \label{eq:zeropointfourlambda}
\end{nalign}
where the second inequality is from the definition of $\hat{i}$ and $\hat{S}_{\theta_t} = \{i \in [K]: \mathrm{ord}(i) \leq \mathrm{ord}(\hat{i})\}$, we have $\mathrm{ord}(i') \leq \mathrm{ord}(\hat{i}), \mathrm{ord}(\mathrm{nxt}(\hat{i})) \geq \mathrm{ord}(k')$ and the empirical risks are sorted in decreasing order.

\subsection*{When $\beta_\lambda = K$:}
In this case, the inequality $\abs{\hat{S}_{\theta_t}} \leq \beta_\lambda$ holds trivially. To show that $\hat{S}_{\theta_t}$ is a $0.4\lambda$-dominant set, we further consider two sub-cases: $\hat{i} \neq -1$ and $\hat{i} = -1$. 
\begin{itemize}
    \item $\hat{i} \neq -1$: in this case, the set $\hat{S}_{\theta_t} = \{i \in [K]: \mathrm{ord}(i) \leq \mathrm{ord}(\hat{i})\}$ has size at most $K-1$ because the group with the largest empirical risk is excluded. 
    Therefore, by the same argument as in~\eqref{eq:zeropointfourlambda}, the set $\hat{S}_{\theta_t}$ is a $0.4\lambda$-dominant set.
    \item $\hat{i} = -1$: in this case, we have $\hat{S}_{\theta_t} = [K]$ is trivially a $0.4\lambda$-dominant set by Definition~\ref{def:dominated}.
\end{itemize}
We conclude that the set $\hat{S}_{\theta_t}$ is a $0.4\lambda$-dominant set at $\theta_t$ and $\abs{\hat{S}_{\theta_t}} \leq \beta_\lambda$.
\end{proof}

\subsection{Proof of Theorem~\ref{theorem:SBEXP3Regret}}
\begin{algorithm}[tb]
    \caption{\texttt{FTARLShannon}: Follow the regularized and active leader with Shannon entropy regularizer and time-varying learning rates for sleeping bandits}
    \label{algo:FTARLShannon}
    \begin{algorithmic}
        \STATE {\bfseries Input:} $K \geq 2$
        \STATE Initialize $\tilde{L}_{i,0} = 0$ for all arms $i \in [K]$.\;
        \FOR{each round $t = 1, \dots, $}
        {
            \STATE The non-oblivious adversary selects and reveals $\sA_t$
            \STATE Compute $q_{i,t} = \frac{\exp(-\eta_t\tilde{L}_{i,t})}{\sum_{j = 1}^K \exp(-\eta_t\tilde{L}_{j,t})}$
            \STATE Compute $p_{i,t} = \frac{I_{i,t}q_{i,t}}{\sum_{j=1}^K I_{j,t}q_{j,t}}$ by Equation~\eqref{eq:pitbyqit}
            \STATE Draw arm  $i_t \sim p_t$ and observe $\hat{\ell}_t = \ell_{i_t,t}$
            \FOR{each arm $i \in [K]$}
            {
                \STATE If $I_{i,t} = 1$, compute $\tilde{\ell}_{i,t} = \frac{\I{i_t=i}\hat{\ell}_t}{p_{i,t} + \gamma_t}$ by Equation~\eqref{eq:ellitactivearm}
                \STATE If $I_{i,t} = 0$, compute $\tilde{\ell}_{i,t} = \hat{\ell}_t - \gamma_t\sum_{j \in \sA_t}\tilde{\ell}_{j,t}$ by Equation~\eqref{eq:ellitnonactivearm}
                \STATE Update $\tilde{L}_{i,t} = \tilde{L}_{i,t-1} + \tilde{\ell}_{i,t}$
            }
            \ENDFOR
        }
        \ENDFOR
    \end{algorithmic}
\end{algorithm}
Let $A_t = \abs{\sA_t}$ be the number of active arms in round $t$.
Throughout this section, we write $\eta_t = \eta_{q,t}$ for the learning rate of the SB-EXP3 algorithm used by the max-player.

The $O\left(\sqrt{\ln(K/\delta)\sum_{t=1}^T A_t}\right)$ high-probability per-action regret bound of the SB-EXP3 algorithm in~\cite{NguyenAndMehta2024SBEXP3} was established for a fixed learning rate $\eta_t = \eta$ and a fixed exploration factor $\gamma_t = \gamma$. 
In this section, we generalize their result to algorithms with time-varying learning rates and exploration factors defined as follows:
\begin{align}
    \eta_t = 2\gamma_t = \sqrt{\frac{\ln(3K/\delta)}{\sum_{s=1}^t A_s}}.
    \label{eq:etatgammat}
\end{align}
Note that $\eta_t$ and $\gamma_t$ are chosen \emph{after} the set of active arms $\sA_t$ is revealed.
As pointed out in~\citet[][Appendix G]{NguyenAndMehta2024SBEXP3}, the SB-EXP3 algorithm is equivalent to their Follow-the-Regularized-and-Active-Leader (FTARL) algorithm with the Shannon entropy regularizer.
Therefore, a high-probability regret bound of FTARL with Shannon entropy regularizer and $\eta_t$ and $\gamma_t$ defined in~\eqref{eq:etatgammat} would imply Theorem~\ref{theorem:SBEXP3Regret}. 
For completeness, we provide the full procedure of FTARL with Shannon entropy regularizer in Algorithm~\ref{algo:FTARLShannon}.
For each arm $i \in [K]$ and round $t \in [T]$, this algorithm maintains an estimated cumulative loss $\tilde{L}_{i,t}$ defined as
\begin{align*}
    \tilde{L}_{i,t} = \sum_{s=1}^t \tilde{\ell}_{i,t},
\end{align*}
and computes the weight of arm $i$ in round $t$ by 
{
\begin{align*}
    q_{i,t} = \frac{\exp(-\eta_t\tilde{L}_{i,t-1})}{\sum_{j = 1}^K \exp(-\eta_t\tilde{L}_{j,t-1})},
\end{align*}
}
where $\eta_t$ is the learning rate in round $t$.
Initially, $\tilde{L}_{i,0} = 0$ for all arms $i \in [K]$. 
Upon receiving the set $\sA_t$ of active arms, the sampling probability $p_t$ is computed by normalizing $I_{i,t}q_{i,t}$ as follows:
\begin{align}
    p_{i,t} = \frac{I_{i,t}q_{i,t}}{\sum_{j=1}^K I_{j,t}q_{j,t}}.
    \label{eq:pitbyqit}
\end{align}
Note that $I_{i,t} = \I{i \in \sA_t}$, hence $p_{i,t}$ is non-zero only for active arms. 
An arm $i_t \sim p_t$ is drawn according to $p_t$ and its loss $\hat{\ell}_t = \ell_{i_t,t}$ is observed.
For an active arm $i \in \sA_t$, its loss estimate is the IX-loss estimator~\cite{Neu2015ExploreNM}:
\begin{align}
    \tilde{\ell}_{i,t} = \frac{\I{i_t=i}\hat{\ell}_t}{p_{i,t} + \gamma_t},
    \label{eq:ellitactivearm}
\end{align}
where $\gamma_t$ is the exploration factor in round $t$. 
For a non-active arm $i \notin \sA_t$, its loss estimate is defined as the difference between the observed loss $\hat{\ell}_t$ and the weighted sum of estimated losses of active arms~\cite{NguyenAndMehta2024SBEXP3}:
\begin{align}
    \tilde{\ell}_{i,t} = \hat{\ell}_t - \gamma_t\sum_{j \in \sA_t}\tilde{\ell}_{j,t}.
    \label{eq:ellitnonactivearm}
\end{align}
The following theorem states the per-action regret bound of Algorithm~\ref{algo:FTARLShannon}.
\begin{theorem}
    Let $(\eta_t)_{t=1,\dots}$ and $(\gamma_t)_{t=1,\dots}$ be two sequences of
    non-increasing learning rates and exploration factors such that $\eta_t \leq 2\gamma_t$. With probability at least $1-\delta$,~\texttt{FTARLShannon} (Algorithm~\ref{algo:FTARLShannon}) guarantees that
    \begin{align}
        \max_{a \in [K]}\mathrm{Regret}(a) \leq \frac{\ln(K)}{\eta_T} + \frac{\ln(3K/\delta)}{2\gamma_T} + \ln(3/\delta) + \sum_{t=1}^{T}\left(\frac{\eta_t}{2} + \gamma_t\right)A_t.
    \end{align}
    \label{thm:FTARLShannonRegretBound}
\end{theorem}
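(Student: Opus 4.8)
The plan is to re-run the fixed-rate analysis of SB-EXP3 / FTARL-Shannon from~\cite{NguyenAndMehta2024SBEXP3} while tracking the two places where the sequences $(\eta_t)_t$ and $(\gamma_t)_t$ enter, and exploiting that both are non-increasing. Since FTARLShannon is Follow-the-Regularized-Leader over the simplex $\Delta_K$ with the negative-entropy regularizer $\Phi$ (scaled by $1/\eta_t$ in round $t$) fed the surrogate losses $\tilde{\ell}_t$ --- active coordinates using the IX estimator~\eqref{eq:ellitactivearm}, inactive coordinates using the correction~\eqref{eq:ellitnonactivearm} --- the per-action regret against a fixed arm $a$ splits, as in~\cite{NguyenAndMehta2024SBEXP3}, into three pieces: (i) the FTRL regret $\sum_t \langle q_t - e_a, \tilde{\ell}_t\rangle$ of the played distributions against the vertex $e_a$; (ii) the IX bias $\sum_t I_{a,t}(\tilde{\ell}_{a,t} - \ell_{a,t})$ for the comparator arm; and (iii) the concentration of the played losses $\sum_t(\hat{\ell}_t - \langle p_t, \ell_t\rangle)$ together with the explicit exploration penalty $\sum_t \gamma_t \sum_{j\in\sA_t}\tilde{\ell}_{j,t}$. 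The role of the inactive-arm estimate~\eqref{eq:ellitnonactivearm} is precisely to make this split valid, i.e. to make the full-simplex FTRL regret in (i) reduce (up to the other two terms) to per-action regret over the rounds where $a$ is active.

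For (i) I would invoke the standard changing-step-size FTRL inequality: because $1/\eta_t$ is non-decreasing (the hypothesis $\eta_t$ non-increasing), the regret against $e_a$ is at most $\frac{\Phi(e_a) - \min_q \Phi(q)}{\eta_T}$ plus a sum of per-round stability terms, and $\Phi(e_a) - \min\Phi = \ln K$; the extra contribution from step-size changes, $\sum_t\big(\frac{1}{\eta_{t+1}} - \frac{1}{\eta_t}\big)\big(\Phi(q_{t+1}) - \min\Phi\big)$, telescopes and is absorbed into $\frac{\ln K}{\eta_T}$ since $\Phi - \min\Phi \in [0, \ln K]$. The per-round stability term of exponential weights with IX surrogates is bounded by the usual local-norm estimate $\frac{\eta_t}{2}\sum_i q_{i,t}\tilde{\ell}_{i,t}^2$, whose conditional expectation is at most $\frac{\eta_t}{2}A_t$ (using $q_{i,t} \le p_{i,t}$ for active arms, $\tilde{\ell}_{i,t} \le 1/(p_{i,t}+\gamma_t)$, and $\sum_{i\in\sA_t} p_{i,t}/(p_{i,t}+\gamma_t) \le A_t$); the condition $\eta_t \le 2\gamma_t$ is what keeps $\eta_t\tilde{\ell}_{i,t}$ bounded so the second-order expansion is legitimate. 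The martingale remainder (expectation versus realization) is pushed into the concentration budget of step (iii), leaving the term $\sum_t \frac{\eta_t}{2}A_t$.

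Pieces (ii) and (iii) are two high-probability arguments, each at failure probability $\delta/3$. For (ii) I would apply Neu's implicit-exploration lemma~\cite{Neu2015ExploreNM} with the predictable coefficients $2\gamma_T\,\mathbb{1}\{i=a\}$, which is admissible because $2\gamma_T \le 2\gamma_t$ by monotonicity of $(\gamma_t)$; after a union bound over the $K$ candidate comparators this yields $\sum_t I_{a,t}(\tilde{\ell}_{a,t} - \ell_{a,t}) \le \frac{\ln(3K/\delta)}{2\gamma_T}$ for all $a$ simultaneously. For (iii), a Freedman-type martingale bound on $\sum_t(\hat{\ell}_t - \langle p_t, \ell_t\rangle)$ (together with the stability remainder from (i)) gives the $\ln(3/\delta)$ term, while the exploration penalty $\gamma_t\sum_{j\in\sA_t}\tilde{\ell}_{j,t}$ has conditional expectation at most $\gamma_t A_t$ and contributes $\sum_t \gamma_t A_t$. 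Summing (i)--(iii) and union-bounding over the three events gives exactly $\max_a \mathrm{Regret}(a) \le \frac{\ln K}{\eta_T} + \frac{\ln(3K/\delta)}{2\gamma_T} + \ln(3/\delta) + \sum_t\big(\frac{\eta_t}{2} + \gamma_t\big)A_t$, and Theorem~\ref{theorem:SBEXP3Regret} then follows by plugging in $\eta_t = 2\gamma_t = \sqrt{\ln(3K/\delta)/\sum_{s\le t}A_s}$ and using $\sum_t A_t/\sqrt{\sum_{s\le t}A_s} \le 2\sqrt{\sum_t A_t}$.

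I expect the main obstacle to be the bookkeeping around the time-varying rates rather than any new idea: making sure the changing-step-size FTRL telescoping genuinely collapses to $\frac{\ln K}{\eta_T}$ and not to a larger un-telescoped expression, and verifying that Neu's exponential-supermartingale argument still goes through when $\gamma_t$ varies --- which it does, because $\gamma_t$ is chosen after $\sA_t$ is revealed but before $i_t$ is drawn (hence predictable) and the final extraction uses only $\gamma_t \ge \gamma_T$. A secondary care point is correctly allocating the three $\delta/3$ events, with the union over $K$ comparators inside one of them, so that the stated $\ln(3K/\delta)$ and $\ln(3/\delta)$ come out with the right constants.
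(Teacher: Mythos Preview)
Your FTRL piece (i) and the IX-bias piece (ii) are correct and match the paper. The gap is in piece (iii).

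First, the term $\sum_t(\hat{\ell}_t - \langle p_t, \ell_t\rangle)$ does not appear in the decomposition. Once you use the identity $\langle q_t, \tilde{\ell}_t\rangle = \hat{\ell}_t - \gamma_t\sum_{j\in\sA_t}\tilde{\ell}_{j,t}$ (a direct consequence of the inactive-arm estimate, as you rightly observe) and the fact that this same definition forces $\langle q_t - e_a, \tilde{\ell}_t\rangle = 0$ whenever $I_{a,t}=0$, the \emph{realized} regret decomposes exactly as
\[
\sum_t I_{a,t}(\hat{\ell}_t - \ell_{a,t}) \;=\; \sum_t \langle q_t - e_a, \tilde{\ell}_t\rangle \;+\; \sum_t I_{a,t}\,\gamma_t\!\sum_{j\in\sA_t}\tilde{\ell}_{j,t} \;+\; \sum_t I_{a,t}(\tilde{\ell}_{a,t} - \ell_{a,t}),
\]
with no $\langle p_t,\ell_t\rangle$ anywhere.

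Second, and more importantly, controlling the stability remainder and the exploration penalty by ``conditional expectation plus Freedman'' will \emph{not} yield the stated $\ln(3/\delta)$ term. Both quantities are bounded by $1$ per round, so Freedman or Azuma gives at best $O(\sqrt{T\ln(1/\delta)})$, not $O(\ln(1/\delta))$. The paper's route is different: it first bounds the FTRL stability term \emph{deterministically} by $\tfrac{\eta_t}{2}\sum_{j\in\sA_t}\tilde{\ell}_{j,t}$ (via $\sum_i q_{i,t}\tilde{\ell}_{i,t}^2 \le \sum_{j\in\sA_t} p_{j,t}\tilde{\ell}_{j,t}^2 \le \sum_{j\in\sA_t}\tilde{\ell}_{j,t}$, using Lemma~10 of~\cite{NguyenAndMehta2024SBEXP3} and $p_{j,t}\tilde{\ell}_{j,t}\le 1$), so that the entire right-hand side carries the single factor $(\tfrac{\eta_t}{2}+\gamma_t)\sum_{j\in\sA_t}\tilde{\ell}_{j,t}$. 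It then applies Neu's IX lemma \emph{two more times}---once with predictable weights $\nu_{i,t}=\eta_t\le 2\gamma_t$ and once with $\nu_{i,t}=2\gamma_t$---to convert $\sum_t(\tfrac{\eta_t}{2}+\gamma_t)\sum_j\tilde{\ell}_{j,t}$ into $\sum_t(\tfrac{\eta_t}{2}+\gamma_t)\sum_j\ell_{j,t}$ at an additive cost of exactly $\ln(3/\delta)$, after which $\sum_j\ell_{j,t}\le A_t$. All three $\delta/3$ events in the proof are instances of the IX supermartingale lemma; none is a Freedman bound. This is precisely what buys the additive $\ln(1/\delta)$ rather than a multiplicative $\sqrt{T}$.

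As a minor note, the paper proves the bound first for the $\alpha$-Tsallis regularizer and then lets $\alpha\to 1$; your direct Shannon-entropy route is fine, but you will need the fix above for (iii) to recover the exact constants in the statement.
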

The proof of this theorem is in Appendix~\ref{appendix:FTARLadaptivelr}.
We are now ready to prove Theorem~\ref{theorem:SBEXP3Regret}
\begin{proof}[Proof (of Theorem~\ref{theorem:SBEXP3Regret})]
    By plugging~\eqref{eq:etatgammat} into the bound in Theorem~\ref{thm:FTARLShannonRegretBound}, we obtain 
    {
    \begin{align*}
        \max_{a \in [K]}\mathrm{Regret}(a) &\leq \frac{\ln(K)}{\eta_T} + \frac{\ln(3K/\delta)}{\eta_T} + \ln(\frac{3}{\delta}) + \sum_{t=1}^T \eta_t A_t \\
        &\leq \frac{2\ln(3K/\delta)}{\eta_T} + \ln(\frac{3}{\delta}) + \sum_{t=1}^T \eta_t A_t \\
        &= \frac{2\ln(3K/\delta)}{\eta_T} + \ln(\frac{3}{\delta}) + \sqrt{\ln(3K/\delta)}\sum_{t=1}^T \frac{A_t}{\sqrt{\sum_{s=1}^t A_s}} \\
        &= 2\sqrt{\ln(3K/\delta)\sum_{t=1}^T A_t} + \ln(\frac{3}{\delta}) + \sqrt{\ln(3K/\delta)}\sum_{t=1}^T \frac{A_t}{\sqrt{\sum_{s=1}^t A_s}}.
    \end{align*}
    }
    We bound $\sum_{t=1}^T \frac{A_t}{\sqrt{\sum_{s=1}^t A_s}}$ as follows: let $C_t = \sum_{s = 1}^t A_t$ and $C_0 = 0$. Then,
    \begin{align*}
        \sum_{t=1}^T \frac{A_t}{\sqrt{\sum_{s=1}^t A_s}} &= \sum_{t=1}^T \frac{C_t - C_{t-1}}{\sqrt{C_t}}    \\
        &= \sum_{t=1}^T \int_{C_{t-1}}^{C_t} \frac{dx}{\sqrt{C_t}} \\
        &\leq \sum_{t=1}^T \int_{C_{t-1}}^{C_t} \frac{dx}{\sqrt{x}} \\
        &= \int_{C_0}^{C_T}\frac{dx}{\sqrt{x}} \\
        &= 2\sqrt{C_T},
    \end{align*}
    where the inequality holds because $\frac{1}{\sqrt{x}} \geq \frac{1}{\sqrt{C_t}}$ for all $C_{t-1} \leq x \leq C_t$. This implies that 
    {
    \begin{align*}
        \max_{a \in [K]}\mathrm{Regret}(a) &\leq 2\sqrt{\ln(3K/\delta)\sum_{t=1}^T A_t} + \ln(\frac{2}{\delta}) + 2\sqrt{\ln(3K/\delta)\sum_{t=1}^T A_t} \\
        &= O\left(\sqrt{\ln(K/\delta)\sum_{t=1}^T A_t}\right).
    \end{align*}
    }   
\end{proof}

\subsection{Proof of Lemma~\ref{lemma:RAqbyPerActionRegret}}
\begin{proof}
    Since $\Delta_K$ is convex, we can write 
    \begin{align*}
        \max_{q \in \Delta_K}\sum_{t=1}^T \phi(\theta_t, q) &= \max_{q \in \Delta_K}\sum_{t=1}^T \sum_{i=1}^K q_iR_i(\theta_t) \\
        &= \max_{q \in \Delta_K}\sum_{i=1}^K q_i \sum_{t=1}^T R_{i,t} \\ 
        &= \max_{i \in [K]}\sum_{t=1}^T R_{i,t}.
    \end{align*}
    Thus, 
    \begin{align*}
        R_{\gA_q} = \max_{i \in [K]}\sum_{t=1}^T R_{i,t} - \sum_{t=1}^T \phi(\theta_t, q_t).
    \end{align*}
    If a group $i$ is not included in $\hat{S}_{\theta_t}$ at time $t$, then by Lemma~\ref{lemma:correctness}, for any $k \in \hat{S}_{\theta_t}$ we have 
    \begin{align*}
        R_{i,t} < R_{i,t} + 0.4\lambda \leq R_{k,t}.
    \end{align*}
    By construction, the probability vector $q_t$ contains non-zero elements only for groups in $\hat{S}_{\theta_t}$, hence for any $i \notin \hat{S}_{\theta_t}$, we have
    \begin{align*}
        R_{i,t} - \phi(\theta_t,q_t) = \sum_{k \in \hat{S}_{\theta_t}}q_{k,t}(R_{i,t} - R_{k,t}) \leq 0.
    \end{align*}
    We conclude that for any $i \in [K]$,
    \begin{align*}
        \sum_{t=1}^T R_{i,t} - \phi(\theta_t,q_t) \leq \sum_{t=1}^T \I{i \in \hat{S}_{\theta_t}}(R_{i,t} - \phi(\theta_t,q_t)),
    \end{align*}
    hence 
    \begin{align*}
        R_{\gA_q} &= \max_{i \in [K]}\sum_{t=1}^T R_{i,t} - \sum_{t=1}^T \phi(\theta_t, q_t) \\
        &\leq \max_{i \in [K]}\sum_{t=1}^T \I{i \in \hat{S}_{\theta_t}}\left(R_{i}(\theta_t) - \phi(\theta_t,q_t)\right).
    \end{align*}
\end{proof}

\subsection{Proof of Theorem~\ref{thm:samplecomplexityknownlambda}}
\label{appendix:proofOfcorsamplecomplexityknownlambda}
Let $\beta_t = \abs{\hat{S}_{\theta_t}}$ be the size of $\hat{S}_{\theta_t}$. 
Let $\bar{\beta}_T = \frac{1}{T}\sum_{t=1}^T \beta_t$ be the average number of active groups over $T$ rounds. 
We first state the following bound for the regret of the max-player as a function of $\beta_t$, which is obtained directly by combining Theorem~\ref{theorem:SBEXP3Regret} and Lemma~\ref{lemma:RAqbyPerActionRegret}.
\begin{lemma}
    With probability at least $1-\delta/4$, the regret of the max-player in~\texttt{SB-GDRO-SA} (Algorithm~\ref{algo:SB-GDRO-SA}) is bounded by
    \begin{align*}
        R_{\gA_q} \leq O\left(\sqrt{\sum_{t=1}^T\beta_t\ln(K/\delta)}\right).
    \end{align*}
    \label{lemma:RAq}
\end{lemma}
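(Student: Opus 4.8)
The plan is to read the bound off of two ingredients already available: the deterministic reduction in Lemma~\ref{lemma:RAqbyPerActionRegret}, which bounds $R_{\gA_q}$ by a per-action-type quantity, and the sleeping-bandits per-action regret bound of Theorem~\ref{theorem:SBEXP3Regret}, instantiated with active sets $\sA_t = \hat{S}_{\theta_t}$ (so $\abs{\sA_t} = \beta_t$) and hidden stochastic losses $h_{i,t} = 1 - \ell(\theta_t, z_{i,t})$. The only real work in between is a martingale concentration step converting the \emph{realized} per-action regret in Theorem~\ref{theorem:SBEXP3Regret} into the \emph{expected-risk} gaps $R_i(\theta_t) - \phi(\theta_t, q_t)$ appearing in Lemma~\ref{lemma:RAqbyPerActionRegret}.

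First I would set up filtrations. Let $\mathcal{F}_{t-1}$ be the history through the end of round $t-1$, so that $\theta_t$, $q_t$, and $\hat{S}_{\theta_t}$ are $\mathcal{F}_{t-1}$-measurable while $i_t \sim q_t$ and the $z_{i,t} \sim \gP_i$ are drawn in round $t$; using the equivalence (noted before Lemma~\ref{lemma:RAqbyPerActionRegret}) that drawing $i_t \sim q_t$ then $z_{i_t,t}$ is the same as drawing one sample per group and observing only $z_{i_t,t}$, one gets $\E[h_{i,t}\mid\mathcal{F}_{t-1}] = 1 - R_i(\theta_t)$ and $\E[h_{i_t,t}\mid\mathcal{F}_{t-1}] = 1 - \phi(\theta_t,q_t)$, hence $R_i(\theta_t) - \phi(\theta_t,q_t) = \E[h_{i_t,t} - h_{i,t}\mid\mathcal{F}_{t-1}]$ for all $i \in [K]$. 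Therefore, for each fixed $i$,
\[
\sum_{t=1}^T \I{i \in \hat{S}_{\theta_t}}\bigl(R_i(\theta_t) - \phi(\theta_t,q_t)\bigr) = \mathrm{Regret}(i) + M_i ,
\]
where $\mathrm{Regret}(i) = \sum_t \I{i \in \hat{S}_{\theta_t}}(h_{i_t,t} - h_{i,t})$ is exactly the sleeping-bandits per-action regret and $M_i = \sum_t \I{i \in \hat{S}_{\theta_t}}\bigl(\E[h_{i_t,t} - h_{i,t}\mid\mathcal{F}_{t-1}] - (h_{i_t,t} - h_{i,t})\bigr)$ is a sum of martingale differences with respect to $(\mathcal{F}_t)_t$ (the indicator is $\mathcal{F}_{t-1}$-measurable), each term bounded by $2$ in absolute value and at most $\sum_t \I{i \in \hat{S}_{\theta_t}} \le T$ of them nonzero.

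Next I would bound the two pieces. For the martingale term, Azuma--Hoeffding plus a union bound over $i \in [K]$ gives $\max_{i \in [K]} \abs{M_i} = O\bigl(\sqrt{T\ln(K/\delta)}\bigr)$ with high probability; since every $\lambda$-dominant set is nonempty, $\beta_t \ge 1$ for all $t$, so $T \le \sum_t \beta_t$ and this is $O\bigl(\sqrt{\ln(K/\delta)\sum_t \beta_t}\bigr)$. For the regret term, I would invoke Theorem~\ref{theorem:SBEXP3Regret} with $\sA_t = \hat{S}_{\theta_t}$ and the schedule $\eta_{q,t} = 2\gamma_t = \sqrt{\ln(3K/\delta)/\sum_{s \le t} \beta_s}$ that \texttt{MaxP} uses --- legitimate because $\hat{S}_{\theta_t}$, hence $\beta_t$, is computed before the round's sampling, and because the (non-oblivious) min-player is an admissible adaptive adversary for the sleeping-bandits model --- obtaining $\max_{i \in [K]} \mathrm{Regret}(i) = O\bigl(\sqrt{\ln(K/\delta)\sum_t \beta_t}\bigr)$ with high probability. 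Combining via $\max_i\bigl(\mathrm{Regret}(i) + M_i\bigr) \le \max_i \mathrm{Regret}(i) + \max_i \abs{M_i}$, plugging into the bound of Lemma~\ref{lemma:RAqbyPerActionRegret}, and choosing the internal failure parameters of the SB-EXP3 invocation and the concentration union bound to fit within the $\delta/4$ budget (treating the event of Lemma~\ref{lemma:correctness}, on which Lemma~\ref{lemma:RAqbyPerActionRegret} holds, as conditioned upon, as is done elsewhere in the proof of Theorem~\ref{thm:samplecomplexityknownlambda}), yields $R_{\gA_q} \le O\bigl(\sqrt{\sum_t \beta_t \ln(K/\delta)}\bigr)$.

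I expect the main difficulty to be bookkeeping rather than mathematics: making the martingale step and the $\le K$ union bounds compose cleanly with the events on which Lemma~\ref{lemma:correctness} and Theorem~\ref{theorem:SBEXP3Regret} hold, and verifying carefully that the adaptive learning-rate and exploration schedule used inside \texttt{MaxP} is exactly the one permitted by Theorem~\ref{theorem:SBEXP3Regret} (in particular that $\sum_{s \le t}\beta_s$ is known at round $t$). A sharper constant could be obtained by replacing Azuma--Hoeffding with Freedman's inequality using the conditional variances, but since $T \le \sum_t \beta_t$ already, this refinement is unnecessary for the stated bound.
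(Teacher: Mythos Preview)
Your proposal is correct and follows essentially the same approach as the paper: decompose the per-action quantity from Lemma~\ref{lemma:RAqbyPerActionRegret} into the sleeping-bandits per-action regret (handled by Theorem~\ref{theorem:SBEXP3Regret}) plus martingale terms (handled by Azuma--Hoeffding with a union bound over $K$ arms), then use $T \le \sum_t \beta_t$. The only cosmetic difference is that the paper splits your single martingale $M_i$ into two pieces $(A)$ and $(B)$---one for the $h_{i_t,t}$ fluctuation and one for the $h_{a,t}$ fluctuation---and bounds each by Azuma--Hoeffding separately; your packaging of both into one martingale is a mild simplification.
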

\begin{proof}
    The max-player in Algorithm~\ref{algo:SB-GDRO-SA} uses the sleeping bandits algorithm SB-EXP3 with the stochastic loss of arm $i$ at round $t$ is 
    \begin{align*}
        h_{i,t} = 1 - \ell(\theta_t, z_{i,t}).
    \end{align*}
    Let $H_{i,t} = \E_{z_{i,t} \sim \gP_i}[h_{i,t}]$ be the expected value of $h_{i,t}$. We have $H_{i,t} = 1 - R_i(\theta_t)$. 
    Note that both $h_{i,t}$ and $H_{i,t}$ are in $[0,1]$.
    Fix a group $a \in [K]$ and let $I_{a,t} = \I{a \in \hat{S}_{\theta_t}}$. The per-action regret of group $a$ is 
    \begin{align*}
        \mathrm{GroupRegret}(a) &= \sum_{t=1}^T I_{a,t}(R_a(\theta_t) - \phi(\theta_t, q_t)) \\
        &= \sum_{t=1}^T I_{a,t}\left(R_a(\theta_t) - \sum_{i=1}^K q_{i,t}R_i(\theta_t)\right) \\
        &= \sum_{t=1}^T I_{a,t}\left(\sum_{i=1}^K q_{i,t}H_{i,t} - H_{a,t}\right) \\
        &= \sum_{t=1}^T I_{a,t}\left(\sum_{i=1}^K q_{i,t}H_{i,t} - h_{i_t,t} + h_{i_t,t} - h_{a,t} + h_{a,t} - H_{a,t}\right) \\
        &= \underbrace{\sum_{t=1}^T I_{a,t}\left(\sum_{i=1}^K q_{i,t}H_{i,t} - h_{i_t,t}\right)}_{(A)} + \underbrace{\sum_{t=1}^T I_{a,t}(h_{a,t} - H_{a,t})}_{(B)} + \underbrace{\sum_{t=1}^T I_{a,t}(h_{i_t,t} - h_{a,t})}_{(C)}.
    \end{align*}
    The term $C$ is exactly the per-action regret of arm $a$ in \texttt{SB-GDRO-SA} defined in Equation~\eqref{eq:regretperarm} which, by Theorem~\ref{theorem:SBEXP3Regret}, is bounded by $O\left(\sqrt{\ln(K/\delta)\sum_{t=1}^T \beta_t}\right)$  with probability at least $1-\frac{\delta}{12}$ simultaneously for all $a \in [K]$.
    Next, we bound the terms $A$ and $B$. Since 
    \begin{align*}
        \E_{i_t \sim q_t}[\E_{z_{i_t,t} \sim \gP_{i_t}}[h_{i_t,t}]] &= \E_{i_t \sim q_t}[H_{i_t,t}] \\
        &= \sum_{i=1}^K q_{i,t}H_{i,t}
    \end{align*}
    and 
    \begin{align*}
        \abs{\sum_{i=1}^K q_{i,t}H_{i,t} - h_{i_t,t}} &= \abs{\sum_{i=1}^K q_{i,t}(H_{i,t} - h_{i_t,t})} \\
        &\leq \sum_{i=1}^K{q_{i,t}\abs{H_{i,t} - h_{i_t,t}}} \\
        &\leq 1,
    \end{align*}
    $A$ is a sum of a martingale difference sequence in which the absolute values of its elements are bounded by $1$. 
    By Azuma-Hoeffding inequality, with probability at least $1-\frac{\delta}{12K}$, we have 
    \begin{align}
        A \leq \sqrt{2T\ln(12K/\delta)}.
    \end{align}
    For term $B$, we also have $\E_{z_{a,t} \sim \gP_{a}}[h_{a,t}] = H_{a,t}$, therefore $B$ is also a sum of a martingale difference sequence with elements' absolute values bounded by $1$. We then have $B \leq \sqrt{2T\ln(12K/\delta)}$ with probability at least $1-\frac{\delta}{12K}$. 
    By taking a union bound twice: once over $A$ and $B$ for each action $a$ and once all $K$ actions, we obtain with probability at least $1-\frac{\delta}{6}$, 
    \begin{align*}
        A + B \leq 2\sqrt{2T\ln(12K/\delta)}
    \end{align*}
    simultaneously for all $a \in [K]$. Furthermore, since
    \begin{align*}
        T = \sum_{t=1}^T 1 \leq \sum_{t=1}^T \beta_t
    \end{align*} 
    due to $1 \leq \beta_t$, we obtain that with probability at least $1-\frac{\delta}{4}$,
    \begin{align}
        \max_{a \in [K]}\mathrm{GroupRegret(a)} \leq O\left(\sqrt{\ln(K/\delta)\sum_{t=1}^T \beta_t}\right).
    \end{align}
    By Lemma~\ref{lemma:RAqbyPerActionRegret}, when $\gE_{i,\theta}$ holds simultaneously for all $i \in [K]$ and $\theta \in \Theta$, we have 
    \begin{align*}
        R_{A_q} &\leq \max_{a \in [K]} \mathrm{GroupRegret(a)} \\
        &\leq O\left(\sqrt{\ln(K/\delta)\sum_{t=1}^T \beta_t}\right).
    \end{align*}
\end{proof}
\begin{corollary}
For any $T \geq 1$,~\texttt{SB-GDRO-SA} (Algorithm~\ref{algo:SB-GDRO-SA}) guarantees that with probability at least $1-\frac{\delta}{2}$,
    \begin{align}
        \mathrm{err}(\bar{\theta}, \bar{q}) \leq O\left(\frac{(DG + \sqrt{\bar{\beta}_t})\sqrt{\ln(K/\delta)}}{\sqrt{T}}\right)
    \end{align}
	\label{corollary:optimalitygapbysumbetat}
\end{corollary}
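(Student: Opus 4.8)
The plan is to reduce the duality gap to the two players' regrets and then substitute the regret bounds already in hand. First I would invoke the standard reduction from \cite{Soma2022}: for the iterates of the game, $\mathrm{err}(\bar{\theta},\bar{q})\le \tfrac{1}{T}\bigl(R_{\gA_\theta}+R_{\gA_q}\bigr)$, which holds deterministically (it is just the online-learning-to-saddle-point argument applied to the sequence produced by the game). So it remains to bound each regret term with high probability and take a union bound.

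For the min-player, I would use the stochastic mirror descent analysis with time-varying step sizes (the variant of \cite{Zhang2023stochastic} described in Section~\ref{sec:mainalgo}): since $\ell$ is $G$-Lipschitz the stochastic gradients $\tilde g_t=\nabla\ell(\theta_t,z_{i_t,t})$ have $\ell_2$-norm at most $G$, and $\Theta$ has $\ell_2$-diameter $D$; because the min-player of \texttt{SB-GDRO-SA} runs exactly the update \eqref{eq:minplayer} regardless of how $\lambda$ or the dominant sets evolve, this yields $R_{\gA_\theta}=O\!\bigl(DG\sqrt{T\ln(1/\delta)}\bigr)$ with probability at least $1-\delta/4$. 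For the max-player, Lemma~\ref{lemma:RAq} already supplies $R_{\gA_q}=O\!\bigl(\sqrt{\ln(K/\delta)\sum_{t=1}^T\beta_t}\bigr)$ with probability at least $1-\delta/4$; rewriting $\sum_{t=1}^T\beta_t=T\bar{\beta}_T$ via the definition $\bar{\beta}_T=\tfrac{1}{T}\sum_{t=1}^T\beta_t$ turns this into $O\!\bigl(\sqrt{T\bar{\beta}_T\ln(K/\delta)}\bigr)$.

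Combining the two bounds by a union bound over their failure events gives, with probability at least $1-\delta/2$,
\begin{align*}
\mathrm{err}(\bar{\theta},\bar{q})\le\frac{1}{T}\Bigl(O\!\bigl(DG\sqrt{T\ln(1/\delta)}\bigr)+O\!\bigl(\sqrt{T\bar{\beta}_T\ln(K/\delta)}\bigr)\Bigr)=O\!\left(\frac{DG\sqrt{\ln(1/\delta)}+\sqrt{\bar{\beta}_T\ln(K/\delta)}}{\sqrt{T}}\right),
\end{align*}
and since $\ln(1/\delta)\le\ln(K/\delta)$ I would pull $\sqrt{\ln(K/\delta)}$ out of both terms to reach the claimed $O\!\bigl(\tfrac{(DG+\sqrt{\bar{\beta}_T})\sqrt{\ln(K/\delta)}}{\sqrt{T}}\bigr)$.

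There is no genuinely hard step here — the corollary is mostly bookkeeping — but the part needing the most care is the probability accounting. Lemma~\ref{lemma:RAq} already relies, through Lemma~\ref{lemma:RAqbyPerActionRegret}, on the uniform-convergence event $\gE_{i,\theta}$ (which is what certifies that each $\hat{S}_{\theta_t}$ is a genuine $0.4\lambda$-dominant set), so one must ensure that this event, the max-player's concentration events, and the min-player's concentration event together fail with probability at most $\delta/2$, rescaling the internal failure probabilities of the cited lemmas as needed. A minor secondary point is to confirm that the min-player regret bound is legitimately inherited by \texttt{SB-GDRO-SA} — it is, since that algorithm's min-player is identical to \texttt{SB-GDRO}'s — and that $\beta_t\ge 1$ for every $t$, which is what allows the entire bound to be expressed through the single average $\bar{\beta}_T$.
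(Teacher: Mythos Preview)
Your proposal is correct and follows essentially the same route as the paper's own proof: apply the duality-gap-to-regrets reduction, plug in the $O(DG\sqrt{T\ln(1/\delta)})$ bound for the min-player (with probability $1-\delta/4$) and Lemma~\ref{lemma:RAq}'s $O(\sqrt{T\bar{\beta}_T\ln(K/\delta)})$ bound for the max-player (with probability $1-\delta/4$), union-bound to $1-\delta/2$, and absorb $\ln(1/\delta)$ into $\ln(K/\delta)$. Your remarks on the probability accounting and on the min-player update being unchanged in \texttt{SB-GDRO-SA} are exactly the sanity checks the paper relies on implicitly.
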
 
\begin{proof}
    By~\cite{Zhang2023stochastic}, the duality gap is bounded by the average regret of the two players:
\begin{align}
    \mathrm{err}(\bar{\theta}, \bar{q}) \leq \frac{1}{T}\left(R_{\gA_\theta} + R_{\gA_q}\right).
    \label{eq:errbarthetabytworegrets}
\end{align}
In Appendix~\ref{appendix:somdadaptivelr}, we prove that with probability $1-\delta/4$, the regret of the min-player is bounded by
\begin{align}
    R_{\gA_\theta} \leq O\left(DG\sqrt{T\ln(1/\delta)}\right).
    \label{eq:regretminplayer}
\end{align} 
For the max-player, Lemma~\ref{lemma:RAq} implies that with probability $1-\delta/4$,
\begin{nalign}
    R_{\gA_q} &\leq O\left(\sqrt{\sum_{t=1}^T \beta_t \ln(K/\delta)} \right) \\
    &= O\left(\sqrt{T \bar{\beta}_T \ln(K/\delta)} \right)
    \label{eq:regretmaxplayer}
\end{nalign} 
where the equality is from the definition of $\bar{\beta}_T = \frac{\sum_{t=1}^T\beta_t}{T}$.
Plugging~\eqref{eq:regretminplayer} and~\eqref{eq:regretmaxplayer} into~\eqref{eq:errbarthetabytworegrets} and taking a union bound, we obtain that with probability at least $1-\delta/2$
\begin{align*}
  \mathrm{err}(\bar{\theta}, \bar{q}) &\leq O\left(\frac{DG\sqrt{\ln(1/\delta)} + \sqrt{\bar{\beta}_T\ln(K/\delta)}}{\sqrt{T}}\right) \\
  &\leq O\left(\frac{(DG + \sqrt{\bar{\beta}_T})\sqrt{\ln(K/\delta)}}{\sqrt{T}}\right).
\end{align*}
\end{proof}
Corollary~\ref{corollary:optimalitygapbysumbetat} implies $T = O\left(\frac{(D^2G^2 + \bar{\beta}_T)\ln(K/\delta)}{\eps^2}\right)$ is sufficient for a target optimality gap $\eps$.
This is a self-bounding condition on $T$ since the quantity $\bar{\beta}_T$ is dependent on (and changes with) $T$. Nevertheless, it represents a valid stopping condition because $\bar{\beta}_T$ is fully observable and bounded above by a constant $K$. 
We are now ready to prove Theorem~\ref{thm:samplecomplexityknownlambda}.

\begin{proof}[Proof (of Theorem~\ref{thm:samplecomplexityknownlambda})]
In Corollary~\ref{corollary:optimalitygapbysumbetat}, by setting the right-hand side to $\eps$ and solving for $T$, we obtain that with probability at least $1-\frac{\delta}{2}$, the number of samples collected during the game for having $\mathrm{err}(\bar{\theta},\bar{q}) \leq \eps$ is 
\begin{align*}
    O\left(\frac{(D^2G^2 + \bar{\beta}_T)\ln(K/\delta)}{\eps^2}\right).
\end{align*}
By Lemma~\ref{lemma:m}, we collect
\begin{align*}
    O\left(\frac{n\ln(GDK/\delta)}{\lambda^2}\right)
\end{align*}
samples from each group before the game starts so that with probability at least $1-\frac{\delta}{2}$, we have $\beta_t \leq \beta_\lambda$ simultaneously for all $t \in [T]$. 
This implies that $\bar{\beta}_T \leq \beta_\lambda$ and thus the bound can be written as
\begin{align*}
  O\left(\frac{(D^2G^2 + \beta_\lambda)\ln(K/\delta)}{\eps^2}\right).
\end{align*}
By taking a union bound, we obtain that with probability at least $1-\delta$, the total sample complexity is
\begin{align*}
    O\left(\frac{Kn(\ln(GDK/\delta))}{\lambda^2} + \frac{(D^2G^2 + \beta_\lambda)\ln(K/\delta)}{\eps^2}\right).
\end{align*}
\end{proof}

\subsection{Proof of Theorem~\ref{thm:lowerbound}}
\label{appendix:proofoflowerbound}
\begin{figure}
    \centering
  \includegraphics[scale=1.6]{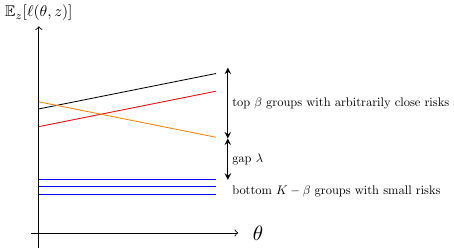}  
  \caption{The construction for the $\Omega\left(\frac{G^2D^2 + \beta}{\eps^2}\right)$ lower bound.}
  \label{fig:lowerbound}
\end{figure}
\begin{proof}
    Our lower bound construction directly extends that of~\cite{Soma2022}. In particular, let $\gZ = [0,1]^3$ be the set of samples and $\Theta = [0,1]$ be the hypothesis set. The loss of a hypothesis $\theta$ on a sample $z = \begin{bmatrix} z_1 \\ z_2 \\ z_3
    \end{bmatrix}$ is 
    \begin{align*}
        \ell(\theta, z) = \delta(z_1\theta + z_2(1-\theta)) + z_3,
    \end{align*}
    where $\delta \in (0,1)$ is a constant defined later.

    The distributions of the first $\beta$ groups are similar to that of~\cite{Soma2022}, where
    \begin{itemize}
        \item The first $\beta-1$ distributions are 
        \begin{align*}
            P_i = \begin{cases}
                z_1 = 0 \qquad\text{almost surely} \\
                z_2 = 1 \qquad\text{almost surely} \\
                z_3 \sim \textrm{Bernoulli}(\mu_i),
            \end{cases}
        \end{align*}
        where $\mu_i = \frac{1}{2}$ for $i = 1, 2, \dots, \beta - 1$.
        \item The $\beta^{th}$ distribution is
        \begin{align*}
            P_\beta = \begin{cases}
                z_1 = 1 \qquad\text{almost surely} \\
                z_2 = 0 \qquad\text{almost surely} \\
                z_3 \sim \textrm{Bernoulli}(\mu_\beta),
            \end{cases}
        \end{align*}
        where $\mu_\beta = \frac{1}{2}$.
    \end{itemize}
    The last $K - \beta$ distributions are
    \begin{align*}
        P_i = \begin{cases}
            z_1 = 0 \qquad\text{almost surely} \\
            z_2 = 0 \qquad\text{almost surely} \\
            z_3 = \frac{1}{2} - \lambda \qquad\text{almost surely}
        \end{cases}
    \end{align*}
    for $i = \beta + 1, \beta + 2, \dots, K$. Figure~\ref{fig:lowerbound} illustrates this construction. The risks of the groups are
    \begin{align*}
        R_i(\theta) = \E_{z \sim P_i}[\ell(\theta,z)] = \begin{cases}
            \Delta (1-\theta) + \mu_i &\qquad (i = 1, 2, \dots, \beta-1) \\
            \Delta \theta + \mu_\beta &\qquad (i = \beta) \\
            \frac{1}{2} - \lambda &\qquad (i = \beta + 1, \beta + 2, \dots, K).
        \end{cases}
    \end{align*}
    Since $\Delta \geq 0, \theta \in (0,1)$ and $\mu_i = \frac{1}{2}$ for $i = 1, \dots, \beta$, we have $R_{i}(\theta) - R_{j}(\theta) \geq \lambda$ for any $1 \leq i \leq \beta$ and $\beta + 1 \leq j \leq K$. It follows that the set $[\beta] = \{1, 2, \dots, \beta\}$ is a $\lambda$-dominant set, and this GDRO instance is $(\lambda,\beta)$-sparse. 
    Because the risk differences between the top $\beta$ groups are upper bounded by 
    \begin{align*}
        \abs{R_1(\theta) - R_\beta(\theta)} = \abs{\Delta(1-2\theta)},
    \end{align*}
    which is arbitrarily smaller than $\lambda$, there can be no $\lambda$-dominant sets of size smaller than $\beta$. Thus, we have $\beta_\lambda = \beta$.
    Moreover, for any $\theta$, its maximal risk is attained on a group within the set $[\beta]$ only.  Therefore, the sample complexity of algorithm $\gA$ is lower bounded by the total samples drawn from the first $\beta$ groups. 

    On the other hand, by setting  
    \begin{align*}
        \Delta = O\left(\sqrt{\frac{\beta}{T}}\right),
    \end{align*}
    where $T$ is the expected total number of samples drawn by $\gA$, the first $\beta$ groups are identical to the groups that give rise to the minimax lower bound in~\cite{Soma2022}. 
    It follows that for any algorithm $\gA$, there exists a GDRO instance which requires at least 
    \begin{align*}
        \Omega\left(\frac{G^2D^2 + \beta}{\eps^2}\right)
    \end{align*}
    samples to find a $\eps$-optimal hypothesis.
\end{proof}

\section{Proofs for Section~\ref{sec:fullsparsity}}
\label{appendix:proofsfullsparsity}
\begin{remark}
    For notational simplicity, we use a short-hand notation $f_{C,g}$ for $\mathrm{Cost}_{C, g}^{(GDRO)}$.  In other words, we will write
    \begin{align}
        f_{C, g}(\lambda) = \frac{C}{\lambda^2} + \frac{g(\lambda)}{\eps^2}.
    \end{align}
    We will also drop $C, g$ when it is clear from the context and simply write $f(\lambda)$.
\end{remark}
\begin{remark}
    Throughout the proofs for Section~\ref{sec:fullsparsity}, some of our bounds contain a $\ln(\ln(\frac{1}{\eps}))$ factor. While we will always present this term explicitly the first time they appear in the bounds, for ease of exposition we generally are not pedantic about this term and will treat it as a constant. For example, we will write 
    \begin{align*}
        \ln(\frac{K\ln(\frac{1}{\eps})}{\delta}) &= \ln(\frac{K}{\delta}) + \ln(\ln(\frac{1}{\eps})) \\
        &= O\left(\ln(\frac{K}{\delta})\right),
    \end{align*}
    assuming that in practice, the number of arms $K > 1$ is not too small and the failure probability $\delta < 1$ is not too large so that $\frac{K}{\delta} > \ln(\frac{1}{\eps})$.
\end{remark}


\subsection{A Sample-Efficient Approach for Estimating $\lambda^*_{C, g}$}
\label{sec:findanduselambdastar}
\looseness=-1 We present an algorithm called~\texttt{SolveOpt} for solving $\texttt{OPT}(C, \eps, g)$.~\texttt{SolveOpt} outputs a $\hat{\lambda}$ such that $f(\hat{\lambda}) = O(f(\lambda^*))$ while using at most $O(f(\lambda^*)\ln(K/\delta)\ln(1/\eps))$ samples. 
The significance of this result in the context of GDRO is as follows:
by using $\tilde{O}(f(\lambda^*))$ samples to obtain an estimate $\hat{\lambda}$ and then using $\hat{\lambda}$ for GDRO, we guarantee that the total sample complexity is of order $\tilde{O}(f(\lambda^*))$. 
This implies that without knowing $\lambda^*$, we can achieve a bound with only a logarithmic factor overhead than the bound obtained when $\lambda^*$ is known. 
Our results and techniques are applicable to other trade-off problems similar to~\eqref{eq:OPTCepsg}, and thus they could be of independent interest.

As mentioned in the main text,~\texttt{SolveOpt} maintains two variables $U$ and $L$ which specify an interval $[L, U]$ that always contains a good estimate for $\lambda^*$. 
This $[L, U]$ shrinks over time based on how large $f(\lambda)$ is in comparison to $f(U)$: $U$ is set to $\lambda$ and $L$ is increased accordingly if $f(\lambda) < f(U)$ holds.
 A crucial element of this process is choosing the geometric sequence $(1, \frac{1}{5}, \frac{1}{25}, \dots)$ of common ratio $\frac{1}{5}$ as the sequence of $\lambda$ at which $g(\lambda)$ is evaluated.
 The process stops when $\lambda < L$, at which point the algorithm return the last value of $U$ as an estimate for $\lambda^*$.
 The first key technical insight of this process is that $L$ and $U$ can be computed using only readily known quantities such as $C, K$ and evaluated $g(\lambda)$.
 The second key technical insight is after some finite number of steps, it is guaranteed that \emph{any} value in the interval $[L, U]$ is a good estimate for $\lambda^*$.
The full procedure is given in Algorithm~\ref{algo:computelambdastar}.
The following lemma states the sample complexity of this approach.
\begin{theorem}
    For any$~\texttt{OPT}(C, g)$ problem defined in~\eqref{eq:OPTCepsg},~\texttt{SolveOpt}~(Algorithm~\ref{algo:computelambdastar}) returns a $\hat{\lambda}$ such that $f(\hat{\lambda}) \leq 50f(\lambda^*)$ while using at most $
        O\left(f(\lambda^*)\ln(K/\delta)\ln(1/\eps)\right)$ samples.
    \label{thm:computelambdastar}
\end{theorem}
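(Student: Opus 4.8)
The plan is to analyze \texttt{SolveOpt} (Algorithm~\ref{algo:computelambdastar}) through the geometric grid $\lambda_k = 5^{-k}$, $k = 0,1,2,\ldots$, at which $g$ is queried, writing $f$ for $\mathrm{Cost}^{(\textsc{gdro})}_{C,g}$ as in the remark above. I will maintain the invariant that, right after $\lambda_k$ is queried, $U_k = \argmin_{i \le k} f(\lambda_i)$ is the best point seen so far and $L_k = \sqrt{C/f(U_k)}$; note that $L_k \le U_k$ since $f(U_k) \ge C/U_k^2$, that $L_k$ is computable from $C$, $\eps$, and the already-evaluated value $g(U_k)$, and that $L_k \ge \sqrt{C/f(1)} > 0$ because $f(U_k) \le f(\lambda_0)$. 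The run stops at the first index $K^*$ with $\lambda_{K^*+1} < L_{K^*}$ and returns $\hat\lambda = U_{K^*}$; termination is automatic since $\lambda_k \to 0$ while $L_k$ is bounded away from $0$.

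The crux is a granularity estimate showing that the largest grid point at or below $\lambda^*$ is already near-optimal. Call it $\lambda_p$, so $\lambda_p \le \lambda^* < 5\lambda_p$. Monotonicity of $g$ gives $g(\lambda_p) \le g(\lambda^*)$, and $C/\lambda_p^2 < 25\,C/(\lambda^*)^2$, so
\begin{align*}
f(\lambda_p) = \frac{C}{\lambda_p^2} + \frac{g(\lambda_p)}{\eps^2} \le \frac{25\,C}{(\lambda^*)^2} + \frac{g(\lambda^*)}{\eps^2} \le 25\, f(\lambda^*).
\end{align*}
For correctness I split into two cases. If the run reaches iteration $p$ (i.e.\ $\lambda_p$ is actually queried), then $f(U_k) \le f(\lambda_p) \le 25 f(\lambda^*)$ for all $k \ge p$, hence at the end. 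If instead $K^* < p$, the stopping rule $\lambda_{K^*+1} < L_{K^*} = \sqrt{C/f(U_{K^*})}$ rearranges to $f(U_{K^*}) < C/\lambda_{K^*+1}^2$, and $K^*+1 \le p$ forces $\lambda_{K^*+1} \ge \lambda_p$, so $f(U_{K^*}) < C/\lambda_p^2 \le 25 f(\lambda^*)$. In both cases $f(\hat\lambda) \le 25 f(\lambda^*)$, and in fact every $\lambda \in [L_{K^*}, U_{K^*}]$ satisfies
\begin{align*}
f(\lambda) \le \frac{C}{L_{K^*}^2} + \frac{g(U_{K^*})}{\eps^2} = f(U_{K^*}) + \frac{g(U_{K^*})}{\eps^2} \le 2 f(U_{K^*}) \le 50 f(\lambda^*),
\end{align*}
so the entire final interval is good and $\hat\lambda$ in particular meets the claimed bound.

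For the sample budget, querying $\lambda_k$ costs $O\!\left(C\ln(K/\delta)/\lambda_k^2\right) = O\!\left(C\,25^k\ln(K/\delta)\right)$, a geometric sequence, so the total over $k = 0,\ldots,K^*$ is $O\!\left(\frac{C}{\lambda_{K^*}^2}\ln(K/\delta)\right)$. I bound $C/\lambda_{K^*}^2$ by $25 f(\lambda^*)$ in all cases: if $K^* \le p$ then $\lambda_{K^*} \ge \lambda_p > \lambda^*/5$; if $K^* > p$ then iteration $K^*-1$ did not stop, so $\lambda_{K^*} \ge L_{K^*-1} = \sqrt{C/f(U_{K^*-1})}$ with $f(U_{K^*-1}) \le f(\lambda_p) \le 25 f(\lambda^*)$ (using $K^*-1 \ge p$). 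Hence the total is $O(f(\lambda^*)\ln(K/\delta))$. For the extra $\ln(1/\eps)$ factor in the stated bound, note that the number of queries is small: $5^{K^*} = 1/\lambda_{K^*} \le 5\sqrt{f(1)/C}$ and $f(1) = C + g(1)/\eps^2 \le C + K/\eps^2 < C(1 + 1/\eps^2)$ (since $K < C$), so $K^* = O(\ln(1/\eps))$, and bounding the total crudely by (number of queries) $\times$ (largest per-query cost) gives exactly $O(f(\lambda^*)\ln(K/\delta)\ln(1/\eps))$.

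The main obstacle is making the granularity estimate and the choice $L_k = \sqrt{C/f(U_k)}$ mesh: $L_k$ must simultaneously be a genuine barrier (since $f(\lambda) \ge C/\lambda^2 > f(U_k)$ whenever $\lambda < L_k$, no point below $L_k$ can beat $U_k$), not cut off the search so fast that $\lambda_p$ is skipped without a replacement of comparable quality, and make the stopping inequality $f(U_{K^*}) < C/\lambda_{K^*+1}^2$ self-certify near-optimality. The delicate point is the ``stopped early'' case, where $U$ may have dipped below $\lambda^*$ so that the naive sandwich ``$\lambda^* \in [L,U]$'' can fail; the argument above sidesteps circularity by reading the guarantee directly off the stopping test rather than off any bracketing claim, and the remaining work is routine geometric-sum bookkeeping.
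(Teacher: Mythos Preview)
Your proof is correct and in fact cleaner and slightly sharper than the paper's. Both arguments pivot on the grid point $\lambda_p$ (the largest $5^{-k} \le \lambda^*$) satisfying $f(\lambda_p) \le 25 f(\lambda^*)$, but the paper structures the analysis around three auxiliary lemmas --- that $L \le \lambda^*$ always (Lemma~\ref{lemma:rangehaslambdastar}), that the update $f(\lambda) < f(U)$ fires once $\lambda$ enters $[\lambda^*/5,\lambda^*]$ while $f(U)$ is still large (Lemma~\ref{lemma:flambdasmallerthanfU}), and that once $f(U)$ is small the whole interval $[L,U]$ is good (Lemma~\ref{lemma:allisgood}) --- and then splits on whether $f(1)$ is already small. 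You instead split on whether the run stops before or after reaching $\lambda_p$, and in the stopped-early case read the near-optimality certificate directly off the stopping inequality $f(U_{K^*}) < C/\lambda_{K^*+1}^2$, sidestepping the bracketing lemma entirely.

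Two remarks. First, your $L_k = \sqrt{C/f(U_k)}$ is not quite the algorithm's $L = \sqrt{C/(f(U) - 1/\eps^2)}$; your argument still goes through for the actual algorithm once you absorb the stray $1/\eps^2 \le g(\lambda^*)/\eps^2 \le f(\lambda^*)$ in the stopped-early case (giving $f(U_{K^*}) \le 26 f(\lambda^*)$ instead of $25$), and the sample-complexity case $K^* > p$ only improves since the true $L$ is larger. Second, your geometric-sum observation that the total query cost collapses to $O(C\ln(K/\delta)/\lambda_{K^*}^2) = O(f(\lambda^*)\ln(K/\delta))$ is tighter than the paper's stated bound: the paper uses the cruder (number of queries) $\times$ (largest per-query cost) estimate, which is where its extra $\ln(1/\eps)$ factor comes from. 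So your route actually shaves a logarithmic factor off the theorem while using less scaffolding.
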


\begin{algorithm}[tb]    
    \caption{\texttt{SolveOpt}: algorithm for solving $\texttt{OPT}(C, g)$}
    \label{algo:computelambdastar}
    \begin{algorithmic}
        \STATE {\bfseries Input:} $\eps \in (0,1), K \geq 3, C > K$, function $g$\;
        \STATE Evaluate $g(1)$\;
        \STATE Initialize $U = 1, L = \sqrt{\frac{C}{C + \frac{g(1)-1}{\eps^2}}}, \lambda = 1$;
        \WHILE{ $\lambda \geq L$ }
        {
            \STATE Evaluate $g(\lambda)$ \;
            \IF{ $f(\lambda) < f(U)$ }
            {
                \STATE Assign $U \leftarrow \lambda$, $L \leftarrow \sqrt{\frac{C}{\frac{C}{\lambda^2} + \frac{g(\lambda) - 1}{\eps^2}}}$\;
            }
            \ENDIF
            \STATE Update $\lambda = \lambda / 5$\;
        }
        \ENDWHILE
        \STATE {\bfseries Return:} $\hat{\lambda} = U$.
   \end{algorithmic}           
\end{algorithm}
Before proving Theorem~\ref{thm:computelambdastar}, we note that \texttt{SolveOpt} (Algorithm~\ref{algo:computelambdastar}) maintains a range of values $[L, U]$ that always contains at least one good estimate for $\lambda^*$, and evaluates $g(\lambda)$ at elements of the geometric series $(U, \frac{U}{5}, \frac{U}{25}, \dots, \frac{U}{5^{\floor{\log_5\left(\frac{U}{L}\right)}}})$ to compute this estimate. 
Note that all elements of this series are in $[L, U]$.
Whenever $f(\lambda)$ is strictly smaller than $f(U)$ for some $\lambda$, we shrink the range $[L, U]$ by setting $U = \lambda$ and $L = \sqrt{\frac{C}{\frac{C}{\lambda^2} + \frac{g(\lambda)-1}{\eps^2}}}$.
We first prove the following lemma which shows that $L$ is always smaller than or equal to $\lambda^*$, thus at least one $g(\lambda)$ for $\lambda \leq \lambda^*$ will be evaluated while running~\texttt{SolveOpt}.
\begin{lemma}
    For any $U \in (0,1]$, let 
    \begin{align*}
        L = \sqrt{\frac{C}{\frac{C}{U^2} + \frac{g(U) - 1}{\eps^2}}}.
    \end{align*}
    Then, $L \leq \min\{\lambda^*, U\}$.
    \label{lemma:rangehaslambdastar}
\end{lemma}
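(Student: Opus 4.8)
The plan is to prove the two inequalities $L \le U$ and $L \le \lambda^*$ separately; together they give $L \le \min\{\lambda^*, U\}$. Both rely only on the single structural fact that $g$ takes values in $[1,K]$ (so, in particular, $g(\cdot) \ge 1$), so the argument amounts to one short manipulation of the optimality of $\lambda^*$.

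First I would verify $L \le U$, which is immediate. Since $g(U) \ge 1$, the term $\frac{g(U)-1}{\eps^2}$ is nonnegative, so the denominator appearing in the definition of $L$ satisfies $\frac{C}{U^2} + \frac{g(U)-1}{\eps^2} \ge \frac{C}{U^2} > 0$. Taking reciprocals and multiplying through by $C$ gives $L^2 \le U^2$, hence $L \le U$ since both quantities are positive. This step also certifies that the denominator is strictly positive, so $L$ is well-defined.

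Next I would show $L \le \lambda^*$. By definition $\lambda^*$ minimizes $f(\lambda) = \frac{C}{\lambda^2} + \frac{g(\lambda)}{\eps^2}$ over $[0,1]$, so in particular $f(\lambda^*) \le f(U)$. Rearranging this inequality yields
\[
\frac{C}{(\lambda^*)^2} \;\le\; \frac{C}{U^2} + \frac{g(U) - g(\lambda^*)}{\eps^2}.
\]
Now, crucially, $g(\lambda^*) \ge 1$ because $g$ maps into $[1,K]$, hence $g(U) - g(\lambda^*) \le g(U) - 1$, and therefore $\frac{C}{(\lambda^*)^2} \le \frac{C}{U^2} + \frac{g(U)-1}{\eps^2}$. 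Inverting this (both sides are positive) gives $(\lambda^*)^2 \ge \frac{C}{\frac{C}{U^2} + \frac{g(U)-1}{\eps^2}} = L^2$, i.e. $\lambda^* \ge L$. Combined with the previous paragraph, this completes the proof.

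The main point to get right is the realization that \emph{no} case split on whether $\lambda^* < U$ or $\lambda^* \ge U$ is needed, and that monotonicity of $g$ is not actually invoked here: the bound $g(U) - g(\lambda^*) \le g(U) - 1$ holds unconditionally, since it reduces to $g(\lambda^*) \ge 1$. (Monotonicity would only be used to show $g(U) - g(\lambda^*) \ge 0$ when $\lambda^* \le U$, which this lemma does not require.) The only other thing to keep track of is that all denominators — namely $U^2$, $\eps^2$, and $\frac{C}{U^2} + \frac{g(U)-1}{\eps^2}$ — are strictly positive, which follows from $C > K \ge 1$, $U \in (0,1]$, $\eps \in (0,1)$, and $g(U) \ge 1$.
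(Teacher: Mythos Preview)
Your proof is correct and follows essentially the same approach as the paper's: both use $g(U)\ge 1$ to get $L\le U$, and then combine the optimality inequality $f(\lambda^*)\le f(U)$ with $g(\lambda^*)\ge 1$ to obtain $(\lambda^*)^2 \ge L^2$. Your additional remark that monotonicity of $g$ is not needed here is a nice observation but does not change the argument.
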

\begin{proof}
    Since $\beta_U \geq 1$, we have $L \leq U$. By definition of $\lambda^*$, we have 
    \begin{align*}
        f(\lambda^*) = \frac{C}{(\lambda^*)^2} + \frac{g(\lambda^*)}{\eps^2} \leq f(U) =  \frac{C}{U^2} + \frac{g(U)}{\eps^2}.
    \end{align*}
    Since $g(\lambda^*) \geq 1$, this implies 
    \begin{align*}
        \frac{C}{(\lambda^*)^2} + \frac{1}{\eps^2} \leq \frac{C}{U^2} + \frac{g(U)}{\eps^2}.
    \end{align*}
    Subtracting $\frac{1}{\eps^2}$ and dividing $C$ on both sides, we obtain
    \begin{align*}
        (\lambda^*)^2 &\geq \frac{C}{\frac{C}{U^2} + \frac{g(U) - 1}{\eps^2}} \\
        &= L^2.
    \end{align*}
    We conclude that $L \leq \min\{\lambda^*, U\}$.
\end{proof}
The next lemma shows that if $\lambda$ falls into the range $[\frac{\lambda^*}{5}, \lambda^*]$ when $f(U)$ is much larger than $f(\lambda^*)$, then the inequality $f(\lambda) < f(U)$ holds.
\begin{lemma}
    For any $U \in (0, 1]$, if 
    \begin{align*}
    f(U) > \frac{25C}{(\lambda^*)^2} + \frac{g(\lambda^*)}{\eps^2},
    \end{align*}
    then for any $\lambda \in  [\frac{\lambda^*}{5}, \lambda^*]$, we have
    \begin{align*}
        f(\lambda) < f(U).
    \end{align*}
    \label{lemma:flambdasmallerthanfU}
\end{lemma}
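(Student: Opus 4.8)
The plan is to bound $f(\lambda)$ termwise for $\lambda \in [\lambda^*/5, \lambda^*]$ using only the monotonicity of $g$ and the explicit form $f(\lambda) = \frac{C}{\lambda^2} + \frac{g(\lambda)}{\eps^2}$, and then compare directly with the hypothesis on $f(U)$.

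First I would handle the dimension-dependent term: since $\lambda \geq \lambda^*/5$, we have $\lambda^2 \geq (\lambda^*)^2/25$, hence
\begin{align*}
\frac{C}{\lambda^2} \leq \frac{25C}{(\lambda^*)^2}.
\end{align*}
Next I would handle the $g$-term: since $\lambda \leq \lambda^*$ and $g$ is nondecreasing (this monotonicity is part of the setup of $\mathrm{OPT}(C,g)$ in \eqref{eq:OPTCepsg}), we get $g(\lambda) \leq g(\lambda^*)$, so $\frac{g(\lambda)}{\eps^2} \leq \frac{g(\lambda^*)}{\eps^2}$. Adding the two bounds yields
\begin{align*}
f(\lambda) = \frac{C}{\lambda^2} + \frac{g(\lambda)}{\eps^2} \leq \frac{25C}{(\lambda^*)^2} + \frac{g(\lambda^*)}{\eps^2} < f(U),
\end{align*}
where the last strict inequality is exactly the hypothesis of the lemma. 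This completes the argument.

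There is no real obstacle here: the statement is an immediate consequence of $g$ being nondecreasing together with the elementary estimate $\lambda \geq \lambda^*/5$, so the only thing to be careful about is that the hypothesis is stated with a strict inequality, which is what propagates the strictness to the conclusion $f(\lambda) < f(U)$. I would not expect to need any facts beyond those already established.
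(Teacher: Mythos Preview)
Your proof is correct and matches the paper's own proof essentially line for line: bound $\frac{C}{\lambda^2}$ using $\lambda \ge \lambda^*/5$, bound $\frac{g(\lambda)}{\eps^2}$ using the monotonicity of $g$ and $\lambda \le \lambda^*$, then invoke the strict hypothesis on $f(U)$.
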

\begin{proof}
    For any $\lambda \in  [\frac{\lambda^*}{5}, \lambda^*]$, we have $\frac{C}{\lambda^2} \leq \frac{25C}{(\lambda^*)^2}$ and $g(\lambda) \leq g(\lambda^*)$. Hence, 
    \begin{align*}
        f(\lambda) &= \frac{C}{\lambda^2} + \frac{g(\lambda)}{\eps^2} \\
        &\leq \frac{25C}{(\lambda^*)^2} + \frac{g(\lambda^*)}{\eps^2} \\
        &< f(U).
    \end{align*}
\end{proof}
We need one last lemma, showing that once $U$ is sufficiently close to $\lambda^*$ such that $f(U) = O(f(\lambda^*))$, then any values between $[L, U]$ can be used as an estimate for $\lambda^*$.
\begin{lemma}
    For any $U \in (0,1]$, if 
    \begin{align*}
        f(U) \leq \frac{25C}{(\lambda^*)^2} + \frac{g(\lambda^*)}{\eps^2},
    \end{align*}
    then with $L = \sqrt{\frac{C}{\frac{C}{U^2} + \frac{g(U)-1}{\eps^2}}}$, we have for any $\lambda \in [L, U]$, 
    \begin{align*}
        f(\lambda) \leq 50f(\lambda^*).
    \end{align*}
    \label{lemma:allisgood}
\end{lemma}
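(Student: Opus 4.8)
The plan is to prove the claim using only two ingredients: the monotonicity of $g$ and the algebraic identity that the choice of $L$ encodes, namely $\frac{C}{L^2} = \frac{C}{U^2} + \frac{g(U)-1}{\eps^2}$. So first I would fix an arbitrary $\lambda \in [L,U]$. Since $g$ is nondecreasing and $\lambda \le U$, we have $g(\lambda) \le g(U)$; since $\lambda \ge L$, we have $\frac{C}{\lambda^2} \le \frac{C}{L^2}$. Adding these gives $f(\lambda) \le \frac{C}{L^2} + \frac{g(U)}{\eps^2}$.

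Next I would substitute the value of $\frac{C}{L^2}$ dictated by the definition of $L$, obtaining $f(\lambda) \le \frac{C}{U^2} + \frac{g(U)-1}{\eps^2} + \frac{g(U)}{\eps^2} \le \frac{C}{U^2} + \frac{2g(U)}{\eps^2}$. The right-hand side equals $f(U) + \frac{g(U)}{\eps^2}$, and since $\frac{g(U)}{\eps^2} \le f(U)$ trivially (because $\frac{C}{U^2} \ge 0$), we conclude $f(\lambda) \le 2 f(U)$; in other words the entire interval $[L,U]$ has cost at most $2f(U)$. Finally, invoking the hypothesis $f(U) \le \frac{25C}{(\lambda^*)^2} + \frac{g(\lambda^*)}{\eps^2}$ gives $f(\lambda) \le \frac{50C}{(\lambda^*)^2} + \frac{2g(\lambda^*)}{\eps^2} \le 50\bigl(\frac{C}{(\lambda^*)^2} + \frac{g(\lambda^*)}{\eps^2}\bigr) = 50 f(\lambda^*)$, where the last step just uses $2 \le 50$ on the second term. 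This is the desired bound, and it holds uniformly over $\lambda \in [L,U]$.

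Regarding difficulty: there is essentially no obstacle here — the statement is a two-line computation once one notices that $L$ was reverse-engineered precisely so that $\frac{C}{L^2}$ reproduces the $\frac{C}{U^2}$ term plus exactly the slack $\frac{g(U)-1}{\eps^2}$. The only point requiring a little care is the ``$-1$'' in the numerator of $L$ (inherited from the fact that $g(\lambda^*) \ge 1$, as exploited in Lemma~\ref{lemma:rangehaslambdastar}); the clean move is simply to bound $g(U)-1 \le g(U)$ rather than attempting to track the cancellation. One might also wonder whether $[L,U]$ is nonempty, but Lemma~\ref{lemma:rangehaslambdastar} already gives $L \le U$, so nothing further need be checked, and the argument makes no use of the specific form of $g$ beyond monotonicity, so it applies verbatim in the abstract $\mathrm{OPT}(C,g)$ setting.
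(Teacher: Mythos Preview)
Your proposal is correct and follows essentially the same approach as the paper's proof: bound $f(\lambda)$ by $\frac{C}{L^2} + \frac{g(U)}{\eps^2}$ using monotonicity, substitute the definition of $L$ to get $\frac{C}{U^2} + \frac{2g(U)}{\eps^2} \leq 2f(U)$, and then apply the hypothesis to reach $50f(\lambda^*)$. The only cosmetic difference is that the paper writes the step to $2f(U)$ directly rather than via $f(U) + \frac{g(U)}{\eps^2}$.
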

\begin{proof}
    For any $\lambda \in [L, U]$, we have $\frac{C}{\lambda^2} \leq \frac{C}{L^2}$ and $g(\lambda) \leq g(U)$. Hence,
    \begin{align*}
        f(\lambda) &= \frac{C}{\lambda^2} + \frac{g(\lambda)}{\eps^2} \\
        &\leq \frac{C}{L^2} + \frac{g(U)}{\eps^2} \\ 
        &= \frac{C}{U^2} + \frac{g(U)-1}{\eps^2} + \frac{g(U)}{\eps^2} \qquad\text{ since } L = \sqrt{\frac{C}{\frac{C}{U^2} + \frac{g(U)-1}{\eps^2}}} \\
        &\leq \frac{C}{U^2} + \frac{2g(U)}{\eps^2} \\
        &\leq 2f(U) \\
        &\leq 50\left(\frac{C}{(\lambda^*)^2} + \frac{g(\lambda^*)}{\eps^2}\right) \\
        &= 50f(\lambda^*).
    \end{align*}
\end{proof}
\begin{proof}[Proof (of Theorem~\ref{thm:computelambdastar})]
    First, we prove that~\texttt{SolveOpt} (Algorithm~\ref{algo:computelambdastar}) always terminates after a finite number of steps. Observe that during the \textbf{while} loop, the sequence of values of $\lambda$ is $(1, \frac{1}{5}, \frac{1}{25}, \dots)$, which is monotonically decreasing. On the other hand, $L$ is non-decreasing from the initial value of $\sqrt{\frac{C}{C + \frac{g(1)-1}{\eps^2}}}$. This is because whenever $f(\lambda) \geq f(U)$, the value of $L$ is 
    \begin{align*}
        L =  \sqrt{\frac{C}{\frac{C}{U^2} + \frac{g(U) - 1}{\eps^2}}} = \sqrt{\frac{C}{f(U) - \frac{1}{\eps^2}}}.
    \end{align*}
    Once the inequality $f(\lambda) < f(U)$ holds, $U$ is assigned to $\lambda$ and $L$ is assigned to a new value $L'$, where
    \begin{align*}
        L' = \sqrt{\frac{C}{f(\lambda) - \frac{1}{\eps^2}}} > \sqrt{\frac{C}{f(U) - \frac{1}{\eps^2}}} = L.
    \end{align*}
    It follows that the condition $\lambda \geq L$ of the \textbf{while} loop must be false after a finite number of steps.

    Next, we consider two cases: $f(1) \leq \frac{25C}{(\lambda^*)^2} + \frac{g(\lambda^*)}{\eps^2}$ and $f(1) > \frac{25C}{(\lambda^*)^2} + \frac{g(\lambda^*)}{\eps^2}$. 
    \subsection*{Case 1: $f(1) \leq \frac{25C}{(\lambda^*)^2} + \frac{g(\lambda^*)}{\eps^2}$}
    In this case, by Lemma~\ref{lemma:allisgood}, any value in the range $[L, 1]$ where $L = \sqrt{\frac{C}{C + \frac{g(1)-1}{\eps^2}}}$ is a good estimate for $\lambda^*$.    
    Because these values are at least $L$ and there are at most $\log_5\left(\frac{1}{L}\right)$ evaluations, the maximum number of samples needed for testing all values of $\lambda$ in the sequence $(1, \frac{1}{5}, \frac{1}{5^2}, \dots, \frac{1}{5^{\floor{\log_5\left(1/L\right)}}})$ is bounded by
    \begin{align*}
        O\left(\frac{C\ln(K/\delta)\log_5\left(\frac{1}{L}\right)}{L^2}\right) &= O\left(\ln(K/\delta)\left(C + \frac{g(1)-1}{\eps^2}\right)\log_5\left(\frac{1}{L}\right)\right) \\
        &= O\left(\ln(K/\delta)\left(C + \frac{g(1)-1}{\eps^2}\right)\log_5\left(\sqrt{1 + \frac{g(1)-1}{C\eps^2}}\right)\right) \\
        &\leq O\left(\ln(K/\delta)\left(C + \frac{g(1)}{\eps^2}\right)\ln\left(1 + \frac{g(1)-1}{C\eps^2}\right)\right) \\
        &\leq O\left(\ln(K/\delta)\left(C + \frac{g(1)}{\eps^2}\right)\ln\left(1 + \frac{1}{\eps^2}\right)\right) \\
        &\leq O\left(\ln(K/\delta)f(\lambda^*)\ln(1/\eps)\right),
    \end{align*}
    where the first inequality is from $\log_5(\sqrt{x}) = \frac{\ln(x)}{2\ln(5)} \leq \ln(x)$, the second inequality is due to $g(1) - 1 < K < C$, and the third inequality is from $C + \frac{g(1)}{\eps^2} = f(1) \leq 25f(\lambda^*)$ and $\ln\left(1 + \frac{1}{\eps^2}\right) \leq \ln\left(\frac{2}{\eps^2}\right) = 2\ln\left(\frac{\sqrt{2}}{\eps}\right)$.
    \subsection*{Case 2: $f(1) > \frac{25C}{(\lambda^*)^2} + \frac{g(\lambda^*)}{\eps^2}$}
    In this case, since $f(1) > \frac{24C}{(\lambda^*)^2} + f(\lambda^*) > f(\lambda^*)$, initially, we have $U = 1 > \lambda^*$. 
    By Lemma~\ref{lemma:rangehaslambdastar}, we have $\lambda^*$ belongs to the range $[L, 1]$, where $L = \sqrt{\frac{C}{C + \frac{g(1)-1}{\eps^2}}}$. 
    As $\lambda$ repeatedly shrinks by $\frac{1}{5}$ from $1$, after at most $\log_5(\frac{5}{\lambda^*})$ iterations of the \textbf{while} loop, it must fall into the range $[\frac{\lambda^*}{5}, \lambda^*]$. 
    Each of these iterations makes one evaluation $g(\lambda)$ for some $\lambda \geq \lambda^*/5$. 
    Hence, the total number of samples to evaluate $g(\lambda)$ for $\lambda$ from $1$ to $\frac{1}{5^{\floor{\log_5(\frac{5}{\lambda^*})}}}$
     (i.e., the first element of the geometric series that lies inside the range $[\lambda^*/5, \lambda^*]$),
     is at most
    \begin{align}
        O\left(\frac{C\ln(K/\delta)\log_5(\frac{5}{\lambda^*})}{(\lambda^*)^2}\right).
        \label{eq:SCphase1}
    \end{align}
    Let $U_*$ be the largest value of $\lambda$ being tested for which $f(\lambda) \leq \frac{25C}{\lambda^*} + \frac{g(\lambda^*)}{\eps^2}$.
    By Lemma~\ref{lemma:flambdasmallerthanfU}, $U_* \geq \lambda^*/5$.
    Note that $U_*$ might be larger than $\lambda^*$.
    Because the algorithm starts with $f(1) > \frac{25C}{\lambda^*} + \frac{g(\lambda^*)}{\eps^2}$, the inequality $f(\lambda) < f(U)$ must be true at $\lambda = U_*$. It follows that $U$ is set to a $U_*$, and $f(U_*) \leq 50f(\lambda^*)$ by Lemma~\ref{lemma:allisgood}.

    Let $L_* = \sqrt{\frac{C}{\frac{C}{U_*^2} + \frac{g(U_*)-1}{\eps^2}}}$ be the corresponding value of $L$ after $U$ is assigned to $U^*$.
    In each of the subsequent iterations, since $L$ is non-decreasing and $U$ is non-increasing, the returned value $\hat{\lambda}$ must be in this range $[L_*, U_*]$. 
    The number of iterations needed until termination starting from $U_*$ is at most
    \begin{nalign}
        \log_5\left(\frac{U_*}{L_*}\right) &= \log_5\left(\frac{U_*\sqrt{\frac{C}{U_*^2} + \frac{g(U_*)-1}{\eps^2}}}{\sqrt{C}}\right) \\
        &\leq \log_5\left(\frac{U_*\sqrt{\frac{C}{U_*^2} + \frac{g(U_*)}{\eps^2}}}{\sqrt{C}}\right) \\
        &= \log_5\left(\sqrt{1 + \frac{g(U_*)U_*^2}{C\eps^2}}\right) \\
        &\leq \log_5\left(\sqrt{1 + \frac{1}{\eps^2}}\right) \\
        &\leq \ln\left(1 + \frac{1}{\eps^2}\right) \\
        &\leq \ln(\frac{2}{\eps^2}),
        \label{eq:boundlog2U*overL*}
    \end{nalign}
    where the second inequality is from $g(U_*) \leq K < C$ and $U_* \leq 1$, the third inequality is due to $\log_5(\sqrt{x}) = \frac{1}{2}\log_5(x) = \frac{1}{2}\frac{\ln(x)}{\ln(5)} \leq \ln(x)$ for $x > 1$ and the last inequality is $1 + \frac{1}{\eps^2} \leq \frac{2}{\eps^2}$ for $\eps \leq 1$.
    In each of these iterations,~\texttt{SolveOpt} evaluates $g(\lambda)$ once for $\lambda \geq L_*$. In total, the number of samples in these iterations is at most
    \begin{nalign}
        O\left(\frac{C\ln(K/\delta)\log_5\left(\frac{U_*}{L_*}\right)}{L_*^2}\right) &= 
         O\left(\ln(K/\delta)\left(\frac{C}{U_*^2} + \frac{g(U_*)-1}{\eps^2}\right)\log_5\left(\frac{U_*}{L_*}\right)\right) \\
        &\leq O\left(\ln(K/\delta)f(U_*)\ln(1/\eps^2)\right) \\
        &= O\left(\ln(K/\delta)f(\lambda^*)\ln(1/\eps)\right)
        \label{eq:SCphase2}
    \end{nalign}
    where the first inequality is due to~\eqref{eq:boundlog2U*overL*} and the second inequality is due to $f(\lambda^*) \leq f(U_*) \leq 50f(\lambda^*)$ and $\ln(1/\eps^2) = 2\ln(1/\eps)$.
    The total number of samples used by Algorithm~\ref{algo:computelambdastar} is the bounded by the sum of the number of samples for testing $\lambda$ from $1$ to $U_*$, and then from $U_*$ to $L_*$.
    Combining~\eqref{eq:SCphase1} and~\eqref{eq:SCphase2}, we have the total number of samples needed until Algorithm~\ref{algo:computelambdastar} terminates is at most
    \begin{align*}
        &O\left(\frac{C\ln(K/\delta)\log_5(\frac{5}{\lambda^*})}{(\lambda^*)^2} + \frac{C\ln(K/\delta)\log_5\left(\frac{U_*}{L_*}\right)}{L_*^2}\right) \\
        &\leq O\left(f(\lambda^*)\ln(K/\delta)\ln(1/\eps)\right),
    \end{align*}
    where the inequality is from $f(\lambda^*) = \frac{C}{(\lambda^*)^2} + \frac{g(\lambda^*)}{\eps^2} \geq \frac{C}{(\lambda^*)^2}$.
\end{proof}

\subsection{Proofs for Section~\ref{sec:lambdastaradaptivebound}}
\label{appendix:proofsectionusinglambdastar}
Let $\gB(\theta, r) = \{\theta' \in \Theta: \norm{\theta - \theta'}_2 \leq r\}$ be a $\ell_2$-ball of radius $r$ centered at $\theta \in \Theta$.
In this section, we prove Theorem~\ref{thm:boundfullunknown} which specifies the sample complexity of~\texttt{SB-GDRO-A} (Algorithm~\ref{algo:SB-GDROunknownlambda}) for the setting where no $\lambda$ is known beforehand.
The most important component of~\texttt{SB-GDRO-A} is computing an estimate $\hat{\lambda}$ for the optimal $\lambda^*$ using the algorithm~\texttt{SolveOpt} (Algorithm~\ref{algo:computelambdastar}). 
This computation uses the algorithm~\texttt{EstG} (Algorithm~\ref{algo:glambda}) to compute an estimate of $\beta_\lambda$ for $\lambda$ in the geometric sequence $(1, \frac{1}{5}, \frac{1}{25}, \dots)$ of common ratio $\frac{1}{5}$.
\begin{algorithm}[tb]    
    \caption{\texttt{EstG}: estimating $g(\lambda)$ for $\lambda$ in the geometric sequence of common ratio $\frac{1}{5}$}
    \label{algo:glambda}
    \begin{algorithmic}
    \STATE {\bfseries Input:} $\lambda \in (0, 1)$
    \STATE Compute a $\frac{0.1\lambda}{G}-$cover $\widehat{\Theta}$ of $\Theta$ with centers $\hat{\theta}^{(1)}, \hat{\theta}^{(2)}, \dots, \hat{\theta}^{(\abs{\widehat{\Theta}})}$\;
    \STATE Let $N = \ln\left(\frac{2}{\eps}\right)$ \;
    \STATE Draw $m_N = \frac{384n\ln(\frac{741GDKN}{\delta})}{0.01\lambda^2}$ samples from each of the $K$ groups into a set $V_\lambda$\;
    \STATE Compute $S^{(i)} = \texttt{DominantSet}(\hat{\theta}^{(i)}, V_\lambda, 0.7\lambda)$ for $i = 1, 2, \dots, \abs{\widehat{\Theta}}$ by Algorithm~\ref{algo:computedominantset} \;
    \STATE {\bfseries Return:} $\hat{\beta}_\lambda =  \max_{i = 1, 2, \dots, \abs{\widehat{\Theta}}} \abs{S^{(i)}}$
  \end{algorithmic}
  \end{algorithm}
  
  \begin{algorithm}[tb]    
    \caption{\texttt{SB-GRDRO-A}: SB-GDRO without knowing any $\lambda$}
    \label{algo:SB-GDROunknownlambda}
    \begin{algorithmic}
    \STATE {\bfseries Input:} Constants $T, K, D, G > 0, \delta > 0, \eps > 0$
    \STATE Compute $\hat{\lambda} = \texttt{SolveOpt}(\eps, K, \hat{C}, \hatg)$ by Algorithm~\ref{algo:computelambdastar} where $\hatg$ is defined in Equation~\eqref{eq:definehatg}.
    \STATE {Let $\hat{\Theta} = \{\hat{\theta}^{(i)}\}_{i=1,\dots,\abs{\hat{\Theta}}}$ be the $\frac{0.1\hat{\lambda}}{G}$-cover of $\Theta$ constructed when querying $\hat{\lambda}$ in~\texttt{EstG}}\;
    \STATE   Initialize $\theta_1 = \argmin_{\theta \in \Theta}\norm{\theta}_2$\;
      \FOR{each round $t = 1, \dots, T$}
        \STATE Let $c_{t} = \argmin_{i \in \abs{\hat{\Theta}}}\norm{\hat{\theta}^{(i)} - \theta_{t}}$ be the index of the center in $\widehat{\Theta}$ closest to $\theta_{t}$ \;
      \STATE Let $\hat{S}_{\theta_{t}}$ be the pre-computed $0.4\hat{\lambda}$-dominant set at $\hat{\theta}^{(c_{t})}$\;
          \STATE Compute $q_{t} = \texttt{MaxP}(t, \hat{S}_{\theta_{t}})$ by Algorithm~\ref{algo:maxplayer}\;
          \STATE Draw a group $i_t \sim q_t$ and a sample $z_{i_t,t} \sim \gP_{i_t}$\;                  
          \STATE Compute $\theta_{t+1} = \texttt{MinP}(\theta_t, z_{i_t,t})$ by Algorithm~\ref{algo:minlayer}\;          
      \ENDFOR
    \STATE {\bfseries Return:} $\bar{\theta}$
  \end{algorithmic}
  \end{algorithm}
First, we prove the following lemma which bounds the number of $\lambda$ tested in Algorithm~\ref{algo:computelambdastar}.
\begin{lemma}
  For any $\texttt{OPT}(C, g)$ problem, in~\texttt{SolveOpt} (Algorithm~\ref{algo:computelambdastar}), the number of values $\lambda$ whose $g(\lambda)$ need to be evaluated is at most 
  \begin{align}
    N = \ln\left(\frac{2}{\eps}\right).
  \end{align}
  \label{lemma:numberoflambda}
\end{lemma}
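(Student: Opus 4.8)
The plan is a short counting argument over the iterations of the \textbf{while} loop in \texttt{SolveOpt} (Algorithm~\ref{algo:computelambdastar}). First I would note that inside the loop the variable $\lambda$ runs through the geometric sequence $1, \tfrac{1}{5}, \tfrac{1}{25}, \dots$, with exactly one evaluation of $g$ per iteration; the pre-loop evaluation of $g(1)$ happens at $\lambda = 1$, which is already the first term of this sequence and therefore does not add a new value of $\lambda$. Hence it suffices to bound the number $J$ of loop iterations.

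Second, I would use the fact, established in the proof of Theorem~\ref{thm:computelambdastar}, that $L$ is non-decreasing over the loop: whenever $f(\lambda) < f(U)$ the new value $\sqrt{C/(f(\lambda) - 1/\eps^2)}$ strictly exceeds the old one, and otherwise $L$ is unchanged. Consequently $L \ge L_0$ throughout, where $L_0 = \sqrt{C/\bigl(C + (g(1)-1)/\eps^2\bigr)}$ is the initial value. Since the loop runs only while $\lambda \ge L \ge L_0$, every executed iteration has $\lambda = 5^{-j} \ge L_0$ for some integer $j \ge 0$, forcing $j \le \log_5(1/L_0)$, so $J \le \lfloor \log_5(1/L_0) \rfloor + 1 \le 1 + \log_5(1/L_0)$.

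It then remains to bound $\log_5(1/L_0)$. From its definition, $1/L_0^2 = 1 + (g(1)-1)/(C\eps^2)$; since $g$ takes values in $[1, K]$ and $K < C$, we have $g(1) - 1 \le K - 1 < C$, hence $1/L_0^2 < 1 + 1/\eps^2 \le 2/\eps^2$, using $\eps \le 1$. Therefore $\log_5(1/L_0) \le \tfrac{1}{2}\log_5(2/\eps^2) = \log_5(\sqrt{2}/\eps) \le \ln(\sqrt{2}/\eps)$, the last inequality because $\ln 5 > 1$; combined with the previous paragraph this gives $J \le 1 + \ln(\sqrt{2}/\eps) \le \ln(2/\eps) = N$ in the precision regime of interest. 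I do not expect a genuine obstacle here: the only step requiring care is the monotonicity of $L$, which is already available from the proof of Theorem~\ref{thm:computelambdastar}, and the rest is a routine geometric-series count combined with the crude bound $g(1) \le K < C$.
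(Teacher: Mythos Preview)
Your approach is essentially the paper's: both use that $L$ is non-decreasing (so $L \ge L_0$ throughout), bound the number of evaluated $\lambda$'s by the number of terms of the geometric sequence lying above $L_0$, and then control $L_0$ via $g(1) \le K < C$.

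However, your final inequality $1 + \ln(\sqrt{2}/\eps) \le \ln(2/\eps)$ is false: it rearranges to $1 \le \ln 2 - \ln\sqrt{2} = \ln\sqrt{2} \approx 0.347$, an $\eps$-independent constant gap that no ``precision regime of interest'' can close. You were more careful than the paper in adding the $+1$ for the iteration count, but your tighter bound $1/L_0^2 < 2/\eps^2$ then leaves no slack to absorb it. The paper instead uses $1 + 1/\eps^2 \le 4/\eps^2$, obtaining $\log_5(1/L_0) \le \log_5(2/\eps) \le \ln(2/\eps)$ directly, and simply takes this as the count (silently omitting the $+1$ you correctly noted). Either route gets you within an additive $O(1)$ of the stated $N$; since $N$ only appears inside a logarithm downstream (e.g.\ in the definition of $m_N$), this constant is immaterial to the rest of the argument.
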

\begin{proof}
  In~\texttt{SolveOpt}, the values of $\lambda$ belong to a geometric sequence of common ratio $\frac{1}{5}$ starting at $1$ and terminating at a value no smaller than $\sqrt{\frac{C}{C + \frac{g(1)-1}{\eps^2}}}$, where $C > K \geq g(1)$. Therefore, the number of values in this sequence is at most 
  \begin{align*}
    \log_5\left(\frac{1}{\sqrt{\frac{C}{C + \frac{g(1)-1}{\eps^2}}}}\right) &= \log_5\left(\sqrt{\frac{C + \frac{g(1)-1}{\eps^2}}{C}}\right) \\
    &= \log_5\left(\sqrt{1 - \frac{1}{C\eps^2} + \frac{g(1)}{C\eps^2}}\right) \\
    &< \log_5\left(\sqrt{1 + \frac{g(1)}{C\eps^2}}\right) \\
    &\leq \log_5\left(\sqrt{1 + \frac{1}{\eps^2}}\right) \quad\text{ since } g(1) < C \\
    &\leq \frac{1}{2}\log_5\left(\frac{4}{\eps^2}\right) \quad\text{ since } 1 + \frac{1}{\eps^2} \leq \frac{4}{\eps^2}\\
    &\leq \ln(\frac{2}{\eps})
  \end{align*}
  where the last inequality is due to $\frac{\log_5(x^2)}{2} = \frac{\ln(x)}{\ln(5)} \leq \ln(x)$ for any $x > 0$.
\end{proof}
Next, we show that any $0.4\lambda$-dominant set $S_{0.4\lambda,\theta}$ at a $\theta \in \Theta$ is also a $0.2\lambda$-dominant set at any $\theta'$ within the Euclidean ball $\gB\left(\theta, \frac{0.1\lambda}{G}\right)$.
\begin{lemma}
    Let $\theta \in \Theta$ and $\lambda \in [0,1]$. 
    For any $\theta' \in \gB\left(\theta, \frac{0.1\lambda}{G}\right)$, any $0.4\lambda$-dominant set $S_{0.4\lambda,\theta}$ at $\theta$ is also a $0.2\lambda$-dominant set at $\theta'$.
    \label{lemma:dominantsetinball}
\end{lemma}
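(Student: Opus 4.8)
The plan is a direct perturbation argument built on the Lipschitzness of the group risks. First I would observe that since $\ell(\cdot, z)$ is $G$-Lipschitz in its first argument for every $z \in \mathcal{Z}$, each risk $R_i(\theta) = \E_{z\sim\gP_i}[\ell(\theta,z)]$ inherits the same Lipschitz constant: for all $\theta,\theta'\in\Theta$ and all $i\in[K]$,
\[
|R_i(\theta) - R_i(\theta')| \le \E_{z\sim\gP_i}\big[\,|\ell(\theta,z)-\ell(\theta',z)|\,\big] \le G\norm{\theta-\theta'}_2 .
\]
In particular, for $\theta' \in \gB\big(\theta, \tfrac{0.1\lambda}{G}\big)$ this gives $|R_i(\theta)-R_i(\theta')| \le 0.1\lambda$ for every $i \in [K]$.

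Next I would fix a $0.4\lambda$-dominant set $S = S_{0.4\lambda,\theta}$ at $\theta$; it is non-empty by Definition~\ref{def:dominated}, so $S$ is a valid candidate to be a dominant set at $\theta'$ as well. For any $j \notin S$ and any $i \in S$, the definition of $0.4\lambda$-dominance at $\theta$ gives $R_i(\theta) \ge \min_{k\in S} R_k(\theta) \ge R_j(\theta) + 0.4\lambda$. Applying the two-sided Lipschitz bound from the first step — once to push $R_i(\theta)$ down to $R_i(\theta')$ and once to push $R_j(\theta)$ up to $R_j(\theta')$ — yields
\[
R_i(\theta') \ge R_i(\theta) - 0.1\lambda \ge R_j(\theta) + 0.3\lambda \ge R_j(\theta') + 0.2\lambda .
\]
Since this holds for every $i \in S$, taking the minimum over $i\in S$ on the left gives $\min_{i\in S}R_i(\theta') \ge R_j(\theta') + 0.2\lambda$ for all $j\notin S$, which is exactly the assertion that $S$ is $0.2\lambda$-dominant at $\theta'$.

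There is no genuine obstacle here; the only point requiring care is the bookkeeping of the slack, namely that the $0.4\lambda$ margin guaranteed at $\theta$ is eroded by at most $0.1\lambda$ at each of the two endpoints and therefore remains at least $0.2\lambda$ at $\theta'$. The cover radius $\tfrac{0.1\lambda}{G}$ chosen in Algorithm~\ref{algo:glambda} is precisely what makes this accounting close, and this lemma is what lets \texttt{EstG} certify — by computing dominant sets only at the finitely many cover centers — that its estimate $\hat{\beta}_\lambda$ is an upper bound on the size of a $0.2\lambda$-dominant set in a neighborhood of \emph{every} $\theta \in \Theta$, hence $\beta_{0.2\lambda} \le \hat{\beta}_\lambda$.
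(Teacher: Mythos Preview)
Your proof is correct and follows essentially the same approach as the paper: establish the $G$-Lipschitzness of each $R_i$, use it to bound $|R_i(\theta)-R_i(\theta')|\le 0.1\lambda$, and then subtract $0.1\lambda$ at each endpoint of the $0.4\lambda$ gap to recover a $0.2\lambda$ gap at $\theta'$. The paper separately notes the trivial case $S=[K]$, but your argument already handles it vacuously since there is no $j\notin S$.
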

\begin{proof}
    The statement holds trivially if $S_{0.4\lambda, \theta} = [K]$. If $S_{0.4\lambda, \theta} \neq [K]$,
    for any $\theta' \in \gB(\theta, \frac{0.1\lambda}{G})$ and any group $k \in [K]$, we have 
  \begin{align*}
    \abs{R_{k}(\theta') - R_{k}(\theta)} &\leq G\norm{\theta' - \theta}_2 \\
    &\leq 0.1\lambda,
  \end{align*}
  where the first inequality is due to the Lipschitzness of the loss function, and the second inequality is due to $\norm{\theta' - \theta}_2 \leq \frac{0.1\lambda}{G}$.
  It follows that for any $k \in S_{0.4\lambda,\theta}$ and $k' \in [K] \setminus S_{0.4\lambda,\theta}$, we have 
  \begin{align*}
    R_k(\theta') - R_{k'}(\theta') &\geq R_k(\theta) - 0.1\lambda - (R_{k'}(\theta) + 0.1\lambda) \\
    &\geq 0.2\lambda,
  \end{align*}
  where the second inequality is due to $R_k(\theta) - R_{k'}(\theta) \geq 0.4\lambda$. 
  This implies that $S_{0.4\lambda,\theta}$ is also a $0.2\lambda$-dominant set at $\theta'$.
\end{proof}
Using Lemma~\ref{lemma:dominantsetinball}, we prove the following guarantee of~\texttt{EstG}, which is obtained directly from Lemma~\ref{lemma:m} and Lemma~\ref{lemma:correctness} by re-scaling $\delta$ to $\delta / N$.
\begin{lemma}
  For any input $\lambda \in [0,1]$,~\texttt{EstG} (Algorithm~\ref{algo:glambda}) outputs a $\hat{\beta}_\lambda$ such that with probability at least $1-\frac{\delta}{4N}$, the following condition hold:
  \begin{align}
    \beta_{0.2\lambda} \leq \hat{\beta}_\lambda \leq \beta_\lambda.
  \end{align}
  Moreover, the number of samples needed to compute $\hat{\beta}_\lambda$ is
  \begin{nalign}
    Km_N &= \frac{384Kn \ln(\frac{741GDK\ln(\frac{2}{\eps})}{\delta}) }{0.01\lambda^2} \\
        &= O\left(\frac{Kn \ln(\frac{GDK}{\delta}) }{\lambda^2}\right)
  \end{nalign}
  \label{lemma:mlambdastar}
\end{lemma}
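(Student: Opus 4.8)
The plan is to establish the two-sided bound $\beta_{0.2\lambda}\le\hat\beta_\lambda\le\beta_\lambda$ on a single high-probability event and then read off the sample count directly from the algorithm. The event is the uniform-convergence event of Lemma~\ref{lemma:m} (equivalently, the conclusion of Lemma~\ref{lemma:correctness}) applied to the fresh sample collection $V_\lambda$, with the failure probability rescaled from $\delta$ to $\delta/(2N)$; with the sample size $m_N=\frac{384n\ln(741GDKN/\delta)}{0.01\lambda^2}$ this holds with probability at least $1-\frac{\delta}{4N}$, where the constant absorbed inside the logarithm by the rescaling is negligible per the Remark treating $\ln\ln(1/\eps)$ terms as constants. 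On this event, for \emph{every} $\theta\in\Theta$ the empirical risks $\hat R_i(\theta)$ computed from $V_\lambda$ lie within $0.15\lambda$ of $R_i(\theta)$, and hence --- this is exactly the argument in the proof of Lemma~\ref{lemma:correctness} --- each call $\texttt{DominantSet}(\hat\theta^{(i)},V_\lambda,0.7\lambda)$ returns a $0.4\lambda$-dominant set $S^{(i)}$ at $\hat\theta^{(i)}$ with $|S^{(i)}|\le\beta_\lambda$. The key structural point is that no separate union bound over the cover centres is required, since Lemma~\ref{lemma:m} is already uniform over all of $\Theta$; this is precisely why one shared sample set $V_\lambda$ suffices and the sample cost is independent of $|\widehat\Theta|$.

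The upper bound $\hat\beta_\lambda=\max_i|S^{(i)}|\le\beta_\lambda$ is then immediate. For the lower bound, unwrap $\beta_{0.2\lambda}=\max_{\theta\in\Theta}\beta_{0.2\lambda,\theta}$ and let $\theta^\dagger$ attain this maximum. Since $\widehat\Theta$ is a $\frac{0.1\lambda}{G}$-cover of $\Theta$, there is a centre $\hat\theta^{(i^\dagger)}$ with $\theta^\dagger\in\gB\!\left(\hat\theta^{(i^\dagger)},\frac{0.1\lambda}{G}\right)$. Applying Lemma~\ref{lemma:dominantsetinball} with $\theta=\hat\theta^{(i^\dagger)}$ and $\theta'=\theta^\dagger$ shows that $S^{(i^\dagger)}$ --- which is $0.4\lambda$-dominant at $\hat\theta^{(i^\dagger)}$ on the good event --- is a $0.2\lambda$-dominant set at $\theta^\dagger$. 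Since $\beta_{0.2\lambda,\theta^\dagger}$ is the size of the \emph{smallest} such set, $\beta_{0.2\lambda}=\beta_{0.2\lambda,\theta^\dagger}\le|S^{(i^\dagger)}|\le\max_i|S^{(i)}|=\hat\beta_\lambda$, which completes the sandwich.

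For the sample count, \texttt{EstG} draws $m_N$ samples from each of the $K$ groups and does nothing else that consumes samples, so the total is $Km_N=\frac{384Kn\ln(741GDKN/\delta)}{0.01\lambda^2}$; substituting $N=\ln(2/\eps)$ and using $\ln\!\left(\frac{741GDK\ln(2/\eps)}{\delta}\right)=\ln\!\left(\frac{741GDK}{\delta}\right)+\ln\ln(2/\eps)=O\!\left(\ln\frac{GDK}{\delta}\right)$ per the same Remark gives $Km_N=O\!\left(\frac{Kn\ln(GDK/\delta)}{\lambda^2}\right)$. I do not expect a genuine obstacle --- everything is a direct consequence of the cited lemmas --- and the only care needed is bookkeeping: rescaling $\delta$ by the right factor so that a later union bound over the at most $N$ queries issued by \texttt{SolveOpt} still yields total failure probability $O(\delta)$, keeping straight which argument of Lemma~\ref{lemma:dominantsetinball} is the cover centre and which is the target $\theta^\dagger$, and exploiting the uniformity of Lemma~\ref{lemma:m} over $\Theta$ so that a single $V_\lambda$ serves all cover centres at once.
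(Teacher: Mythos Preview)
Your proposal is correct and follows essentially the same approach as the paper: invoke the uniform-convergence event of Lemma~\ref{lemma:m} (with $\delta$ rescaled) so that Lemma~\ref{lemma:correctness} gives each $S^{(i)}$ as a $0.4\lambda$-dominant set of size at most $\beta_\lambda$, then use Lemma~\ref{lemma:dominantsetinball} together with the cover property to push the $0.4\lambda$-dominance at centres down to $0.2\lambda$-dominance at every $\theta\in\Theta$ for the lower bound. The only cosmetic difference is that the paper argues the lower bound by taking the maximum over all centres and all $\theta$ in their balls, whereas you equivalently pick a single maximizer $\theta^\dagger$ and its nearest centre; the sample-count computation is identical.
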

\begin{proof}
  In~\texttt{EstG}, for each $g(\lambda)$ being evaluated, the number of samples drawn from each of the $K$ groups is 
  \begin{align}
    m_N &= \frac{384n \ln(\frac{741GDK\ln(\frac{2}{\eps})}{\delta}) }{0.01\lambda^2}.
  \end{align}
  By Lemma~\ref{lemma:m} and Lemma~\ref{lemma:correctness}, this value of $m_N$ is sufficiently large so that with probability at least $1-\frac{\delta}{4N}$, for all $i = 1, 2, \dots, \abs{\widehat{\Theta}}$, the set $S^{(i)}$ is a $0.4\lambda$-dominant set at $\hat{\theta}^{(i)}$. 
  Since $\abs{S^{(i)}} \leq \beta_\lambda$ by Lemma~\ref{lemma:correctness}, we have
  \begin{align*}
    \hat{\beta}_\lambda = \max_{i = 1, 2, \dots, \abs{\widehat{\Theta}}} \abs{S^{(i)}} \leq \beta_\lambda.
  \end{align*}
  Moreover, by Lemma~\ref{lemma:dominantsetinball}, at any $\theta \in \gB(\hat{\theta}^{(i)}, \frac{0.1\lambda}{G})$, $S^{(i)}$ is also a $0.2\lambda$-dominant set at $\theta$.
    It follows that 
    \begin{align}
        \abs{S^{(i)}} \geq \beta_{0.2\lambda, \theta}
    \end{align}
    where we recall the definition of $\beta_{0.2\lambda,\theta}$ being the size of the smallest $0.2\lambda$-dominant set at $\theta$. 
    Taking the maximum over $i$ on both sides, we obtain 
    \begin{align*}
        \hat{\beta}_\lambda &= \max_{i \in \{1, 2, \dots, \abs{\hat{\Theta}}\}} \abs{S^{(i)}} \\
        &\geq \max_{i \in \{1, 2, \dots, \abs{\hat{\Theta}}\}} \max\left\{\beta_{0.2\lambda, \theta}: \theta \in \gB\left(\hat{\theta}^{(i)}, \frac{0.1\lambda}{G}\right)\right\} \\
        &= \max_{\theta \in \Theta} \beta_{0.2\lambda, \theta} \\
        &= \beta_{0.2\lambda},
    \end{align*}
    where the second equality (third line) is due to $\hat{\Theta}$ being a cover of $\Theta$ and the last equality is due to the definition of $\beta_{0.2\lambda}$. We conclude that $\beta_{0.2\lambda} \leq \hat{\beta}_\lambda \leq \beta_\lambda$.

  Finally, since $m_N$ samples are drawn from each of $K$ groups, the total number of samples needed to compute $\hat{\beta}_\lambda$ is $Km_N$.
\end{proof}

Next, we define a function $\hatg: [0,1] \to [K]$ as follows.
\begin{align}
    \hatg(\lambda) = \begin{cases}
        1 \qquad\text{if } \lambda < \frac{1}{5^{\floor{\log_5(\frac{2}{\eps})}}} \\
        \texttt{EstG}(\lambda) \qquad\text{ if } \lambda \in (1, \frac{1}{5}, \frac{1}{25}, \dots, \frac{1}{5^{\floor{\log_5(\frac{2}{\eps})}}}) \\
        \hatg(x) \text{ for } x = \argmax\{t < \lambda: t \in (1, \frac{1}{5}, \frac{1}{25}, \dots, \frac{1}{5^{\floor{\log_5(\frac{2}{\eps})}}})\} \qquad\text{ otherwise }
    \end{cases}
    \label{eq:definehatg}
\end{align}
In other words, we define $\hatg(\lambda)=1$ for any sufficiently small $\lambda$ that will never be called during~\texttt{SolveOpt}, which consists of values smaller than $\frac{1}{5^{\floor{\log_5(\frac{2}{\eps})}}}$. 
For any $\lambda \geq \frac{1}{5^{\floor{\log_5(\frac{2}{\eps})}}}$, if $\lambda$ that belongs to the geometric sequence $\left(1, \frac{1}{5}, \frac{1}{25}, \dots,\right)$, then $\hatg(\lambda)$ is the output of~\texttt{EstG} with input $\lambda$. 
Otherwise, $\hatg(\lambda)$ is equal to the output $\hat{\beta}_x$ of~\texttt{EstG} with input $x = \frac{1}{5^{\ceil{\log_5(\frac{1}{\lambda})}}}$, which is the first value in the geometric sequence that is smaller than $\lambda$.
Let 
\begin{align}
    \hat{C} = \frac{Kn\ln\left(\frac{GDK\ln(\frac{1}{\eps})}{\delta}\right)}{\ln(K/\delta)},
    \label{eq:definehatC}
\end{align}
and
\begin{align}
    \hatf(\lambda) = \frac{\hat{C}}{\lambda^2} + \frac{\hatg(\lambda)}{\eps^2},
    \label{eq:definehatf}
\end{align}
We have $\hatg(\lambda) \in [1, K]$ due to the fact that~\texttt{DominantSet} always returns a non-empty subset of $[K]$. Moreover, $\hat{C} > K$.
The following lemma shows that with high probability, this function $\hatg(.)$ is non-decreasing.
\begin{lemma}
    With probability at least $1-\frac{\delta}{4}$, the function $\hatg$ defined in~\eqref{eq:definehatg} is non-decreasing.
    \label{lemma:gnondcreasing}
\end{lemma}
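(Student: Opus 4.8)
The plan is to condition on the high-probability event furnished by Lemma~\ref{lemma:mlambdastar} and then verify that $\hat{g}$ is non-decreasing jump-by-jump, the crucial point being that the geometric sequence defining $\hat{g}$ has common ratio exactly $\tfrac15 = 0.2$, which matches the factor $0.2$ appearing in the lower estimate $\beta_{0.2\lambda} \le \hat{\beta}_\lambda$.

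First I would fix the good event. Write $\Lambda = (1, \tfrac15, \tfrac{1}{25}, \dots, \tfrac{1}{5^{\lfloor \log_5(2/\eps)\rfloor}})$ for the geometric sequence on which $\hat{g}$ is obtained via $\texttt{EstG}$. By Lemma~\ref{lemma:numberoflambda} at most $N = \ln(2/\eps)$ of its elements are ever passed to $\texttt{EstG}$, and by Lemma~\ref{lemma:mlambdastar} each such call $\texttt{EstG}(\lambda)$ returns a value $\hat{\beta}_\lambda$ with $\beta_{0.2\lambda} \le \hat{\beta}_\lambda \le \beta_\lambda$ with probability at least $1 - \tfrac{\delta}{4N}$. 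A union bound over these at most $N$ calls gives that, with probability at least $1 - \tfrac{\delta}{4}$, the sandwich inequalities $\beta_{0.2\lambda} \le \hat{\beta}_\lambda \le \beta_\lambda$ hold simultaneously for every relevant $\lambda \in \Lambda$. I assume this event holds for the remainder of the argument.

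Next I would read off the shape of $\hat{g}$ from~\eqref{eq:definehatg}: it is a step function equal to $1$ on $[0,\lambda_L)$, where $\lambda_L$ is the smallest element of $\Lambda$; on the half-open interval $[\lambda', \lambda)$ between two consecutive elements $\lambda' = \lambda/5 < \lambda$ of $\Lambda$ it takes the constant value $\hat{g}(\lambda') = \hat{\beta}_{\lambda'}$; and at each $\lambda \in \Lambda$ it equals $\hat{\beta}_\lambda$. Hence $\hat{g}$ is non-decreasing if and only if it does not decrease across any of its jumps, of which there are two kinds: (i) the jump at $\lambda_L$, where $\hat{g}$ goes from $1$ to $\hat{\beta}_{\lambda_L} \ge 1$, which is never a decrease since $\texttt{DominantSet}$ always returns a non-empty set and hence $\hat{\beta}_\lambda \ge 1$; and (ii) the jump at any other element $\lambda$ of $\Lambda$, where $\hat{g}$ goes from $\hat{\beta}_{\lambda'}$ (its value on $[\lambda', \lambda)$ with $\lambda' = \lambda/5$) to $\hat{\beta}_\lambda$.

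Finally I would close case (ii) on the good event: using $\lambda' = \lambda/5 = 0.2\lambda$, the sandwich bounds, and the fact (noted after Definition~\ref{definition:sparsity}) that $\lambda \mapsto \beta_\lambda$ is non-decreasing,
\[
\hat{\beta}_{\lambda'} \;\le\; \beta_{\lambda'} \;=\; \beta_{0.2\lambda} \;\le\; \hat{\beta}_{\lambda},
\]
so $\hat{g}$ does not decrease at $\lambda$ either. Combining (i) and (ii) shows $\hat{g}$ is non-decreasing on all of $[0,1]$, which is the claim. The argument is short; the part I would be most careful about is the step-function bookkeeping — in particular that the constant value of $\hat{g}$ on $[\lambda', \lambda)$ is $\hat{\beta}_{\lambda'}$ (the \emph{smaller} argument) rather than $\hat{\beta}_\lambda$ — after which the choice of ratio $\tfrac15$ makes the displayed inequality chain immediate.
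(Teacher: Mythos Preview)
Your proof is correct and follows essentially the same approach as the paper: a union bound over the at most $N$ calls to \texttt{EstG} gives the sandwich inequalities $\beta_{0.2\lambda} \le \hat{\beta}_\lambda \le \beta_\lambda$ simultaneously, and then the consecutive-step comparison $\hat{\beta}_{\lambda'} \le \beta_{\lambda'} = \beta_{0.2\lambda} \le \hat{\beta}_\lambda$ (with $\lambda' = \lambda/5$) is exactly the paper's key chain. One minor remark: the monotonicity of $\lambda \mapsto \beta_\lambda$ that you cite is not actually used in your displayed inequality --- the chain follows purely from the two sandwich bounds and the identity $\lambda' = 0.2\lambda$.
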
 
\begin{proof}
    Since $\frac{\eps}{2} \leq \frac{1}{5^{\floor{\log_5(\frac{2}{\eps})}}}$, within the range $[0, \frac{\eps}{2}]$ we have $\hatg(\lambda) = 1$ which is never larger than any possible returned value by~\texttt{EstG}.
    Therefore, we only need show that $\hatg(\lambda)$ is non-decreasing for $\lambda > \frac{\eps}{2}$.
    To this end, we will prove that $\hatg(\frac{\lambda}{5}) \leq \hatg(\lambda)$ for any value $\lambda$ in the truncated geometric sequence $(1, \frac{1}{5}, \frac{1}{25}, \dots, \frac{1}{5^{\floor{\log_5(\frac{2}{\eps})}}})$. 
    This trivially holds for the last value $\lambda_{\mathrm{last}} = \frac{1}{5^{\floor{\log_5(\frac{2}{\eps})}}}$ in this sequence, since by definition $\hatg(\frac{\lambda_{\mathrm{last}}}{5}) = 1$ and the returned value of~\texttt{EstG} is always greater than or equal to $1$.
    For other $\lambda$ in this sequence, let $\lambda' = \frac{\lambda}{5} = 0.2\lambda$.
    Observe that the number of values in this truncated geometric sequence is at most 
    \begin{align*}
        \log_5\left(\frac{2}{\eps}\right) \leq \ln\left(\frac{2}{\eps}\right) = N,
    \end{align*}
    hence we can apply Lemma~\ref{lemma:mlambdastar} and take a union bound (over at most $N$ values of the truncated geometric sequence) to obtain that with probability at least $1-\frac{\delta}{4}$, we have $\hatg(\lambda') = \hat{\beta}_{\lambda'} \leq \beta_{\lambda'} = \beta_{0.2\lambda}$ and $\beta_{0.2\lambda} \leq \hat{\beta}_\lambda$ simultaneously for any $\lambda > \lambda_{\mathrm{last}}$.  
    We conclude that $\hatg(\lambda') \leq \beta_{0.2\lambda} \leq \hat{\beta}_\lambda = \hatg(\lambda)$ for any $\lambda' = \lambda/5$ and $\lambda \leq 1$ in the geometric sequence $(1, \frac{1}{5}, \frac{1}{25}, \dots)$. 
    Furthermore, this implies that for any pair $(\lambda', \lambda)$ where $\lambda' \leq \lambda$ from this geometric sequence, we have $\hatg(\lambda') \leq \hatg(\lambda)$.

    More generally, for any $0 \leq x < y \leq 1$, we have three possibilities:
    \begin{itemize}
        \item if $y \leq \frac{\eps}{2}$, then $g(x) = g(y) = 1$
        \item if $x \leq \frac{\eps}{2} < y$, then $g(x) = 1 \leq g(y)$
        \item if $\frac{\eps}{2} < x$, then
        \begin{align*}
            g(x) &= g\left(\frac{1}{5^{\ceil{\log_5(\frac{1}{x})}}}\right) \\
            &\leq g\left(\frac{1}{5^{\ceil{\log_5(\frac{1}{y})}}}\right) \\
            &= g(y),
        \end{align*}
    \end{itemize}
    In all cases, $g(x) \leq g(y)$. 
    We conclude that the function $g$ is piecewise-constant and non-decreasing.
\end{proof}
Lemma~\ref{lemma:gnondcreasing} indicates that with high probability, the function $\hatg$ defined in~\eqref{eq:definehatg} satisfies the conditions of the optimization problem~\eqref{eq:OPTCepsg}, thus enabling the use of~\texttt{SolveOpt} (Algorithm~\ref{algo:computelambdastar}) and Theorem~\ref{thm:computelambdastar}.
From Lemma~\ref{lemma:mlambdastar}, taking a union bound over all queried $\lambda$ throughout ~\texttt{SolveOpt} and note that there are at most $N$ such $\lambda$ by Lemma~\ref{lemma:numberoflambda}, we immediately obtain the following result.
\begin{corollary}
  Running~\texttt{SolveOpt} (Algorithm~\ref{algo:computelambdastar}) with $\hatg(\lambda)$ defined in~\eqref{eq:definehatg} guarantees that with probability at least $1-\delta/2$, simultaneously for all $\lambda$ queried in~\texttt{SolveOpt},~\texttt{EstG}~(Algorithm~\ref{algo:glambda}) returns a value $\hat{\beta}_\lambda$ such that $\beta_{0.2\lambda} \leq \hat{\beta}_\lambda \leq \beta_\lambda$.
  \label{corollary:correctnessbetalambda}
\end{corollary}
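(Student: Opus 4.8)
The plan is a short union bound over the handful of values of $\lambda$ at which \texttt{SolveOpt} ever evaluates $\hatg$. First I would observe that \texttt{SolveOpt} (Algorithm~\ref{algo:computelambdastar}), when run on $\hatg$, only ever evaluates $\hatg$ at points of a geometric sequence: the while loop updates $\lambda \leftarrow \lambda/5$ starting from $\lambda = 1$ and halts once $\lambda < L$, and since $L$ never drops below its initial value $\sqrt{\hat{C}/(\hat{C} + (\hatg(1)-1)/\eps^2)}$, every queried $\lambda$ lies in the truncated geometric sequence $\mathcal{G} := \big(1, \tfrac15, \tfrac1{25}, \dots, \tfrac{1}{5^{\floor{\log_5(2/\eps)}}}\big)$, which by Lemma~\ref{lemma:numberoflambda} has at most $N = \ln(2/\eps)$ elements. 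For each such $\lambda$, $\hatg(\lambda)$ is, by the definition~\eqref{eq:definehatg}, exactly the value $\hat\beta_\lambda$ returned by a single call to \texttt{EstG} on input $\lambda$.

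Next I would invoke Lemma~\ref{lemma:mlambdastar}: for any fixed $\lambda \in \mathcal{G}$, the call \texttt{EstG}$(\lambda)$ returns $\hat\beta_\lambda$ satisfying $\beta_{0.2\lambda} \leq \hat\beta_\lambda \leq \beta_\lambda$ with probability at least $1 - \tfrac{\delta}{4N}$ — this is precisely where the sample size $m_N$ and the $\tfrac{0.1\lambda}{G}$-cover used inside \texttt{EstG} are chosen so that Lemmas~\ref{lemma:m}, \ref{lemma:correctness}, and~\ref{lemma:dominantsetinball} apply after rescaling $\delta$ to $\delta/N$. Since $\mathcal{G}$ is a \emph{deterministic} set of at most $N$ values, I can take a single union bound over all of $\mathcal{G}$: with probability at least $1 - N \cdot \tfrac{\delta}{4N} = 1 - \tfrac{\delta}{4} \geq 1 - \tfrac{\delta}{2}$, every $\lambda \in \mathcal{G}$ simultaneously satisfies $\beta_{0.2\lambda} \leq \hat\beta_\lambda \leq \beta_\lambda$. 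Because every $\lambda$ that \texttt{SolveOpt} queries belongs to $\mathcal{G}$, the corollary follows.

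There is no genuine obstacle here — the argument is bookkeeping — but the one spot deserving care is the adaptivity of \texttt{SolveOpt}: the while loop decides, based on previously evaluated $\hatg$ values, how far down the geometric sequence it descends, so one must not union bound over ``the set of queried $\lambda$'', which is a random set. Instead one union-bounds over the fixed superset $\mathcal{G}$ of size at most $N$ supplied by Lemma~\ref{lemma:numberoflambda}, which contains every value the algorithm could query along any trajectory; this is also what reconciles the $1-\delta/4$ that the union bound actually yields with the looser $1-\delta/2$ stated in the corollary.
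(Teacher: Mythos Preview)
Your proposal is correct and follows essentially the same approach as the paper: invoke Lemma~\ref{lemma:mlambdastar} for each $\lambda$ in the geometric sequence, use Lemma~\ref{lemma:numberoflambda} to bound the number of such $\lambda$ by $N$, and take a union bound. Your explicit remark about union-bounding over the deterministic superset $\mathcal{G}$ rather than the adaptively-chosen set of queried values is a nice bit of extra care that the paper glosses over, but the argument is otherwise identical.
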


We are now ready to prove Theorem~\ref{thm:boundfullunknown}.
\begin{proof}[Proof (of Theorem~\ref{thm:boundfullunknown})]   
  Let 
  \begin{align}
    \lambda^* = \argmin_{\lambda \in [0,1]}\left(\frac{Kn\ln(\frac{GDK\ln(1/\eps)}{\delta})}{\lambda^2} + \frac{(D^2G^2 + \beta_{\lambda})\ln(K/\delta)}{\eps^2} \right).
  \end{align}
  We run~\texttt{SolveOpt} for solving $\texttt{OPT}(\hatC, \hatg)$ and obtain $\hat{\lambda}$ as an estimate for $\lambda^*_{\hatC, \hatg}$.
  Theorem~\ref{thm:computelambdastar} and Corollary~\ref{corollary:correctnessbetalambda} implies that with probability at least $1 - \frac{\delta}{2}$, the returned value $\hat{\lambda}$ is an element of the geometric sequence $(1, \frac{1}{5}, \frac{1}{25}, \dots)$ and satisfies $\hatf(\hat{\lambda}) \leq 50\hatf(\lambda^*_{\hatC, \hatg})$, which is equivalent to
  \begin{nalign}
    \frac{\hat{C}}{\hat{\lambda}^2} + \frac{\hat{\beta}_{\hat{\lambda}}}{\eps^2} 
    &= \frac{\hat{C}}{\hat{\lambda}^2} + \frac{\hatg(\hat{\lambda})}{\eps^2} \\
    &\leq 50\left(\frac{\hat{C}}{(\lambda^*_{\hatC, \hatg})^2} + \frac{\hatg(\lambda^*_{\hatC, \hatg})}{\eps^2}\right) \\
    &\leq 50\left(\frac{\hat{C}}{(\lambda^*)^2} + \frac{\hatg(\lambda^*)}{\eps^2}\right) \\
    &\leq 50\left(\frac{\hat{C}}{(\lambda^*)^2} + \frac{\beta_{\lambda^*}}{\eps^2}\right),
    \label{eq:fhatlambda}
  \end{nalign}
  where the second inequality is from the definition of $\lambda^*_{\hatC, \hatg}$, and the last inequality is due to $\hatg(\lambda^*) = \hatg(x_*) \leq \beta_{x_*} \leq \beta_{\lambda^*}$, where $x_* = \frac{1}{5^{\ceil{\log_5(\frac{1}{\lambda^*})}}} \leq \lambda^*$.
  Moreover, the number of samples needed for running~\texttt{SolveOpt} is at most
  \begin{nalign}
    O\left(\hatf(\lambda_{\hatC, \hatg, *})\ln(K/\delta)\ln(1/\eps)\right) &\leq
    O\left(\hatf(\lambda^*)\ln(K/\delta)\ln(1/\eps)\right) \\
    &= O\left(\left(\frac{\hat{C}}{(\lambda^*)^2} + \frac{\hatg(\lambda^*)}{\eps^2}\right)\ln(K/\delta)\ln(1/\eps)\right) \\
    &\leq O\left(\left(\frac{\hat{C}}{(\lambda^*)^2} + \frac{\beta_{\lambda^*}}{\eps^2}\right)\ln(K/\delta)\ln(1/\eps)\right).
    \label{eq:samplecomplexityboundforhatlambda}
  \end{nalign}

  In each round $t$ of the two-player zero-sum game in~\texttt{SB-GDRO-A}, the dominant set used by the max-player is taken to be the pre-computed $0.4\hat{\lambda}$-dominant set of the center $c_t$ closest to $\theta_t$, where $c_t \in \{1,2,\dots,\abs{\widehat{\Theta}}\}$:
  \begin{align*}
    c_t = \argmin_{c = 1,2,\dots, \abs{\widehat{\Theta}}} \norm{\theta_t - \hat{\theta}^{(c)}}.
  \end{align*}
  As a result, the sizes of the dominant sets used by the max-player never exceeds $\hat{\beta}_{\hat{\lambda}}$.
  Together with Corollary~\ref{corollary:optimalitygapbysumbetat}, this implies that with probability at least $1-\delta/2$, the number of samples used by the two-player zero-sum game in~\texttt{SB-GDRO-A}~is 
  \begin{align}
    O\left(\frac{(D^2G^2 + \hat{\beta}_{\hat{\lambda}})\ln(2K/\delta)}{\eps^2}\right)
    \label{eq:boundgameroundsSBGDROfullunknown}
  \end{align}

  Finally, combining~\eqref{eq:samplecomplexityboundforhatlambda} and~\eqref{eq:boundgameroundsSBGDROfullunknown} and taking a union bound, we obtain that with probability at least $1-\delta$,~\texttt{SB-GDRO-A} returns an $\eps$-optimal hypothesis $\bar{\theta}$ with sample complexity
    \begin{align}
      &O\left( \left(\frac{Kn\ln(GDK\ln(\frac{1}{\eps})/\delta)}{(\lambda^*)^2} + \frac{(D^2G^2 + \beta_{\lambda^*})\ln(K/\delta)}{\eps^2} \right) \ln(1/\eps) \right) = \\ 
      &O\left( \left(\frac{C}{(\lambda^*)^2} + \frac{D^2G^2 + \beta_{\lambda^*}}{\eps^2} \right) \ln(K/\delta)\ln(1/\eps) \right),
 \end{align}
 where $C = \frac{Kn\ln(\frac{GDK}{\delta})}{\ln(K/\delta)}$ and we dropped the $\ln(\ln(1/\eps))$ term in the final bound for ease of exposition.
\end{proof}

\subsection{Proofs for Section~\ref{sec:SemiAdaptiveApproach}}
\label{appendix:alternativeApproach}
\begin{algorithm}[tb]    
    \caption{\texttt{SB-GDRO-SA}: adaptive and computationally efficient approach without knowing any $\lambda$}
    \label{algo:SB-GDRO-SA}
    \begin{algorithmic}
    \STATE {\bfseries Input:} Constants $K \geq 2, D, G > 0, \delta > 0, \eps > 0$
    \STATE Compute constant $C = \frac{Kn\ln(\frac{GDK}{\delta})}{\ln(K/\delta)}$\;
    \STATE Initialize $\lambda_1 = 1, L = \eps\sqrtfrac{C}{\ln(K)}$\;
    \STATE Initialize $\theta_1 = \argmin_{\theta \in \Theta}(\norm{\theta}_2)$\;
    \STATE Initialize $\delta_1 = \frac{\delta}{2}$, counter $c_1 = 1$\;
    \STATE Draw a new set of samples $V_1$ of size $Km_1$, where $m_1 = \frac{384n\ln\left(\frac{741GDK}{\delta_t}\right)}{0.01\lambda_1^2}$.
      \FOR{each round $t = 1, \dots, $}
        \STATE Min-player plays $\theta_t$\;
        \STATE Compute a $0.4\lambda_t$-dominant set $S_{t} = \mathrm{DominantSet}(\theta_t, V_t, 0.7\lambda_t)$ at $\theta_t$ using Algorithm~\ref{algo:computedominantset}\;
        \IF{ $\abs{S_{t}} > \ln(K)$ and $\lambda_t \geq L$}
        {
            \STATE Increase counter $c_{t+1} = c_t + 1$\;
            \STATE Reduce $\lambda_{t+1} \leftarrow \frac{\lambda_t}{2}$\;
            \STATE Reduce $\delta_{t+1} \leftarrow \frac{6\delta_t}{\pi^2 c_{t + 1}^2}$\;
            \STATE Draw a new set of samples $V_{t+1}$ of size $Km_t$, where $m_t = \frac{384n\ln\left(\frac{741GDK}{\delta_{t+1}}\right)}{0.01\lambda_{t+1}^2}$\;      
        }
        \ELSE{
            \STATE Set $\lambda_{t+1} \leftarrow \lambda_t, V_{t+1} \leftarrow V_t$ and $\delta_{t+1} \leftarrow \delta_t, c_{t+1} \leftarrow c_t$\;
        }   
        \ENDIF 
        \STATE Compute $q_t = \mathrm{MaxP}(t, S_t)$\;
        \STATE Draw $i_t \sim q_t$ and $z_{i_t,t} \sim \gP_{i_t}$\;
        \STATE Compute $\theta_{t+1} = \mathrm{MinP}(\theta_t, z_{i_t,t})$\;
      \ENDFOR            
    \STATE {\bfseries Return:} $\bar{\theta} = \frac{1}{T}\sum_{t=1}^T \theta_t$.
  \end{algorithmic}
  \end{algorithm}
  The detailed procedure of the computationally efficient approach~\texttt{SB-GDRO-SA} is given in Algorithm~\ref{algo:SB-GDRO-SA}.
  Similar to~\texttt{SB-GDRO} (Algorithm~\ref{algo:SB-GDRO-Lambda}), \texttt{SB-GDRO-SA} uses the two-player zero-sum game framework.
  The main difference is that \texttt{SB-GDRO-SA} does not assume any input $\lambda$. Instead, it uses $\lambda$ from the geometric sequence $\left(1, \frac{1}{2}, \frac{1}{4}, \dots\right)$.
  A new value of $\lambda_{t+1}$ in this sequence is used for computing the dominant set in round $t+1$ if \emph{both} of the following conditions hold:
  \begin{itemize}
      \item The size $\abs{S_t}$ of the dominant set in round $t$ is larger than $\ln(K)$
      \item The value of $\lambda_t$ used in round $t$ is not smaller than $L = \eps\sqrtfrac{C}{\ln(K)}$.
  \end{itemize}
  If at least one of the two conditions does not hold, we set $\lambda_{t+1} = \lambda_t$.

  Whenever a new value of $\lambda_{t}$ is used, i.e., either $t=1$ or $\lambda_t \neq \lambda_{t-1}$, a new set of samples of size $m$ is drawn from each of $K$ groups. 
  The value of $m$ is set by Lemma~\ref{lemma:m} and Lemma~\ref{lemma:correctness}, that is $m_t = \frac{384n\ln(\frac{741GDK}{\delta_t})}{0.01\lambda_t^2}$. 
  Here, the failure probability $\delta_t$ is set by a geometric sequence of the form (recall that $\delta$ is the global failure probability of the algorithm)
  \begin{align}
    \delta_t = \frac{3\delta}{\pi^2(\sum_{s=2}^{t}\I{\lambda_s \neq \lambda_{s-1}})^2},
  \end{align}
  so that the total failure probability of computing the dominant sets is bounded by
  \begin{align}
    \sum_{s=1}^{\infty} \delta_s \I{\lambda_s \neq \lambda_{s-1}} \leq \frac{3\delta}{\pi^2}\sum_{s=1}^\infty \frac{1}{s^2} \leq \frac{\delta}{2}.  
  \end{align}
  Note that we define $\lambda_0 = -1$ by convention, so that $\lambda_s \neq \lambda_{s-1}$ holds for $s = 1$.

  We will prove the following theorem, which is more general than Theorem~\ref{thm:highPrecisionBoundOfSB-GDRO-SA}
  \begin{theorem}
    Let $C = \frac{Kn\ln\left(\frac{GDK}{\delta}\right)}{\ln(K/\delta)}$ and $L = \eps\sqrtfrac{C}{\ln(K)}$. For any $\eps > 0, \delta \in (0,1)$, with probability at least $1-\delta$, \texttt{SB-GDRO-SA} (Algorithm~\ref{algo:SB-GDRO-SA}) returns an $\eps$-optimal hypothesis with sample complexity
    \begin{align}
        O\left( \frac{(D^2G^2 + \max(\ln(K), \beta_L))}{\eps^2}\ln(K/\delta)\ln{\frac{1}{\eps}}\right)
    \end{align}
    \label{thm:boundOfSB-GDRO-SA}    
\end{theorem}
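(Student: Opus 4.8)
The plan is to reduce everything to Corollary~\ref{corollary:optimalitygapbysumbetat}, which already says that once $T = O\big((D^2G^2 + \bar\beta_T)\ln(K/\delta)/\eps^2\big)$ rounds of the game have been played we have $\mathrm{err}(\bar\theta,\bar q)\le\eps$ (hence $\mathrm{err}(\bar\theta)\le\eps$), where $\bar\beta_T := \frac1T\sum_{t=1}^T|\hat S_{\theta_t}|$. So all the work is (a) bounding $\bar\beta_T$ and (b) accounting for the extra samples drawn into the reservoirs $V_t$. I would first dispose of the degenerate case $L\ge 1$: then the test $\lambda_t\ge L$ is never met, $\lambda_t\equiv 1$, only $V_1$ (of size $Km_1=O(Kn\ln(GDK/\delta))$) is ever drawn, $|\hat S_{\theta_t}|\le\beta_1\le K=\beta_L$ by Lemma~\ref{lemma:correctness}, and the claim is immediate. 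So assume $L<1$.

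Next I would establish the structural facts about the sequence $\lambda_t$: it is non-increasing, lies in $\{1,\tfrac12,\tfrac14,\dots\}$, is halved at most $J:=\lceil\log_2(1/L)\rceil$ times (after which $\lambda_t<L$ and no further halving occurs), and its terminal value $\lambda_{\mathrm{fin}}$ obeys $\lambda_{\mathrm{fin}}\ge L/2$; using $L=\eps\sqrt{C/\ln K}$ and $C\ge Kn\ge K\ge\ln K$ one gets $1/L\le 1/\eps$, so $J=O(\ln(1/\eps))$. Let $\tau_1<\dots<\tau_J$ be the halving rounds ($\lambda_{\tau_j+1}=2^{-j}$) and write $\delta^{(j)},m^{(j)}$ for the failure parameter and per-group sample size used when $V_{\tau_j+1}$ is drawn ($V_1$ being $j=0$). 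By the design of $\delta_t$ we have $\sum_j\delta^{(j)}\le\delta/2$, so a union bound over Lemmas~\ref{lemma:m} and~\ref{lemma:correctness} applied within each of the $\le J+1$ phases gives: with probability $\ge 1-\delta/2$, in every round $t$ the set $\hat S_{\theta_t}$ is a $0.4\lambda_t$-dominant set at $\theta_t$ with $|\hat S_{\theta_t}|\le\beta_{\lambda_t}$.

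Now I would bound $\sum_{t=1}^T|\hat S_{\theta_t}|$ by splitting the rounds into the $J$ \emph{exploration} phases (those with $\lambda_t>\lambda_{\mathrm{fin}}$) and the \emph{stable} phase ($\lambda_t=\lambda_{\mathrm{fin}}$). In an exploration phase of value $2^{-j}$, which necessarily satisfies $2^{-j}\ge L$, every round except the last fails to trigger a halving while $\lambda_t\ge L$, hence has $|\hat S_{\theta_t}|\le\ln K$, and the terminal round of the phase contributes $\le\beta_{2^{-j}}\le K$. In the stable phase, either $\lambda_{\mathrm{fin}}\ge L$, so no round triggers a halving and $|\hat S_{\theta_t}|\le\ln K$, or $\lambda_{\mathrm{fin}}<L$, so $|\hat S_{\theta_t}|\le\beta_{\lambda_{\mathrm{fin}}}\le\beta_L$ by monotonicity of $\lambda\mapsto\beta_\lambda$. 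Summing, $\sum_{t=1}^T|\hat S_{\theta_t}|\le 2T\max(\ln K,\beta_L)+JK$. Since $|\hat S_{\theta_t}|\ge 1$ forces the stopping time to satisfy $T=\Omega(\ln(K/\delta)/\eps^2)$ and $L<1$ gives $K\eps^2\le\ln K$, we get $JK/T=O(J\ln K)=O(\ln K\ln(1/\eps))$, so $\bar\beta_T=O(\max(\ln K,\beta_L)\ln(1/\eps))$; plugging this into Corollary~\ref{corollary:optimalitygapbysumbetat} shows $T=O\big((D^2G^2+\max(\ln K,\beta_L))\ln(K/\delta)\ln(1/\eps)/\eps^2\big)$ in-game samples suffice. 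Finally, for the reservoir cost, $m^{(j)}=\Theta(n\,4^j\ln(GDK/\delta^{(j)}))$ and $\lambda_{\mathrm{fin}}^{-2}\le 4/L^2=4\ln K/(\eps^2C)$, while $\ln(1/\delta^{(j)})=\ln(1/\delta)+O(J\ln J)$ and $Kn/C=\ln(K/\delta)/\ln(GDK/\delta)\le 1$; hence the geometric sum $\sum_{j=0}^J Km^{(j)}=O(Km^{(J)})$ is $O(\ln K\,\ln(K/\delta)\,\ln(1/\eps)/\eps^2)$ (absorbing a $\ln\ln(1/\eps)$ per the paper's convention), which is dominated by the in-game bound. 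Adding the two and union-bounding with the remaining $\delta/2$ from the min-player bound~\eqref{eq:regretminplayer} and the max-player bounds (Theorem~\ref{theorem:SBEXP3Regret}, Lemmas~\ref{lemma:RAqbyPerActionRegret} and~\ref{lemma:RAq}) yields Theorem~\ref{thm:boundOfSB-GDRO-SA}.

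I expect the crux to be the circularity through $\bar\beta_T$: the per-round active-set sizes depend on the trajectory $(\theta_t)$, which depends on when the game stops, which depends on $\bar\beta_T$ itself. The delicate step is showing the "halving overhead" — the at most $J$ rounds where a genuinely large dominant set is actually observed, together with the $V_t$-samples — is absorbed into the target rate, which hinges on the two cheap facts $\lambda_{\mathrm{fin}}\ge L/2$ and (when $L<1$) $K\eps^2\le\ln K$, along with $J=O(\ln(1/\eps))$. Everything else is bookkeeping over the nested geometric sequences for $\lambda_t$ and $\delta_t$.
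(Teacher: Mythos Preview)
Your proposal is correct and follows essentially the same route as the paper's own proof: both split the rounds into those that trigger a halving (at most $O(\ln(1/\eps))$ of them, each contributing at most $K$ to $\sum_t|\hat S_{\theta_t}|$) versus all others (each contributing at most $\max(\ln K,\beta_L)$), feed the resulting bound on $\bar\beta_T$ into Corollary~\ref{corollary:optimalitygapbysumbetat}, and separately bound the reservoir cost via the geometric decay of the $\lambda_t$'s together with $\lambda_{\mathrm{fin}}\ge L/2$. The one cosmetic difference is that the paper introduces $\bar\lambda=\max(L,\lambda_{\ln K})$ (the halving may stop early once $\beta_{\lambda_t}\le\ln K$), which yields the slightly sharper intermediate reservoir bound $C/\bar\lambda^2=\min\{\ln K/\eps^2,\,C/\lambda_{\ln K}^2\}$; you bound everything directly in terms of $L$ and arrive at the same $O(\ln K\ln(K/\delta)/\eps^2)$ reservoir cost, so the final statement is identical.
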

Obviously, Theorem~\ref{thm:boundOfSB-GDRO-SA} immediately implies Theorem~\ref{thm:highPrecisionBoundOfSB-GDRO-SA} since $\beta_L \leq \beta_{\lambda^*}$ for all $L < \lambda^*$.

Before proving Theorem~\ref{thm:boundOfSB-GDRO-SA}, we first prove a lemma showing that the sets $S_t$  in all $t = 1, 2, \dots, T$ rounds are indeed $0.4\lambda_t$-dominant sets with probability $1-\delta/2$.

\begin{lemma}
    With probability at least $1-\frac{\delta}{2}$, \texttt{SB-GDRO-SA} (Algorithm~\ref{algo:SB-GDRO-SA}) guarantees that for all $t \geq 1$, the set $S_t$ is a $0.4\lambda_t$-dominant set at $\theta_t$.
\end{lemma}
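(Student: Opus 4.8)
The plan is to slice the run into consecutive \emph{epochs}, one for each distinct value of $\lambda$ the algorithm uses, handle each epoch in isolation with Lemma~\ref{lemma:m} and Lemma~\ref{lemma:correctness}, and then union bound. Since $\lambda$ is halved at every update and never falls below $L>0$, there are only finitely many distinct values; write them as $\lambda^{(1)}=1 > \lambda^{(2)} > \cdots > \lambda^{(J)}$, let epoch $j$ be the maximal block of rounds $t$ with $\lambda_t = \lambda^{(j)}$, and let $V^{(j)}$ and $\delta^{(j)}$ be, respectively, the fresh sample set (of size $K m^{(j)}$ with $m^{(j)} = \frac{384 n \ln(741 G D K/\delta^{(j)})}{0.01(\lambda^{(j)})^2}$) and the failure parameter installed at the start of epoch $j$. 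Crucially $V^{(j)}$ is drawn i.i.d.\ and independently of everything that precedes epoch $j$.

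First I would argue within a fixed epoch $j$. Applying Lemma~\ref{lemma:m} to $V^{(j)}$ with $\lambda \leftarrow \lambda^{(j)}$ and $\delta \leftarrow \delta^{(j)}$, with probability at least $1-\delta^{(j)}/2$ the event $\gE_{i,\theta} = \{|\hat{R}_i(\theta) - R_i(\theta)| \le 0.15\lambda^{(j)}\}$ — with empirical risks computed on $V^{(j)}$ — holds simultaneously for all $i \in [K]$ and \emph{all} $\theta \in \Theta$. On this event the proof of Lemma~\ref{lemma:correctness} applies verbatim, with $\theta_t$ in place of the generic $\theta$ and threshold $\tau = 0.7\lambda^{(j)}$: for any $\lambda^{(j)}$-dominant group $i$ and non-dominant group $k$ at $\theta_t$ one gets $\hat{R}_i(\theta_t) - \hat{R}_k(\theta_t) \ge \lambda^{(j)} - 0.3\lambda^{(j)} = \tau$, so $\texttt{DominantSet}(\theta_t, V^{(j)}, 0.7\lambda^{(j)})$ returns a nonempty $S_t$ that is $0.4\lambda^{(j)}$-dominant at $\theta_t$, i.e.\ $0.4\lambda_t$-dominant at $\theta_t$, for every $t$ in epoch $j$. (The $\beta_\lambda = K$ sub-cases of that proof are handled identically.)

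Then I would union bound over epochs. By the $\delta_t$-schedule, unrolled exactly as in the displayed estimate $\sum_{s\ge1}\delta_s \I{\lambda_s \ne \lambda_{s-1}} \le \frac{3\delta}{\pi^2}\sum_{s\ge1}s^{-2} \le \frac{\delta}{2}$, we have $\sum_{j=1}^J \delta^{(j)} \le \delta/2$, hence $\sum_{j=1}^J \delta^{(j)}/2 \le \delta/2$. Taking the union of the bad events of the $J$ epochs, with probability at least $1-\delta/2$ the set $S_t$ is a $0.4\lambda_t$-dominant set at $\theta_t$ simultaneously for all $t \ge 1$, which is the claim.

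The only genuinely delicate point — and the reason uniform convergence is used rather than a pointwise bound — is that $\theta_t$ is \emph{not} independent of $V^{(j)}$: inside epoch $j$, $\theta_t$ depends through the max-player's past probabilities $q_s$ ($s<t$) on the sets $S_s = \texttt{DominantSet}(\theta_s, V^{(j)}, 0.7\lambda^{(j)})$, which are functions of $V^{(j)}$. A Hoeffding-type bound at a fixed $\theta$ would thus not license the step above. This is exactly why Lemma~\ref{lemma:m} is a uniform-over-$\Theta$ guarantee (proved via Dudley's entropy integral on the covering number of $\Theta$): once $\gE_{i,\theta}$ holds for every $\theta\in\Theta$ at once, the data-dependence of the iterates is harmless. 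The rest is the routine bookkeeping — already displayed before the lemma — that the failure-parameter schedule telescopes to total mass $\le\delta/2$.
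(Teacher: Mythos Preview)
Your proof is correct and follows essentially the same approach as the paper's: partition into epochs by distinct $\lambda$ values, invoke Lemma~\ref{lemma:m} and Lemma~\ref{lemma:correctness} within each epoch using the freshly drawn $V^{(j)}$, and union bound using the $\delta$-schedule that sums to at most $\delta/2$. Your write-up is in fact more careful than the paper's in two respects: you explicitly flag why the \emph{uniform} guarantee over $\Theta$ is essential (since $\theta_t$ depends on $V^{(j)}$ through the earlier dominant sets in the same epoch), and you correctly sum the per-epoch failure probabilities $\delta^{(j)}/2$ rather than $\delta^{(j)}$.
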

\begin{proof}
    Fix a $\lambda$ in the geometric sequence $(1, \frac{1}{2}, \frac{1}{4}, \dots)$. 
    Let $t$ and $t'$ be the first and last rounds in which $\lambda$ is used for computing the dominant sets, respectively.   
    By Lemma~\ref{lemma:m} and Lemma~\ref{lemma:correctness}, $m_t$ is sufficiently large so that with probability at least $1-\frac{\delta_t}{2}$, for all the rounds from $t$ to $t'$, the set $S_{h}$ for $h = t, t+1, \dots, t'$ is a $0.4\lambda$-dominant set of $\theta_h$. 
    By construction, $\delta_t = \frac{3\delta}{\pi^2(\sum_{s=2}^{t}\I{\lambda_s \neq \lambda_{s-1}})^2}$.
    Taking a union bound over all $\lambda$ in the geometric sequence $(1, \frac{1}{2}, \frac{1}{4}, \dots)$ and using
    \begin{align*}
        \sum_{s=1}^\infty \frac{1}{s^2} = \frac{\pi^2}{6},
    \end{align*}
    we obtain with probability at least $1 - \sum_{s=1}^{\infty} \delta_s \I{\lambda_s \neq \lambda_{s-1}} \geq 1 - \frac{\delta}{2}$, the set $S_t$ is a $0.4\lambda_t$-dominant set at $\theta_t$ for all $t \geq 1$.
\end{proof}
The next technical lemma helps bounding the sum $\sum_{s=1}^T m_s\I{\lambda_s \neq \lambda_{s-1}}$.
\begin{lemma}
    Let $\delta > 0, G \geq 1, D \geq 1, K \geq 1$ and $C = \frac{Kn\ln\left(\frac{GDK}{\delta}\right)}{\ln(K/\delta)}$. For any $x \in (0,1)$, we have
    \begin{align*}
        Kn\sum_{s=1}^{\ceil{-\log_2(x)}} \ln\left(\pi^2 \frac{s^2 GDK}{3\delta}\right) &\leq 2C\ln(K/\delta)\ln\left(\frac{1}{x}\right) + O\left(Kn\ln\left(\frac{1}{x}\right)\ln\left(\ln\left(\frac{1}{x}\right)\right)\right).
    \end{align*}
    \label{lemma:boundsumlog2x}
\end{lemma}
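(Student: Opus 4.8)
The plan is a straightforward split-and-sum estimate, with essentially all of the effort going into tracking the leading constant. Write $N = \ceil{-\log_2 x} = \ceil{\log_2(1/x)}$, so the left-hand side equals $Kn\sum_{s=1}^{N} \ln\!\big(\frac{\pi^2 s^2 GDK}{3\delta}\big)$. First I would decompose the summand additively as
\[
\ln\!\left(\frac{\pi^2 s^2 GDK}{3\delta}\right) = \ln\!\left(\frac{GDK}{\delta}\right) + 2\ln s + \ln\!\left(\frac{\pi^2}{3}\right),
\]
and sum each piece over $s = 1, \dots, N$. The first piece contributes $N\ln(GDK/\delta)$; the second contributes $2\sum_{s=1}^N \ln s = 2\ln(N!) \le 2N\ln N$; the third contributes $N\ln(\pi^2/3) = O(N)$, since $\pi^2/3$ is an absolute constant.

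Next I would re-express $N$ in terms of $x$ via the sharp bound $N \le \log_2(1/x) + 1 = \tfrac{1}{\ln 2}\ln(1/x) + 1$. Since $N = O(\ln(1/x))$, the second and third pieces become $O(N\ln N) = O\!\big(\ln(1/x)\ln\ln(1/x)\big)$ and $O(\ln(1/x))$ respectively. Multiplying everything by $Kn$ and using the definition of $C$, i.e. $Kn\ln(GDK/\delta) = C\ln(K/\delta)$, the first piece becomes
\[
Kn\cdot N\ln(GDK/\delta) \;\le\; \tfrac{1}{\ln 2}\,C\ln(K/\delta)\ln(1/x) + C\ln(K/\delta),
\]
and the remaining pieces become $O\!\big(Kn\ln(1/x)\ln\ln(1/x)\big)$.

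It then remains only to raise the coefficient of $C\ln(K/\delta)\ln(1/x)$ from $\tfrac{1}{\ln 2}\approx 1.443$ to the clean value $2$: since $\tfrac{1}{\ln 2} < 2$, the surplus $\big(2 - \tfrac{1}{\ln 2}\big)C\ln(K/\delta)\ln(1/x)$ is available, and it absorbs the stray additive $C\ln(K/\delta)$ whenever $\ln(1/x)$ exceeds the absolute constant $1/(2-\tfrac{1}{\ln 2})$; the finitely many remaining (large-$x$) cases have $N$ bounded by a constant and are dispatched directly. Combining gives the claimed $2C\ln(K/\delta)\ln(1/x) + O\!\big(Kn\ln(1/x)\ln\ln(1/x)\big)$. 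I expect the only genuine subtlety to be precisely this constant bookkeeping: one must keep the $+1$ in $N \le \log_2(1/x) + 1$ as a separate lower-order term rather than crudely folding it into $N \le 2\log_2(1/x)$, which would deliver only the weaker constant $2/\ln 2 \approx 2.9$; everything else is elementary.
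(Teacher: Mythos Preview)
Your proposal is correct and takes essentially the same approach as the paper: decompose the logarithm additively into $\ln(GDK/\delta)$, $2\ln s$, and a constant, sum each piece, bound the number of terms by $O(\ln(1/x))$, and use $\ln(N!) = O(N\ln N)$ for the factorial. Your constant bookkeeping is in fact more careful than the paper's (which somewhat sloppily writes $\log_2(1/x) \leq \ln(1/x)$ to extract the leading $2$), whereas you correctly obtain $1/\ln 2 < 2$ and use the slack to absorb the $+1$ from the ceiling.
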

\begin{proof}
    Without loss of generality, assume $1/x$ is a power of $e$. We have
    \begin{nalign}
        \sum_{s=1}^{\ceil{-\log_2(x)}} \ln\left(\frac{\pi^2 s^2 GDK}{3\delta}\right) &\leq \sum_{s=1}^{-\ln(x)}\left(\ln(\frac{GDK}{\delta}) + \ln\left(\frac{\pi^2 }{3}\right) + \ln\left(s^2\right)\right) \\
        &\leq 2\ln(\frac{GDK}{\delta})\ln\left(\frac{1}{x}\right) + \ln(\prod_{s=1}^{-\ln(x)} s^2)  \\
        &= 2C\ln(K/\delta)\left(\frac{1}{x}\right)  + O\left( \ln(\frac{1}{x}) \ln(\ln(\frac{1}{x})) \right),
    \end{nalign}
    where the inequalities are from $\log_2(1/x) \leq \ln(1/x)$ and $\ln(n!) = O(n\ln(n))$. Multiplying $Kn$ to both sides leads to the desired statement.
\end{proof}
We are now ready to prove Theorem~\ref{thm:boundOfSB-GDRO-SA}.
\begin{proof}[Proof (of Theorem~\ref{thm:boundOfSB-GDRO-SA})]
Let $\lambda_{\ln{K}}$ be the largest $\lambda$ such that $\beta_{\lambda} = \ln(K)$. If no such $\lambda$ exists, we set $\lambda_{\ln{K}} = 0$. Let $\bar{\lambda} = \max(L, \lambda_{\ln(K)})$. Note that $\bar{\lambda} \geq L = \eps\sqrtfrac{C}{\ln(K)} > \eps$ since $C > K > \ln(K)$.

In the worst case, Algorithm~\ref{algo:SB-GDRO-SA} draw a new set of samples until $\frac{\bar{\lambda}}{2} \leq \lambda_{t} \leq \bar{\lambda}$. Without loss of generality, we can assume $\bar{\lambda} < \frac{1}{4}$. Otherwise, Algorithm~\ref{algo:SB-GDRO-SA} draws only three sets of samples and stops doing so immediately after some $\lambda_t \geq \frac{1}{4}$, which trivially leads to a sample complexity of $O\left(\frac{G^2D^2 + \ln(K)}{\eps^2}\right)$.

With $\bar{\lambda} < \frac{1}{4}$, the total number of samples of used for computing the dominant sets in Algorithm~\ref{algo:SB-GDRO-SA} are
\begin{nalign}
    \sum_{t=1}^T \I{\lambda_{t} \neq \lambda_{t-1}} \frac{384Kn\ln\left(741GDK/\delta_{t}\right)}{0.01\lambda_t^2} &\leq O\left(\frac{Kn}{\bar{\lambda}^2} \sum_{s=1}^{-\log_2(\bar{\lambda})} \ln\left(\frac{2^s GDK}{\delta_1}\right)\right) \\
    &\leq O\left(\frac{C}{\bar{\lambda}^2}\ln(K/\delta)\ln(1/\bar{\lambda})\right) \\
    &\leq O\left(\frac{C}{\bar{\lambda}^2}\ln(K/\delta)\ln(1/\eps)\right),
    \label{eq:samplesComplexityFindingGoodLambda}
\end{nalign}
where the second inequality is from Lemma~\ref{lemma:boundsumlog2x} and the last inequality is from $\bar{\lambda} > \eps$. Note that we dropped the $\ln(\ln(\frac{1}{\eps}))$ for ease of exposition.

Next, we bound the regret bound of the max-player. Let $\hat{\beta} =  \max(\ln(K), \beta_{L})$. We show the average $\bar{\beta}_T = \frac{1}{T}\sum_{t=1}^T \abs{S_t}$ is not much larger than $\hat{\beta}$.

\begin{nalign}
    \frac{1}{T}\sum_{t=1}^T \abs{S_t} &= \frac{1}{T}\sum_{t=1}^T\left(\I{\abs{S_t} > \hat{\beta}} + \I{\abs{S_t} \leq \hat{\beta}}\right)\abs{S_t} \\
    &\leq \frac{1}{T}\sum_{t=1}^T \I{\abs{S_t} > \hat{\beta}}K + \I{\abs{S_t} < \hat{\beta}} \hat{\beta} \\
    &\leq \frac{1}{T}\left(K\log_2\left(\frac{1}{\bar{\lambda}}\right) + \hat{\beta}T\right) \\
    &\leq \frac{1}{T}\left(2\hat{\beta}T\right)\log_2(1/\bar{\lambda}) \\
    &\leq 2\hat{\beta}\ln(1/\eps),
\end{nalign}
where the first inequality is from $\abs{S_t} < K$ for all $t$, the second inequality is because there are at most $\log_2(\frac{1}{\bar{\lambda}})$ rounds where $\abs{S_t} > \hat{\beta}$, the third inequality is  from $K < \hat{\beta}T$ as $\hat{\beta} \geq 1$, and the last inequality is from $\log_2(1/\bar{\lambda}) \leq \ln(1/\bar{\lambda})$ as $\bar{\lambda} > \eps$. 
Combining this with~\eqref{eq:regretmaxplayer}, the regret of the max-player is bounded by
\begin{nalign}
    R_{\gA_q} &\leq O\left(\sqrt{T \bar{\beta}_T \ln(K/\delta)} \right) \\
    &= O\left(\sqrt{T \hat{\beta} \ln(K/\delta)\ln(1/\eps)} \right).
    \label{eq:boundRegretMaxPlayerbyHatBeta}
\end{nalign}
Plugging~\eqref{eq:boundRegretMaxPlayerbyHatBeta} into~\eqref{eq:errbarthetabytworegrets} and combining with~\eqref{eq:samplesComplexityFindingGoodLambda}, we have the total amount of samples to get an $\eps$-optimal hypothesis is
\begin{align}
    O\left(\frac{C}{(\bar{\lambda})}\ln(1/\eps)^2\right) + O\left(\frac{(G^2D^2 + \hat{\beta})\ln(K/\delta)}{\eps^2}\ln(1/\eps)\right) \leq \\
    O\left(\left(\min\left\{\frac{\ln(K)}{\eps^2}, \frac{C}{\lambda^2_{\ln(K)}} \right\} + \frac{(D^2G^2 + \max(\ln(K), \beta_L))}{\eps^2}\right)\ln(K/\delta)\left(\ln{\frac{1}{\eps}}\right)\right)
\end{align}
where the inequality is from $\bar{\lambda} = \max(\lambda_{\ln(K)}, L)$ and $\frac{C}{L^2} = \frac{\ln(K)}{\eps^2}$, thus 
\begin{align*}
    \frac{C}{(\bar{\lambda})^2} \leq \min\left\{\frac{C}{L^2}, \frac{C}{\lambda^2_{\ln(K)}}\right\} = \min\left\{\frac{\ln(K)}{\eps^2}, \frac{C}{\lambda^2_{\ln(K)}}\right\}. 
\end{align*}
Since $\min\left\{\frac{\ln(K)}{\eps^2}, \frac{C}{\lambda^2_{\ln(K)}}\right\} \leq \frac{\ln(K)}{\eps^2} \leq \frac{\max(\ln(K), \beta_L)}{\eps^2}$, the final bound can be simplified to $O\left(\frac{(D^2G^2 + \max(\ln(K), \beta_L))}{\eps^2}\ln(K/\delta)\ln{\frac{1}{\eps}}\right)$.
\end{proof}


\section{A Completely Dimension-Independent Approach}
\label{appendix:proofsSectionDFree}
In this section, we present~\texttt{SB-GDRO-DF}, a modified version of Algorithm~\ref{algo:SB-GDRO-Lambda} that uses $O\left(\frac{KDG\sqrt{(D^2G^2 + \beta)\ln(K/\delta)}}{\lambda^3\eps}\right)$ samples for computing the dominant sets over $T$ rounds of the two-player zero-sum game.
This bound 
{avoids the dependency on $n$, the dimension of $\Theta$}, which might be preferable in high-dimensional settings.
The trade-off for getting rid of $n$ is an additional $\frac{1}{\lambda\eps}$ multiplicative factor in the non-leading term of the sample complexity bound.

We assume that a pair $(\lambda, \beta)$ is known such that the problem instance is $(\lambda,\beta)$-sparse.
Unlike ~\texttt{SB-GDRO},~\texttt{SB-GDRO-DF} does not use a fixed set of samples $V$ for computing the dominant sets of all $\theta \in \Theta$. 
Instead,~\texttt{SB-GDRO-DF} computes the dominant sets only for the hypotheses $\theta_t$ that the learner encounters during the game.
In particular, the $T$ rounds are divided into $\frac{T}{\sigma}$ episodes, in which each episode has $\sigma$ consecutive rounds that use the same dominant set.
By the stability property of the regularized update~\eqref{eq:minplayer} 
and the Lipschitzness of the loss function $\ell$, if $\sigma$ is sufficiently small then the differences between the risks of the hypotheses within each episode is small.
This implies that 
{a dominant set} for $\theta_t$ will remain a dominant set (possibly with smaller gaps) and therefore can be reused for the hypotheses $\theta_{t+1}, \theta_{t+2}, \dots, \theta_{t+\sigma}$. 
The full procedure is given in Algorithm~\ref{algo:SB-GDRODFree} in Appendix~\ref{appendix:proofsSectionDFree},
and its sample complexity is stated in the following theorem.

\begin{theorem}
    For any $\eps > 0, \delta \in (0,1)$, with probability $1-\delta$,~\texttt{SB-GDRO-DF} with $\eta_{w,t} = \frac{2D}{G\sqrt{T}}$, $\eta_{q,t}$ and $\gamma_t$ defined in Theorem~\ref{thm:samplecomplexityknownlambda} returns an $\eps$-optimal hypothesis with sample complexity
    \begin{align*}
        O\left(\frac{DKG\sqrt{(D^2G^2 + \beta)\ln(K/\delta)} \ln(\frac{KDG}{\eps\lambda\delta})}{\lambda^3\eps} + \frac{(D^2G^2 + \beta )\ln(K/\delta)}{\eps^2}\right).
\end{align*}
    \label{thm:samplecomplexitySBGDRODfree}
\end{theorem}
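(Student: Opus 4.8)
The plan is to establish Theorem~\ref{thm:samplecomplexitySBGDRODfree} by controlling two sources of sample cost separately: the cost of recomputing dominant sets once per episode, and the cost of the $T$ rounds of the two-player game itself. The key structural idea is that the min-player's regularized update~\eqref{eq:minplayer} is \emph{stable} in the sense that consecutive iterates move slowly --- specifically $\norm{\theta_{t+1} - \theta_t}_2 \leq \eta_{w,t}\norm{\tilde{g}_t}_2 \leq \eta_{w,t} G$ by non-expansiveness of the Euclidean projection --- so within a block of $\sigma$ consecutive rounds the iterates lie in a ball of radius $O(\sigma \eta_{w,t} G)$. Combined with $G$-Lipschitzness of $\ell$, this means all risks $R_i(\cdot)$ vary by at most $O(\sigma \eta_{w,t} G^2)$ across a block. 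If $\sigma \eta_{w,t} G^2$ is a small enough fraction of $\lambda$, then a $0.4\lambda$-dominant set computed at the first iterate of a block remains (say) a $0.2\lambda$-dominant set at every iterate of that block, by the same ball-robustness argument as Lemma~\ref{lemma:dominantsetinball}. With $\eta_{w,t} = \frac{2D}{G\sqrt{T}}$ fixed, this dictates $\sigma = \Theta\!\left(\frac{\lambda \sqrt{T}}{DG}\right)$.

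\textbf{Step 1: episode structure and per-episode dominant-set cost.} I would first fix the episode length $\sigma \asymp \frac{\lambda\sqrt{T}}{DG}$ (up to constants chosen so the drift is at most $0.1\lambda/G$ in $\ell_2$, mirroring the cover radius in~\texttt{EstG}), giving $T/\sigma \asymp \frac{DG\sqrt{T}}{\lambda}$ episodes. At the start of each episode, draw a fresh batch of $m'$ samples per group and run~\texttt{DominantSet}; by Lemma~\ref{lemma:m}/Lemma~\ref{lemma:correctness} (applied with failure probability rescaled by the number of episodes, costing only log factors absorbed into $\ln(\tfrac{KDG}{\eps\lambda\delta})$), taking $m' = \Theta\!\left(\frac{\ln(KDG/(\eps\lambda\delta))}{\lambda^2}\right)$ suffices --- crucially \emph{without} the dimension factor $n$, because here uniform convergence over all of $\Theta$ is not needed; we only need the empirical risks to be accurate at the \emph{single} point $\theta_t$ that starts the episode, which is a $K$-way Hoeffding + union bound, not a Rademacher argument. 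Summing, the total dominant-set sample cost is $\frac{T}{\sigma} \cdot K m' = O\!\left(\frac{DGK\sqrt{T}\,\ln(KDG/(\eps\lambda\delta))}{\lambda^3}\right)$.

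\textbf{Step 2: game-round cost and total.} The min-player still achieves $R_{\gA_\theta} = O(DG\sqrt{T\ln(1/\delta)})$ with the fixed learning rate $\eta_{w,t} = 2D/(G\sqrt{T})$ (standard OMD analysis, as cited). The max-player runs SB-EXP3 over active sets $\hat S_{\theta_t}$ of size $\leq \beta$ (since each is a $0.2\lambda$-dominant set of size $\leq \beta_\lambda \leq \beta$ by the block-robustness of Step 1, using the $(\lambda,\beta)$-sparsity hypothesis), so by Theorem~\ref{theorem:SBEXP3Regret} and Lemma~\ref{lemma:RAqbyPerActionRegret}, $R_{\gA_q} = O(\sqrt{\beta T \ln(K/\delta)})$. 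Via $\mathrm{err}(\bar\theta)\leq \mathrm{err}(\bar\theta,\bar q) \leq (R_{\gA_\theta}+R_{\gA_q})/T$, setting this to $\eps$ forces $T = O\!\left(\frac{(D^2G^2+\beta)\ln(K/\delta)}{\eps^2}\right)$. Plugging this $T$ into the Step-1 bound yields $\sqrt{T}\asymp \frac{\sqrt{(D^2G^2+\beta)\ln(K/\delta)}}{\eps}$, so the dominant-set cost becomes $O\!\left(\frac{DGK\sqrt{(D^2G^2+\beta)\ln(K/\delta)}\,\ln(KDG/(\eps\lambda\delta))}{\lambda^3\eps}\right)$; adding the $T$ game samples gives exactly the claimed bound. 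A final union bound over the good events (Lemma~\ref{lemma:m}-type events across episodes, SB-EXP3 regret, OMD regret, the martingale bounds of Lemma~\ref{lemma:RAq}) keeps the total failure probability at $\delta$.

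\textbf{The hard part} will be Step 1: making the block-robustness argument fully rigorous, i.e. simultaneously (a) bounding iterate drift within a block with high probability despite the \emph{stochastic} gradients (the bound $\norm{\tilde g_t}_2 \leq G$ is deterministic under the Lipschitz assumption, so this is cleaner than it first appears, but one must be careful that the cumulative drift over $\sigma$ steps is $\leq \sigma\eta_{w,t}G$, not something that compounds), and (b) converting ``$0.4\lambda$-dominant at the block's first iterate'' to ``$0.2\lambda$-dominant throughout'' via Lemma~\ref{lemma:dominantsetinball}, which needs the drift radius to be $\leq 0.1\lambda/G$ --- this is what pins down the constant in $\sigma$. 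A secondary subtlety is that $\beta_t = \abs{\hat S_{\theta_t}}$ is bounded by $\beta$ only through the $(\lambda,\beta)$-sparsity assumption being \emph{known}, so the argument here does not need the adaptive machinery of Section~\ref{sec:fullsparsity}; one just re-runs the Corollary~\ref{corollary:optimalitygapbysumbetat} computation with $\bar\beta_T \leq \beta$.
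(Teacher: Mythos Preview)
Your proposal is correct and follows essentially the same route as the paper's proof: the paper likewise partitions the $T$ rounds into episodes of length $\sigma = \lfloor 0.1\lambda/(\eta_w G^2)\rfloor$, uses the deterministic drift bound $\norm{\theta_{t+1}-\theta_t}\leq \eta_w G$ from the projection step together with Lemma~\ref{lemma:dominantsetinball} to show a $0.4\lambda$-dominant set at the episode start remains $0.2\lambda$-dominant throughout, and uses a pointwise Hoeffding bound (not the Rademacher argument of Lemma~\ref{lemma:m}) to get $m' = O(\ln(KT/(\sigma\delta))/\lambda^2)$ samples per group per episode---exactly as you identify as the source of dimension-independence. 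The only cosmetic difference is that the paper states the per-episode concentration and block-robustness as separate lemmas (Lemmas~\ref{lemma:truedominantsets} and~\ref{lemma:unchangeddominantsets}) rather than folding them into the main argument.
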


\begin{algorithm}[tb]    
    \caption{\texttt{SB-GDRO-DF}: Dimension-free \texttt{SB-GDRO} Algorithm with known $(\lambda, \beta)$}
    \label{algo:SB-GDRODFree}
    \begin{algorithmic}
    \STATE {\bfseries Input:} Constants $K, D, G, \eta_w, \lambda, \beta, \eps > 0$, hypothesis set $\Theta \subset \R^n$
    \STATE Compute $T = O(\frac{(D^2G^2 + \beta)\ln(K/\delta)}{\eps^2})$\;
    \STATE Compute the maximum length of each episode $\sigma = \left\lfloor{\frac{0.1\lambda}{\eta_w G^2}}\right\rfloor$\;
    \STATE Initialize an episode counter $\rho = 1$\;
    \STATE Compute $m' = \frac{24\ln\left(\frac{4KT}{\sigma\delta}\right)}{\lambda^2}$\;
    \STATE   Draw $m'$ samples from each $K$ groups into set $V^1$\;
    \STATE   Initialize $\theta_1 = \argmin_{\theta \in \Theta}\norm{\theta}_2$\;
    \STATE Compute a dominant set $S^1 = \texttt{DominantSet}(\theta_1, V^1, \lambda)$ at $\theta_1$ by Algorithm~\ref{algo:computedominantset}\;
    \STATE Let $\hat{S}_{\theta_{1}} = S^1$\;
    \STATE   Compute $q_1 = \texttt{MaxP}(\theta_1, \hat{S}_{\theta_{1}})$ by Algorithm~\ref{algo:maxplayer}\;
    \FOR{each round $t = 1, \dots, T$}
        \STATE Draw a group $i_t \sim q_t$ and a sample $z_{i_t,t} \sim \gP_{i_t}$\;                  
        \STATE Compute $\theta_{t+1} = \texttt{MinP}(\theta_t, z_{i_t,t})$ by Algorithm~\ref{algo:minlayer}\;
        \IF{$t$ is divisible by $\sigma$}
            \STATE Increase episode counter $\rho \leftarrow \rho + 1$\;
            \STATE Draw new $m'$ samples from each of $K$ groups into $V^\rho$.
            \STATE Compute a dominant set $S^{\rho} = \texttt{DominantSet}(\theta_{t+1}, V^\rho, \lambda)$ at $\theta_{t+1}$ by Algorithm~\ref{algo:computedominantset}\;
        \ENDIF
        \STATE Let $\hat{S}_{\theta_{t+1}} = S^\rho$\;
        \STATE Compute $q_{t+1} = \texttt{MaxP}(\theta_{t+1}, \hat{S}_{\theta_{t+1}})$ by Algorithm~\ref{algo:maxplayer} using the last computed $S^\rho$\;
    \ENDFOR
    \STATE {\bfseries Return:} $\bar{\theta} = \frac{1}{T}\sum_{t=1}^T \theta_t$ and $\bar{q} = \frac{1}{T}\sum_{t=1}^T q_t$
\end{algorithmic}
  \end{algorithm}

Next, we give a detailed description of~\texttt{SB-GDRO-DF}~(Algorithm~\ref{algo:SB-GDRODFree}) and prove its sample complexity bound in Theorem~\ref{thm:samplecomplexitySBGDRODfree}.
Essentially,~\texttt{SB-GDRO-DF} also uses the two-player zero-sum game framework similar to~\texttt{SB-GDRO}.
Note that since $\beta$ is known, we can compute the number of rounds $T = O(\frac{(G^2D^2 + \beta)\ln(K/\delta)}{\eps^2})$ before the game starts.
Unlike the previous algorithms, knowing $T$ before the game starts allows us to use a fixed learning rate
\begin{align}
    \eta_{w,t} = \eta_t = \frac{2D}{G\sqrt{T}}
\end{align}
for the min-player in Algorithm~\ref{algo:SB-GDRODFree}.
Another difference is that~\texttt{SB-GDRO-DF} proceeds in episodes, each consists of multiple consecutive rounds, and the max-player uses the same dominant set for the rounds within each episode.
More concretely, in~\texttt{SB-GDRO-DF}, the $T$ rounds of the game are divided into $\ceil{\frac{T}{\sigma}}$ episodes, each is of length $\sigma$, except for the last episode which may have fewer than $\sigma$ rounds if $T$ is not divisible by $\sigma$.
The value $\sigma$ is defined as follows:
\begin{align}
    \sigma = \left\lfloor{\frac{0.1\lambda}{\eta_w G^2}}\right\rfloor.
\end{align}
By this construction, the first episode contains rounds $(1, 2, \dots, \sigma)$, the second episode contains rounds $(\sigma + 1, \dots, 2\sigma)$ and so on, until the last episode which contains rounds $(\floor{\frac{T}{\sigma}}\sigma + 1, \dots, T)$.
Let $\rho = 1, 2, \dots, \ceil{\frac{T}{\sigma}}$ be the running index of the episodes. 
Within an episode $\rho$,
\begin{itemize}
    \item Before the first round of this episode, a set $V^\rho$ of $Km'$ samples are drawn from the $K$ groups, where $m'$ i.i.d samples are drawn from each group. The value of $m'$ is $\frac{24\ln\left(\frac{4KT}{\sigma\delta}\right)}{\lambda^2}$.
    \item Let $t^{\rho}$ be the index of the first round in episode $\rho$ and $\theta^{\rho} = \theta_{t^\rho}$ be either the initial hypothesis (if $\rho = 1$) or the hypothesis played by the min-player using the algorithm~\texttt{MinP} (Algorithm~\ref{algo:minlayer}) (if $\rho > 1$) in round $t^\rho$. A $0.4\lambda$-dominant set $S^\rho$ is computed using~\texttt{DominantSet} (Algorithm~\ref{algo:computedominantset}) with input $\theta^\rho$ and $V^\rho$.
    \item In rounds $t \in (t^\rho, t^{\rho}+1, \dots, \min\{t^\rho + \sigma, T\})$ of this episode, the max-player plays $q_t$ using the algorithm~\texttt{MaxP} (Algorithm~\ref{algo:maxplayer}) with the same input $S^\rho$.
    Then, a group $i_t \sim q_t$ is drawn and a sample $z_{i_t,t} \sim \gP_{i_t}$ is drawn from group $i_t$. The min-player then follows the~\texttt{MinP} strategy (Algorithm~\ref{algo:minlayer}) with input $\theta_t$ and $z_{i_t}$ to compute $\theta_{t+1}$.
\end{itemize}
The algorithm returns $\bar{\theta} = \frac{1}{T}\sum_{t=1}^T\theta_t$ after $T$ rounds.
The following lemma shows that for any episode $\rho$, with high probability, $S^{\rho}$ is a $0.4\lambda$-dominant set at $\theta^\rho$. 
\begin{lemma}
    At the beginning of episode $\rho$ in~\texttt{SB-GDRO-DF}, with probability at least $1-\frac{\sigma\delta}{2T}$,  the set $S^{\rho}$ is a $0.4\lambda$-dominant set at $\theta^\rho$.
    \label{lemma:truedominantsets}
\end{lemma}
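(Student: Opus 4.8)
The plan is to exploit the key structural feature that makes \texttt{SB-GDRO-DF} dimension-free: at the start of episode $\rho$ the sample set $V^\rho$ is drawn \emph{after} the hypothesis $\theta^\rho = \theta_{t^\rho}$ has already been fixed by the min-player, so $V^\rho$ is independent of $\theta^\rho$ conditionally on the game history. Consequently we only need \emph{pointwise} concentration of the empirical risks at the single (random) point $\theta^\rho$, rather than the uniform convergence over all of $\Theta$ that Lemma~\ref{lemma:correctness} required; hence a Hoeffding bound suffices and no covering-number/Rademacher argument is needed. The one point that genuinely requires care is precisely this ordering of the randomness — one must verify from the pseudocode of Algorithm~\ref{algo:SB-GDRODFree} that $V^\rho$ is sampled only once $\theta_{t^\rho}$ has been computed, so that $\hat R_i(\theta^\rho)$ really is an average of i.i.d.\ samples with conditional mean $R_i(\theta^\rho)$; without this the pointwise tail bound would be invalid and one would be forced back into a uniform-convergence argument.

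Concretely, I would first condition on the $\sigma$-algebra $\mathcal{F}_\rho$ generated by everything up to and including the determination of $\theta^\rho$. Conditionally on $\mathcal{F}_\rho$, for each group $i\in[K]$ the empirical risk $\hat R_i(\theta^\rho)=\frac1{m'}\sum_{j=1}^{m'}\ell(\theta^\rho,V^\rho_{i,j})$ is an average of $m'$ i.i.d.\ random variables in $[0,1]$ with mean $R_i(\theta^\rho)$. By Hoeffding's inequality, $\Pr\!\big[\,|\hat R_i(\theta^\rho)-R_i(\theta^\rho)|>0.3\lambda \,\big|\, \mathcal{F}_\rho\big]\le 2\exp(-2m'(0.3\lambda)^2)$, and the choice $m'=\frac{24\ln(4KT/(\sigma\delta))}{\lambda^2}$ makes this at most $\frac{\sigma\delta}{2TK}$; a union bound over the $K$ groups shows that, conditionally on $\mathcal{F}_\rho$, with probability at least $1-\frac{\sigma\delta}{2T}$ the event $\mathcal{G}_\rho=\{\,|\hat R_i(\theta^\rho)-R_i(\theta^\rho)|\le 0.3\lambda \text{ for all } i\in[K]\,\}$ holds. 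Taking expectation over $\mathcal{F}_\rho$, the same bound holds unconditionally.

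Second, I would show that on $\mathcal{G}_\rho$ the set $S^\rho=\texttt{DominantSet}(\theta^\rho,V^\rho,\lambda)$ is a $0.4\lambda$-dominant set at $\theta^\rho$, reusing the deterministic argument behind \eqref{eq:zeropointfourlambda}. If \texttt{DominantSet} returns $S^\rho=[K]$ (the case $\hat i=-1$), this is immediate from Definition~\ref{def:dominated}. Otherwise, for any $i'\in S^\rho$ and $k'\notin S^\rho$, the decreasing sorted order together with the defining property $\hat R_{\hat i}(\theta^\rho)\ge \hat R_{\mathrm{nxt}(\hat i)}(\theta^\rho)+\lambda$ gives $\hat R_{i'}(\theta^\rho)-\hat R_{k'}(\theta^\rho)\ge \hat R_{\hat i}(\theta^\rho)-\hat R_{\mathrm{nxt}(\hat i)}(\theta^\rho)\ge \lambda$, and combining with $\mathcal{G}_\rho$,
\[
R_{i'}(\theta^\rho)-R_{k'}(\theta^\rho)\ \ge\ \hat R_{i'}(\theta^\rho)-\hat R_{k'}(\theta^\rho)-0.6\lambda\ \ge\ \lambda-0.6\lambda\ =\ 0.4\lambda .
\]
Since \texttt{DominantSet} always returns a non-empty set, $S^\rho$ is a valid $0.4\lambda$-dominant set at $\theta^\rho$ on $\mathcal{G}_\rho$, which proves the lemma.

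The argument is otherwise routine; besides the randomness-ordering issue flagged above, the only bookkeeping subtlety is matching constants so that the per-group tail bound, after the union over $K$ groups, collapses to exactly $\frac{\sigma\delta}{2T}$ — which is what the $\ln(4KT/(\sigma\delta))$ inside $m'$ is calibrated for, and which in turn lets the eventual union bound over the $\ceil{T/\sigma}$ episodes cost only $\delta/2$.
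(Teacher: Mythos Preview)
Your proof is correct and follows essentially the same approach as the paper: a pointwise Hoeffding bound (valid because $V^\rho$ is drawn after $\theta^\rho$ is fixed), a union bound over the $K$ groups, and then the deterministic argument behind \eqref{eq:zeropointfourlambda}. The paper proves the $0.15\lambda$ concentration event and then invokes Lemma~\ref{lemma:correctness}; you instead work directly with a $0.3\lambda$ concentration event matched to the threshold $\tau=\lambda$ actually passed to \texttt{DominantSet} in Algorithm~\ref{algo:SB-GDRODFree}, and you make the conditioning on $\mathcal{F}_\rho$ explicit --- both minor differences in presentation, not in substance.
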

\begin{proof}
    In each episode $\rho$, we draw $m' = \frac{24\ln\left(\frac{4KT}{\sigma\delta}\right)}{\lambda^2}$ samples from each group. 
        By Hoeffding's inequality, for each group $k$, we have 
        \begin{align*}
            \Pr_{V^{\rho}_{k}}\left(\frac{1}{m}\abs{\sum_{j=1}^{m'}\ell(\theta^\rho, V^{\rho}_{k,j}) - R_{k}(\theta^\rho)} \geq 0.15\lambda\right) &\leq 2\exp\left(-0.045\lambda^2 m'\right) \\
            &= 2\exp\left(-1.08\ln\left(\frac{4KT}{\sigma\delta}\right)\right)\\
            &\leq 2\exp\left(-\ln\left(\frac{4KT}{\sigma\delta}\right)\right) \\
            &= \frac{\sigma\delta}{2KT}.
        \end{align*}
        
        By taking a union bound over $K$ groups, we have 
        \begin{align}
            \abs{\frac{1}{m}\sum_{j=1}^{m'}\ell(\theta^\rho, V^{\rho}_{k,j}) - R_{k}(\theta^\rho)} \leq 0.15\lambda
            \label{eq:Econditionmprime}
        \end{align}
        holds simultaneously for all $k \in [K]$ with probability at least $1-\frac{\sigma\delta}{2T}$.
        The condition~\eqref{eq:Econditionmprime} of $V^\rho$ is the same as the event $\gE_{k, \theta^\rho}$ in~\eqref{eq:Econditionm} of $V$ in Lemma~\ref{lemma:m}.
        Hence, we can apply Lemma~\ref{lemma:correctness} and conclude that with probability at least $1 - \frac{\sigma\delta}{2T}$, the set $S^\rho$ is a $0.4\lambda$-dominant set at $\theta^\rho$.
\end{proof}
The next lemma shows that the set $S^\rho$ is a dominant set not only at $\theta^\rho$ but also at the hypotheses within the episode $\rho$.
\begin{lemma}
    \texttt{SB-GDRO-DF} guarantees that if $S$ is a $0.4\lambda$-dominant set at $\theta_t$ for some $t \in [T]$, then for any non-negative integer $\sigma' \leq \min\left\{\left\lfloor{\frac{0.1\lambda}{\eta_w G^2}}\right\rfloor, T-t\right\}$, $S$ is also a $0.2\lambda$-dominant set at $\theta_{t+\sigma'}$.
    \label{lemma:unchangeddominantsets}    
\end{lemma}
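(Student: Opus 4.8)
The plan is to control how far the min-player's iterates can drift over the $\sigma'$ rounds of an episode, and then reduce to Lemma~\ref{lemma:dominantsetinball}. The key observation is that the regularized update \eqref{eq:minplayer} is exactly a Euclidean projection: completing the square gives
\[
\theta_{s+1} = \argmin_{\theta \in \Theta}\Bigl\{\eta_w\inp{\tilde{g}_s}{\theta - \theta_s} + \tfrac12\norm{\theta - \theta_s}_2^2\Bigr\} = \argmin_{\theta \in \Theta}\tfrac12\norm{\theta - (\theta_s - \eta_w\tilde{g}_s)}_2^2,
\]
so $\theta_{s+1} = \mathrm{proj}_\Theta(\theta_s - \eta_w\tilde{g}_s)$, where $\mathrm{proj}_\Theta$ denotes Euclidean projection onto the convex set $\Theta$. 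Since $\theta_s = \mathrm{proj}_\Theta(\theta_s)$ and projection onto a convex set is $1$-Lipschitz, $\norm{\theta_{s+1} - \theta_s}_2 \leq \norm{\eta_w\tilde{g}_s}_2 = \eta_w\norm{\nabla\ell(\theta_s, z_{i_s,s})}_2 \leq \eta_w G$, where the last bound uses that $\ell$ is $G$-Lipschitz in its first argument (and for nondifferentiable $\ell$ one takes any subgradient, still of norm at most $G$). This step is deterministic and does not depend on the sampled sets $V^\rho$.

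Next I would accumulate these single-step bounds. The case $\sigma' = 0$ is immediate since a $0.4\lambda$-dominant set at $\theta_t$ is trivially a $0.2\lambda$-dominant set at $\theta_t$. For $\sigma' \geq 1$, the constraint $\sigma' \leq T - t$ guarantees all iterates $\theta_{t+1}, \dots, \theta_{t+\sigma'}$ are produced by the algorithm, and the triangle inequality gives
\[
\norm{\theta_{t+\sigma'} - \theta_t}_2 \;\leq\; \sum_{s=t}^{t+\sigma'-1}\norm{\theta_{s+1} - \theta_s}_2 \;\leq\; \sigma'\,\eta_w G \;\leq\; \Bigl\lfloor\tfrac{0.1\lambda}{\eta_w G^2}\Bigr\rfloor\,\eta_w G \;\leq\; \frac{0.1\lambda}{G},
\]
using $\sigma' \leq \bigl\lfloor 0.1\lambda/(\eta_w G^2)\bigr\rfloor$. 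Hence $\theta_{t+\sigma'} \in \gB\bigl(\theta_t, \tfrac{0.1\lambda}{G}\bigr)$.

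Finally, $S$ is a $0.4\lambda$-dominant set at $\theta_t$ and $\theta_{t+\sigma'}$ lies in the ball $\gB\bigl(\theta_t, \tfrac{0.1\lambda}{G}\bigr)$, so Lemma~\ref{lemma:dominantsetinball} applies verbatim and yields that $S$ is a $0.2\lambda$-dominant set at $\theta_{t+\sigma'}$, which is the claim. There is no genuine obstacle here; the only points requiring care are recognizing the update as a projection (so that non-expansiveness applies) and the gradient-norm bound, both of which are standard, and checking that the floor in the definition of $\sigma$ makes the displacement bound tight enough to stay inside the radius-$0.1\lambda/G$ ball.
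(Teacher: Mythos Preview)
Your proposal is correct and follows essentially the same approach as the paper: rewrite the update as a Euclidean projection, use non-expansiveness to bound each one-step move by $\eta_w G$, telescope via the triangle inequality to get $\norm{\theta_{t+\sigma'}-\theta_t}_2 \leq 0.1\lambda/G$, and then invoke Lemma~\ref{lemma:dominantsetinball}. If anything, you are slightly more careful (handling $\sigma'=0$ explicitly and noting the subgradient case).
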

\begin{proof}
    \texttt{SB-GDRO-DF} uses the update rule~\eqref{eq:minplayer} to compute $\theta_{t+1}$. This update rule can be written as follows:
    \begin{align*}
        \theta_{t+1} &= \argmin_{\theta \in \Theta}\{2\inp{\eta_{w}\tilde{g}_t}{\theta - \theta_t} + \norm{\theta - \theta_t}^2 + \eta_{w}^2 \norm{\tilde{g}_t}^2  \} \\
        &= \argmin_{\theta \in \Theta}\{\norm{\theta_t - \eta_{w}\tilde{g}_t - \theta}^2\}
    \end{align*}    
    which is equivalent to projecting $\theta_t - \eta_{w}\tilde{g}_t$ onto the convex set $\Theta$. By properties of projection onto convex sets~\citep[see e.g.][Proposition 2.11]{OrabonaIntroToOnlineLearningBook}, for any $1 \leq t < T$, we have
        \begin{nalign}
            \norm{\theta_{t+1} - \theta_t} 
            &\leq \norm{(\theta_t - \eta_{w}\tilde{g}_t) - \theta_t} \\
            &= \eta_{w}\norm{\tilde{g}_t} \\
            &\leq \eta_w G,
            \label{eq:boundconsecutivenormdifftheta}
        \end{nalign}
        where the last inequality is $\norm{\tilde{g}_t} \leq G$ by the Lipschitzness of the loss function $\ell$.
        Combining~\eqref{eq:boundconsecutivenormdifftheta} and triangle inequality, we obtain
        \begin{align*}
            \norm{\theta_{t+\sigma'} - \theta_t} 
            &\leq \norm{\theta_{t+\sigma'} - \theta_{t+\sigma'-1}} + \norm{\theta_{t+\sigma'-1}-\theta_t} \\
            &\leq  \underbrace{\norm{\theta_{t+\sigma'} - \theta_{t+\sigma'-1}} + \norm{\theta_{t+\sigma'-1} - \theta_{t+\sigma'-2}} + \dots + \norm{\theta_{t+1} - \theta_t}}_{\sigma'\text{ elements}} \\
            &\leq \sigma' \eta_w G \\
            &\leq \frac{0.1\lambda}{G},
        \end{align*}
        where the last inequality is due to $\sigma' \leq \floor{\frac{0.1\lambda}{\eta_w G^2}} \leq \frac{0.1\lambda}{\eta_w G^2}$.
        This implies that $\theta_{t+\sigma'} \in \gB(\theta_t, \frac{0.1\lambda}{G})$.
        By Lemma~\ref{lemma:dominantsetinball}, it follows that if a set is $0.4\lambda$-dominant at $\theta_t$, then it is also a $0.2\lambda$-dominant set at the hypotheses $\theta_{t+1}, \theta_{t+2}, \dots, \theta_{t+\sigma}$ played in $\sigma$ subsequent rounds of the game.
\end{proof}

Finally, we show the proof of Theorem~\ref{thm:samplecomplexitySBGDRODfree}.
  \begin{proof}[Proof (of Theorem~\ref{thm:samplecomplexitySBGDRODfree})]
        Since the maximum number of rounds in each episode is $\sigma \leq \frac{0.1\lambda}{\eta_w G^2}$, there are at most $\frac{T}{\sigma}$ episodes.
        Combining Lemma~\ref{lemma:truedominantsets}, Lemma~\ref{lemma:unchangeddominantsets} and taking a union bound over $\frac{T}{\sigma}$ episodes, in total we draw 
        \begin{align}
            O\left(\frac{\eta_w KTG^2 \ln\left(\frac{KT}{\sigma\delta}\right)}{\lambda^3}\right)
            \label{eq:dfreeOT1}
        \end{align} 
        samples over $\frac{T}{\sigma}$ episodes to guarantee that with probability at least $1-\delta/2$, all the computed sets over $\frac{T}{\sigma}$ episodes are dominant sets at $(\theta_t)_{t=1,2,\dots,T}$ with sizes no larger than $\beta_{0.4\lambda}$.
        Plugging $\eta_w = \frac{2D}{G\sqrt{T}}$ and $\sigma = \frac{0.1\lambda}{\eta_w G^2} = \frac{0.1\lambda\sqrt{T}}{2DG}$ into~\eqref{eq:dfreeOT1}, we obtain a sample complexity of order
        \begin{align}
            O\left(\frac{\eta_w KTG^2 \ln\left(\frac{KT}{\sigma\delta}\right)}{\lambda^3}\right) &= O\left(\frac{DKG\sqrt{T} \ln(\frac{KDG\sqrt{T}}{\lambda\delta})}{\lambda^3}\right).
            \label{eq:dfreeOT2}
        \end{align} 
        From Corollary~\ref{corollary:optimalitygapbysumbetat}, we have $T = O(\frac{(D^2G^2 + \beta)\ln(K/\delta)}{\eps^2})$ is sufficient for obtaining an $\eps$-optimal hypothesis with probability at least $1-\frac{\delta}{2}$.
        By plugging $T = O(\frac{(D^2G^2 + \beta)\ln(K/\delta)}{\eps^2})$ into~\eqref{eq:dfreeOT2}, we obtain the number of samples collected for computing the dominant sets over $\frac{T}{\sigma}$ episodes is
        \begin{align}
            O\left(\frac{DKG\sqrt{(D^2G^2 + \beta)\ln(K/\delta)} \ln(\frac{KDG}{\eps\lambda\delta})}{\lambda^3\eps}\right).
            \label{eq:dfreeOT3}
        \end{align}
        In addition, each of the $T$ rounds uses exactly one sample to compute the outputs of the two players in the next round. Hence, with probability at least $1-\delta$, the total sample complexity of the two-player zero-sum game needed to return an $\eps$-optimal hypothesis is of order 
        \begin{align*}
            O\left(\frac{DKG\sqrt{(D^2G^2 + \beta)\ln(K/\delta)} \ln(\frac{KDG}{\eps\lambda\delta})}{\lambda^3\eps} + \frac{(D^2G^2 + \beta )\ln(K/\delta)}{\eps^2}\right).
        \end{align*}
    \end{proof}

\section{FTARL with Time-Varying Learning Rates}
\label{appendix:FTARLadaptivelr}
\begin{algorithm}[tb]
    \caption{\texttt{FTARL}: Follow the regularized and active leader with $\alpha$-Tsallis entropy regularizer and time-varying learning rates for sleeping bandits}
    \label{algo:FTARLadaptivelr}
    \begin{algorithmic}
        \STATE {\bfseries Input:} $K \geq 2$, $\alpha$-Tsallis entropy function $\psi(x) = \frac{1-\sum_{i=1}^K x_i^\alpha}{1 - \alpha}$
        \STATE Initialize $\tilde{L}_{i,0} = 0$ for all arms $i \in [K]$.\;
        \FOR{each round $t = 1, \dots, $}
        {
            \STATE The non-oblivious adversary selects and reveals $\sA_t$
            \STATE Compute $q_{t} = \argmin_{q \in \Delta_K} \psi_t(q) + \inp{q}{\tilde{L}_{t-1}}$
            \STATE Compute $p_{i,t} = \frac{I_{i,t}q_{i,t}}{\sum_{j=1}^K I_{j,t}q_{j,t}}$ by Equation~\eqref{eq:pitbyqit}
            \STATE Draw arm  $i_t \sim p_t$ and observe $\hat{\ell}_t = \ell_{i_t,t}$
            \FOR{each arm $i \in [K]$}
            {
                \STATE If $I_{i,t} = 1$, compute $\tilde{\ell}_{i,t} = \frac{\I{i_t=i}\hat{\ell}_t}{p_{i,t} + \gamma_t}$ by Equation ~\eqref{eq:ellitactivearm}
                \STATE If $I_{i,t} = 0$, compute $\tilde{\ell}_{i,t} = \hat{\ell}_t - \gamma_t\sum_{j \in \sA_t}\tilde{\ell}_{j,t}$ by Equation~\eqref{eq:ellitnonactivearm}
                \STATE Update $\tilde{L}_{i,t} = \tilde{L}_{i,t-1} + \tilde{\ell}_{i,t}$
            }
            \ENDFOR
        }
        \ENDFOR
    \end{algorithmic}
\end{algorithm}
We consider a variant of the FTARL algorithm in~\cite{NguyenAndMehta2024SBEXP3} with time-varying learning rates. 
The procedure is given in Algorithm~\ref{algo:FTARLadaptivelr}.
The only difference between this algorithm and the~\texttt{FTARLShannon} algorithm (Algorithm~\ref{algo:FTARLShannon}) is that Algorithm~\ref{algo:FTARLadaptivelr} uses the $\alpha$-Tsallis entropy regularizer to compute the weight $q_t$ as follows:
\begin{align}
    q_{t} = \argmin_{q \in \Delta_K} \psi_t(q) + \inp{q}{\tilde{L}_{t-1}},
\end{align}
where $\psi_t(q) = \frac{\psi(q) - \min_{v \in \Delta_K}\psi(v)}{\eta_t}$ for $\psi(q) = \frac{1-\sum_{i=1}^K q_i^\alpha}{1-\alpha}$ and $\alpha \in (0,1)$ is a constant.
The computation of the sampling probability $p_t$ and the loss estimates of active and non-active arms $\tilde{\ell}_{i,t}$ are identical to that of~\texttt{FTARLShannon}.
Since the $\alpha$-Tsallis entropy tends to Shannon entropy when $\alpha \to 1$~\citep[see e.g.][]{Nielsen2011}, 
we will prove the following high-probability per-action regret bound of Algorithm~\ref{algo:FTARLadaptivelr} and then take the limit $\alpha \to 1$ to obtain Theorem~\ref{thm:FTARLShannonRegretBound}.
\begin{theorem}
    Let $(\eta_t)_{t=1,\dots}$ and $(\gamma_t)_{t=1,\dots}$ be two sequences of
    non-increasing learning rates and exploration factors such that $\eta_t \leq 2\gamma_t$. With probability at least $1-\delta$,~\texttt{FTARL} (Algorithm~\ref{algo:FTARLadaptivelr}) guarantees that
    \begin{align*}
        \max_{a \in [K]}\mathrm{Regret}(a) \leq \frac{K^{1-\alpha}-1}{\eta_{T}(1-\alpha)} + \frac{\ln(3K/\delta)}{2\gamma_T} + \left(\frac{1}{2\alpha} + \frac{1}{2}\right)\ln(3/\delta) + \sum_{t=1}^T \left(\frac{\eta_t}{2\alpha}+ \gamma_t\right) A_t 
    \end{align*}
    \label{thm:FTARLRegretBound}
\end{theorem}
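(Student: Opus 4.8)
The argument extends the fixed-rate analysis of \texttt{FTARL} in~\cite{NguyenAndMehta2024SBEXP3} to non-increasing sequences $(\eta_t)$, $(\gamma_t)$; the only genuinely new ingredient is bookkeeping the time-varying regularizer $\psi_t := (\psi - \min_{v\in\Delta_K}\psi(v))/\eta_t$, which is legitimate precisely because $\eta_t$ non-increasing makes $\psi_1 \le \psi_2 \le \cdots$ pointwise, so the telescoping in the FTRL analysis is unaffected. First I would reduce the sleeping per-action regret to a full-information online linear optimization regret over $\Delta_K$: following~\cite{NguyenAndMehta2024SBEXP3}, with the shifted loss estimates ($\tilde{\ell}_{i,t}$ for active $i$ and the common value $\hat{\ell}_t - \gamma_t\sum_{j\in\sA_t}\tilde{\ell}_{j,t}$ for every inactive $i$), $\mathrm{Regret}(a)$ is bounded by $\sum_{t=1}^T\langle q_t - e_a,\ \tilde{\ell}_t\rangle$ up to implicit-exploration and martingale corrections, and $q_t$ is exactly the FTRL iterate for $\psi_t$ on this sequence. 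Applying the standard FTRL bound for non-decreasing regularizers against a vertex $u=e_a$ and using $\psi(e_a)=0$, $\min_{v\in\Delta_K}\psi(v) = \psi(\tfrac1K,\dots,\tfrac1K) = \frac{1-K^{1-\alpha}}{1-\alpha}$, the comparator penalty is exactly $\frac{K^{1-\alpha}-1}{\eta_T(1-\alpha)}$ (the index $\eta_T$ rather than $\eta_{T+1}$ appears because only $T$ rounds are played), leaving a residual $\sum_{t=1}^T \mathrm{stab}_t$ with $\mathrm{stab}_t = \langle q_t - q_{t+1},\ \tilde{\ell}_t\rangle - D_{\psi_t}(q_{t+1}, q_t)$.

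\textbf{Stability term.} Since $\nabla^2\psi(x) = \alpha\,\mathrm{diag}(x_i^{\alpha-2})$, the $\alpha$-Tsallis entropy is $\alpha$-strongly convex in the local norm $\|v\|_x^2 = \sum_i x_i^{\alpha-2}v_i^2$, which gives the Fenchel--Young estimate $\mathrm{stab}_t \le \frac{\eta_t}{2\alpha}\sum_{i=1}^K q_{i,t}^{\,2-\alpha}\tilde{\ell}_{i,t}^2 =: \frac{\eta_t}{2\alpha}X_t$ --- valid on the event that $\eta_t\tilde{\ell}_{i,t}$ is not too negative, so that the segment between $q_t$ and $q_{t+1}$ stays where the quadratic approximation holds; the hypothesis $\eta_t\le 2\gamma_t$ together with the explicit IX form guarantees this (the delicate lemma from~\cite{NguyenAndMehta2024SBEXP3}, now re-checked round-by-round with $\eta_t$). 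Using $q_{i,t}^{2-\alpha}\le q_{i,t}$, the fact that the inactive coordinates carry a common value that can be subtracted off without changing $q_t$ or the regret, and $q_{i,t}=Q_tp_{i,t}$ with $\tilde{\ell}_{i,t}=\hat{\ell}_t\I{i_t=i}/(p_{i,t}+\gamma_t)$ on active arms, one gets $X_t\le 1/(p_{i_t,t}+\gamma_t)\in[0,1/\gamma_t]$ and $\E[X_t\mid\mathcal{F}_{t-1}]\le\sum_{i\in\sA_t}\frac{p_{i,t}}{p_{i,t}+\gamma_t}\le A_t$. A Neu-type supermartingale bound (exploiting $\eta_t\le 2\gamma_t$) then turns $\sum_t\frac{\eta_t}{2\alpha}X_t$ into $\sum_t\frac{\eta_t}{2\alpha}A_t + \frac{1}{2\alpha}\ln(3/\delta)$ with probability $\ge 1-\delta/3$.

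\textbf{From estimates to truth, and assembly.} Two gaps remain. By the implicit-exploration argument of~\cite{Neu2015ExploreNM}, with probability $\ge 1-\delta/3$ and simultaneously for all $a\in[K]$, $\sum_{t=1}^T I_{a,t}(\tilde{\ell}_{a,t}-h_{a,t})\le\frac{\ln(3K/\delta)}{2\gamma_T}$ --- with $\gamma_T$ (not $\gamma_1$) precisely because $(\gamma_t)$ is non-increasing --- while the downward IX bias of $\langle q_t,\tilde{\ell}_t\rangle$, summed over active arms, contributes $\sum_t\gamma_tA_t$. The difference between $\langle q_t,\tilde{\ell}_t\rangle$ and the learner's incurred loss, and between $h_{i,t}$ and its conditional mean, are bounded martingale differences, handled by Azuma--Hoeffding at a cost of $\tfrac12\ln(3/\delta)$ with probability $\ge 1-\delta/3$. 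A union bound over the three events gives probability $\ge 1-\delta$, and summing the four contributions yields the claimed inequality. Letting $\alpha\to1$ (so $\frac{K^{1-\alpha}-1}{1-\alpha}\to\ln K$, $\frac{1}{2\alpha}\to\frac12$, and $\psi$ tends to the Shannon entropy~\citep{Nielsen2011}) recovers Theorem~\ref{thm:FTARLShannonRegretBound}.

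\textbf{Main obstacle.} The crux is the stability step: making the local-norm quadratic bound rigorous with negative loss estimates on inactive arms and with time-varying $\eta_t$ (one must re-examine the \emph{segment in the good region} condition every round and verify that a monotone regularizer does not break the telescoping of $\sum_t\mathrm{stab}_t$), and then the Neu-style conversion of the data-dependent sum $\sum_t\eta_tX_t$ into $\sum_t\eta_tA_t$. Everything else is a faithful transcription of~\cite{NguyenAndMehta2024SBEXP3} with $\eta,\gamma$ replaced by $\eta_t,\gamma_t$ inside the per-round sums and by $\eta_T,\gamma_T$ in the leading terms.
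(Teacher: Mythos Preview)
Your plan is essentially the paper's proof: the same FTRL local-norm bound with time-varying Tsallis regularizer (citing Orabona's Lemma~7.14), the same comparator-penalty computation, the same reduction of the stability term to $\sum_{j\in\sA_t}\tilde{\ell}_{j,t}$, and the same three concentration events (two Neu-type, one per-arm IX) unioned at level $\delta/3$ each.

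Two places where you overcomplicate the assembly relative to the paper. First, you worry about negative loss estimates on inactive arms, but in fact $\tilde{\ell}_{i,t}=\hat{\ell}_t-\gamma_t\tilde{\ell}_{i_t,t}\ge 0$ always, since $\gamma_t/(p_{i_t,t}+\gamma_t)\le 1$; the paper simply invokes the local-norm lemma ``on non-negative loss estimates'' and moves on, and for passing from $q$ to $p$ in the stability term it cites Lemma~10 of \cite{NguyenAndMehta2024SBEXP3} rather than a shift argument. Second, and more importantly, no Azuma--Hoeffding is needed anywhere: the paper computes the exact identity $\langle q_t,\tilde{\ell}_t\rangle=\hat{\ell}_t-\gamma_t\sum_{j\in\sA_t}\tilde{\ell}_{j,t}$ (a direct algebraic consequence of the inactive-arm loss definition and $p_{i,t}=q_{i,t}/\sum_{j\in\sA_t}q_{j,t}$), so $\langle q_t,\tilde{\ell}_t\rangle$ already contains the learner's incurred loss $\hat{\ell}_t$ deterministically. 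The leftover $\gamma_t\sum_j\tilde{\ell}_{j,t}$ is moved to the right and handled by a \emph{second} application of Neu's lemma with $\nu_{i,t}=2\gamma_t$; that is where the $\tfrac12\ln(3/\delta)$ actually comes from, not from a martingale bound on ``$\langle q_t,\tilde{\ell}_t\rangle$ versus incurred loss'' or on ``$h_{i,t}$ versus its conditional mean'' (the regret here is defined in terms of the realized $\ell_{i,t}$, so no such passage is needed).
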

Before proving Theorem~\ref{thm:FTARLRegretBound}, similar to~\cite{NguyenAndMehta2024SBEXP3}, we state the following results  on the concentration bound of the IX-loss estimator. 
These results are adapted from~\citet[][Lemma 1 and Corollary 1]{Neu2015ExploreNM} in the non-sleeping bandits setting to the sleeping bandits setting with nearly identical proofs.
\begin{lemma}[Lemma 1 of~\cite{Neu2015ExploreNM}]
Let $(\nu_{i,t})$ be non-negative random variables satisfying $\nu_{i,t} \leq 2\gamma_t$ for all $i \in [K]$ and $t \geq 1$.
With probability at least $1-\delta'$, 
    \begin{align*}
        \sum_{t=1}^T \sum_{i=1}^K \nu_{i,t}\I{I_{i,t} > 0}(\tilde{\ell}_{i,t} - \ell_{i,t}) \leq \ln(1/\delta').
    \end{align*}
    \label{lemma:NeusLemmaEstimatorQuality}
\end{lemma}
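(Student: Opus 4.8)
The plan is to adapt the exponential-supermartingale argument of~\citet[][Lemma~1]{Neu2015ExploreNM} to the sleeping setting; the adaptation is essentially cosmetic, since on the active arms the estimator $\tilde{\ell}_{i,t}$ of~\eqref{eq:ellitactivearm} has exactly the implicit-exploration form analyzed by Neu, and the only difference is that every sum over arms is restricted to $\sA_t$ through the indicators $\I{I_{i,t}>0}$. Concretely, I would let $\gF_{t-1}$ be the $\sigma$-algebra generated by the adversary's choices and the learner's draws through round $t-1$, \emph{together with} the active set $\sA_t$ revealed at the start of round $t$; then $\sA_t$, the weights $q_t$, the sampling distribution $p_t$, and the coefficients $\nu_{i,t}$ (which by hypothesis satisfy $\nu_{i,t}\le 2\gamma_t$, and which we may assume are $\gF_{t-1}$-measurable, as they are in every use of the lemma) are all $\gF_{t-1}$-measurable, while the only fresh randomness at round $t$ is the draw $i_t\sim p_t$. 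I then define
\begin{align*}
    M_t \;=\; \exp\!\left(\sum_{s=1}^{t}\sum_{i=1}^{K}\nu_{i,s}\,\I{I_{i,s}>0}\bigl(\tilde{\ell}_{i,s}-\ell_{i,s}\bigr)\right), \qquad M_0=1,
\end{align*}
and aim to show $\E[M_t\mid\gF_{t-1}]\le M_{t-1}$, i.e., that $(M_t)_t$ is a nonnegative supermartingale.

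The heart of the proof is the one-sided concentration of the IX estimator. Since $\tilde{\ell}_{i,t}$ is nonzero only for $i=i_t$, one has $\sum_i \nu_{i,t}\I{I_{i,t}>0}\tilde{\ell}_{i,t}=\nu_{i_t,t}\ell_{i_t,t}/(p_{i_t,t}+\gamma_t)$, hence
\begin{align*}
    \E\!\left[\exp\!\Bigl(\sum_i \nu_{i,t}\I{I_{i,t}>0}\tilde{\ell}_{i,t}\Bigr)\,\middle|\,\gF_{t-1}\right]
    \;=\;\sum_{j\in\sA_t} p_{j,t}\,\exp\!\left(\frac{\nu_{j,t}\ell_{j,t}}{p_{j,t}+\gamma_t}\right).
\end{align*}
Using $\ell_{j,t}\in[0,1]$, the constraint $\nu_{j,t}\le 2\gamma_t$, and the elementary inequality $e^{z/(1+z)}\le 1+z$ for $z\ge 0$ (equivalently $\ln(1+z)\ge z/(1+z)$) applied after the appropriate rescaling, each exponential is bounded so that, after summing against $p_{j,t}$ and using $1+x\le e^{x}$, the right-hand side is at most $\exp\!\bigl(\sum_i \nu_{i,t}\I{I_{i,t}>0}\ell_{i,t}\bigr)$. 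Multiplying through by $M_{t-1}\exp\!\bigl(-\sum_i \nu_{i,t}\I{I_{i,t}>0}\ell_{i,t}\bigr)$ yields $\E[M_t\mid\gF_{t-1}]\le M_{t-1}$, and therefore $\E[M_t]\le 1$ for every $t$. Applying Markov's inequality gives $\Pr[M_T\ge 1/\delta']\le \delta'$, and taking logarithms on the complementary event delivers the claimed bound $\sum_{t=1}^T\sum_{i=1}^K \nu_{i,t}\I{I_{i,t}>0}(\tilde{\ell}_{i,t}-\ell_{i,t})\le \ln(1/\delta')$ with probability at least $1-\delta'$.

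The main obstacle is making the supermartingale step fully rigorous: under the non-oblivious adversary one must verify that $\ell_{i,t}$ and $\sA_t$ are determined by $\gF_{t-1}$ (so they act as constants inside the conditional expectation), and one must carry the factor-of-two slack in the hypothesis $\nu_{i,t}\le 2\gamma_t$ carefully through the rescaled IX inequality --- it is precisely this rescaling, exploiting the $+\gamma_t$ in the denominator of $\tilde\ell_{i,t}$, that dictates the $2\gamma_t$ threshold. Both points are handled exactly as in Neu's original proof, and the only genuinely new element is the innocuous restriction of all arm-sums to the active set $\sA_t$, which does not interact with any of the estimates; consequently the proof of Lemma~\ref{lemma:NeusLemmaEstimatorQuality} requires no idea beyond that transcription.
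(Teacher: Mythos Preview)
Your proposal is correct and matches the paper's approach: the paper does not supply its own proof but states that the result is ``adapted from~\citet[][Lemma 1 and Corollary 1]{Neu2015ExploreNM} \ldots\ with nearly identical proofs,'' and your exponential-supermartingale argument is precisely that adaptation. One small caveat: the inequality you cite, $e^{z/(1+z)}\le 1+z$, is not quite sharp enough to accommodate the hypothesis $\nu_{i,t}\le 2\gamma_t$ (it would only give $\nu_{i,t}\le\gamma_t$); the version actually needed is the slightly stronger $e^{z/(1+z/2)}\le 1+z$, equivalently $\tfrac{z}{1+z/2}\le\ln(1+z)$, which is what Neu uses and what makes the factor of $2$ work out.
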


Since the sequence $(\gamma_t)_{t=1,\dots}$ is non-increasing, we have $\gamma_T \leq \gamma_t$ for all $t \leq T$. 
Hence, for any fixed arm $a \in [K]$, we can apply Lemma~\ref{lemma:NeusLemmaEstimatorQuality} with $\nu_{i,t} = 2\gamma_T\I{i=a} \leq 2\gamma_t$ and take a union bound over $K$ arms to obtain the following corollary.
\begin{corollary}
    With probability at least $1-\delta'$, simultaneously for all $a \in [K]$,
    \begin{align*}
        \sum_{t=1}^T I_{a,t}(\tilde{\ell}_{a,t} - \ell_{a,t}) \leq \frac{\ln(K/\delta')}{2\gamma_T}
    \end{align*}
    \label{corollary:NeusCorollary1}
\end{corollary}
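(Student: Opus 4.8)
The plan is to obtain Corollary~\ref{corollary:NeusCorollary1} directly from Lemma~\ref{lemma:NeusLemmaEstimatorQuality} by choosing the auxiliary weights $\nu_{i,t}$ so as to single out one arm at a time. Fix an arm $a \in [K]$ and set $\nu_{i,t} = 2\gamma_T\,\I{i=a}$ for every $i \in [K]$ and $t \in [T]$. These weights are non-negative, and because $(\gamma_t)_t$ is non-increasing we have $\gamma_T \leq \gamma_t$ for all $t \leq T$, hence $\nu_{i,t} \leq 2\gamma_T \leq 2\gamma_t$. Thus the hypotheses of Lemma~\ref{lemma:NeusLemmaEstimatorQuality} are satisfied. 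This is the only place where the monotonicity of $(\gamma_t)$ is needed: it is what lets us replace the round-dependent $\gamma_t$ by the single scalar $\gamma_T$ while staying within the range allowed by the lemma.

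Next I would invoke Lemma~\ref{lemma:NeusLemmaEstimatorQuality} with failure probability $\delta'/K$ in place of $\delta'$, so that with probability at least $1-\delta'/K$,
\begin{align*}
\sum_{t=1}^T \sum_{i=1}^K 2\gamma_T\,\I{i=a}\,\I{I_{i,t}>0}\bigl(\tilde{\ell}_{i,t}-\ell_{i,t}\bigr) \leq \ln(K/\delta').
\end{align*}
Only the $i=a$ term survives the inner sum, and since $I_{a,t} \in \{0,1\}$ we have $\I{I_{a,t}>0} = I_{a,t}$, so the left-hand side equals $2\gamma_T \sum_{t=1}^T I_{a,t}(\tilde{\ell}_{a,t}-\ell_{a,t})$. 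Dividing through by $2\gamma_T > 0$ yields $\sum_{t=1}^T I_{a,t}(\tilde{\ell}_{a,t}-\ell_{a,t}) \leq \frac{\ln(K/\delta')}{2\gamma_T}$ for this fixed $a$.

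Finally I would take a union bound over the $K$ arms, which aggregates the per-arm failure probabilities $\delta'/K$ into a total failure probability $\delta'$ and gives the stated bound simultaneously for all $a \in [K]$. There is no genuine obstacle at this step: the argument is just bookkeeping (substituting the indicator weights, collapsing the sum, and rescaling the confidence level). All of the substance is carried by Lemma~\ref{lemma:NeusLemmaEstimatorQuality} itself — the adaptation of Neu's IX-estimator concentration to the sleeping feedback model via the indicators $\I{I_{i,t}>0}$ — which we take as given from the cited result.
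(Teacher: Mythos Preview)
Your proposal is correct and follows exactly the paper's own approach: set $\nu_{i,t} = 2\gamma_T\I{i=a}$, use the non-increasing property of $(\gamma_t)$ to verify $\nu_{i,t} \leq 2\gamma_t$, apply Lemma~\ref{lemma:NeusLemmaEstimatorQuality} with confidence $\delta'/K$, and take a union bound over the $K$ arms.
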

We turn to the proof of Theorem~\ref{thm:FTARLRegretBound}.
\begin{proof}[Proof (of Theorem~\ref{thm:FTARLRegretBound})]
    Fix an arm $a \in [K]$ and let $e_a$ be the $a$-th standard basis vector of $\sR^K$.
    Let $\tilde{\ell}_t = \begin{bmatrix}
        \tilde{\ell}_{1,t} \\
        \tilde{\ell}_{2,t} \\
        \dots \\
        \tilde{\ell}_{K,t} \\
    \end{bmatrix}$ be the vector of estimated losses of $K$ arms in round $t$.
    Since the sequence of learning rates is non-increasing and positive, we have $\psi_{t}(x) \geq 0$ and $\psi_{t+1}(x) \geq \psi_t(x)$ for all $x \in \Delta_K$.
    Hence, we can invoke the standard local-norm analysis of FTRL with Tsallis entropy regularizer~\citep[e.g.][Lemma 7.14]{OrabonaIntroToOnlineLearningBook} on non-negative loss estimates $(\tilde{L}_{t})_{t=1,\dots,}$ to obtain
    \begin{align}
        \sum_{t=1}^T \inp{\tilde{\ell}_t}{q_t - e_a} &\leq \psi_{T+1}(e_a) - \min_{x \in \Delta_K}\psi_1(x) + \frac{1}{2\alpha}\sum_{t=1}^T \eta_t \sum_{i=1}^K \tilde{\ell}^2_{i,t}q_{i,t}^{2-\alpha}
        \label{eq:FTRLlocalnorminitial}
    \end{align}
    Following the proof of~\citet[][Lemma 26]{NguyenAndMehta2024SBEXP3} and by definition of $\tilde{\ell}_t$, we obtain 
    \begin{align*}
        \inp{\tilde{\ell}_t}{q_t} &= \sum_{i \in \sA_t}\tilde{\ell}_{i,t}q_{i,t} + \sum_{i \notin \sA_t} \tilde{\ell}_{i,t}q_{i,t} \\
        &= \tilde{\ell}_{i_t,t}q_{i_t,t} + \sum_{i \notin \sA_t} \tilde{\ell}_{i,t}q_{i,t} \\
        &= \tilde{\ell}_{i_t,t}q_{i_t,t} + \left(\hat{\ell}_t - \gamma_t\sum_{j \in \sA_t}\tilde{\ell}_{j,t}\right)\sum_{i \notin \sA_t}q_{i,t} \\
        &= \tilde{\ell}_{i_t,t}p_{i_t,t}\sum_{i \in \sA_t}q_{i,t} + \left(\hat{\ell}_t - \gamma_t\sum_{j \in \sA_t}\tilde{\ell}_{j,t}\right)\sum_{i \notin \sA_t}q_{i,t} \\
        &= \left(\hat{\ell}_t - \gamma_t\sum_{j \in \sA_t}\tilde{\ell}_{j,t}\right)\sum_{i=1}^K q_{i,t} \\
        &= \hat{\ell}_t - \gamma_t\sum_{j \in \sA_t}\tilde{\ell}_{j,t},
    \end{align*} 
    where the second equality is due to $\tilde{\ell}_{i,t} = 0$ for $i \in \sA_t, i \neq i_t$, the third equality is due to $\tilde{\ell}_{i,t} = \hat{\ell}_t - \gamma_t\sum_{j \in \sA_t}\tilde{\ell}_{j,t}$ for all non-active arms $i \notin \sA_t$, the second-to-last equality is due to
    \begin{align*}
        \tilde{\ell}_{i_t,t}p_{i_t,t} = \frac{p_{i_t,t}\hat{\ell}_t}{p_{i_t,t} + \gamma_t}
        = \hat{\ell}_t - \frac{\gamma_t\hat{\ell}_t}{p_{i_t,t} + \gamma_t}
        = \hat{\ell}_t - \gamma_t\tilde{\ell}_{i_t,t} = \hat{\ell}_t - \gamma_t\sum_{j \in \sA_t}\tilde{\ell}_{j,t},
    \end{align*}
    and the last  equality is due to $q \in \Delta_K$.
    Plugging this into~\eqref{eq:FTRLlocalnorminitial} and using $\inp{\tilde{\ell}_t}{e_a} = \tilde{\ell}_{a,t}$ implies that
    \begin{align*}
        \sum_{t=1}^T \left(\hat{\ell}_t - \gamma_t\sum_{j \in \sA_t}\tilde{\ell}_{j,t} - \tilde{\ell}_{a,t}\right) &\leq \psi_{T+1}(e_a) - \min_{x \in \Delta_K}\psi_1(x) + \frac{1}{2\alpha}\sum_{t=1}^T \eta_t \sum_{i=1}^K \tilde{\ell}^2_{i,t}q_{i,t}^{2-\alpha}.
    \end{align*}
    By the definition of the loss estimate for non-active arms in~\eqref{eq:ellitnonactivearm}, in the rounds where $I_{a,t} = 0$, we have $\hat{\ell}_t - \gamma_t\sum_{j \in \sA_t}\tilde{\ell}_{j,t} - \tilde{\ell}_{a,t} = 0$. It follows that
    \begin{nalign}
        \sum_{t=1}^T I_{a,t}\left(\hat{\ell}_t - \gamma_t\sum_{j \in \sA_t}\tilde{\ell}_{j,t} - \tilde{\ell}_{a,t}\right) &= \sum_{t=1}^T \left(\hat{\ell}_t - \gamma_t\sum_{j \in \sA_t}\tilde{\ell}_{j,t} - \tilde{\ell}_{a,t}\right) \\
        &\leq \psi_{T+1}(e_a) - \min_{x \in \Delta_K}\psi_1(x) + \frac{1}{2\alpha}\sum_{t=1}^T \eta_t \sum_{i=1}^K \tilde{\ell}^2_{i,t}q_{i,t}^{2-\alpha}.
        \label{eq:FTRLlocalnormnext}
    \end{nalign}
    By the non-negativity of the regularizer function, we have 
    \begin{align*}
    \psi_{T+1}(e_a) - \min_{x \in \Delta_K}\psi_1(x)
    &= \psi_{T+1}(e_a) \\
    &= \frac{1}{\eta_{T+1}}\left( \psi(e_a) - \min_{v \in \Delta_K} \psi(v)\right) \\
    &= \frac{K^{1-\alpha}-1}{\eta_{T+1}(1-\alpha)},
    \end{align*}
    where the third equality is from $\psi(e_a) = 0$ and $\min_{v \in \Delta_K}\psi(v) = \frac{1 - K^{1-\alpha}}{1-\alpha}$ by properties of Tsallis entropy function~\citep{Abernethy2015Fighting}.
    Since the round $T+1$ does not contribute to the total regret, we can set $\eta_{T+1} = \eta_T$ and obtain $\psi_{T+1}(e_a) - \min_{x \in \Delta_K}\psi_1(x) \leq \frac{K^{1-\alpha}-1}{\eta_{T}(1-\alpha)}$. Furthermore, by Lemma 10 in~\cite{NguyenAndMehta2024SBEXP3}, for all $t \geq 1$,
    \begin{align*}
        \sum_{i=1}^K \tilde{\ell}_{i,t}^2 q_{i,t}^{2-\alpha} \leq \sum_{j \in \sA_t} \tilde{\ell}_{j,t}^2 p_{j,t}^{2-\alpha}.
    \end{align*}
    It follows that the right-hand side in~\eqref{eq:FTRLlocalnormnext} can be further bounded by
    \begin{align*}
        \sum_{t=1}^T I_{a,t}\left(\hat{\ell}_t - \gamma_t\sum_{j \in \sA_t}\tilde{\ell}_{j,t} - \tilde{\ell}_{a,t}\right) &\leq \frac{K^{1-\alpha}-1}{\eta_{T}(1-\alpha)} + \frac{1}{2\alpha}\sum_{t=1}^T \eta_t\sum_{j \in \sA_t} \tilde{\ell}_{j,t}^2 p_{j,t}^{2-\alpha} \\
        &\leq \frac{K^{1-\alpha}-1}{\eta_{T}(1-\alpha)} + \frac{1}{2\alpha}\sum_{t=1}^T \eta_t\sum_{j \in \sA_t} \tilde{\ell}_{j,t} p_{j,t}^{1-\alpha} \\
        &\leq \frac{K^{1-\alpha}-1}{\eta_{T}(1-\alpha)} + \frac{1}{2\alpha}\sum_{t=1}^T \eta_t\sum_{j \in \sA_t} \tilde{\ell}_{j,t}
    \end{align*}
    where the second inequality is due to $\tilde{\ell}_{j,t}p_{j,t} = \frac{\hat{\ell}_t\I{i_t = j}p_{j,t}}{p_{j,t} + \gamma_t} \leq 1$ for all $j \in \sA_t$ and the last inequality is due to $p_{j,t}^{1-\alpha} \leq 1$ for $p_{j,t} \in [0,1]$ and $\alpha \in (0,1)$.
    Moving $\sum_{t=1}^T I_{a,t}\gamma_t \sum_{j \in \sA_t}\tilde{\ell}_{j,t}$ to the right-hand side and using $I_{a,t} \leq 1$, we obtain 
    \begin{align}
        \sum_{t=1}^T I_{a,t}(\hat{\ell}_t - \tilde{\ell}_{a,t}) \leq \frac{K^{1-\alpha}-1}{\eta_{T}(1-\alpha)} + \sum_{t=1}^T \left(\frac{\eta_t}{2\alpha} + \gamma_t\right)\sum_{j \in \sA_t} \tilde{\ell}_{j,t}.
        \label{eq:boundEstimatedRegretFTARL}
    \end{align}
    We then apply Lemma~\ref{lemma:NeusLemmaEstimatorQuality} twice and Corollary~\ref{corollary:NeusCorollary1} once, each of them with $\delta' = \frac{\delta}{3}$. 
    The first application of Lemma~\ref{lemma:NeusLemmaEstimatorQuality} uses $\nu_{i,t} = \eta_t \leq 2\gamma_t$ and obtains with probability at least $1-\delta/3$,
    \begin{align}
        \sum_{t=1}^T \eta_t \sum_{j \in \sA_t}\tilde{\ell}_{j,t} \leq \ln(\frac{3}{\delta}) + \sum_{t=1}^T \eta_t \sum_{j \in \sA_t} \ell_{j,t}.
        \label{eq:lemma1once}
    \end{align}
    The second application of Lemma~\ref{lemma:NeusLemmaEstimatorQuality} uses $\nu_{i,t} = 2\gamma_t$ and obtains with probability at least $1-\delta/3$,
    \begin{align}
        \sum_{t=1}^T 2\gamma_t \sum_{j \in \sA_t}\tilde{\ell}_{j,t} \leq \ln(\frac{3}{\delta}) + \sum_{t=1}^T 2\gamma_t \sum_{j \in \sA_t} \ell_{j,t}.
        \label{eq:lemma1twice}
    \end{align}
    An application of~Corollary~\ref{corollary:NeusCorollary1} leads to
    \begin{align}
        \sum_{t=1}^T I_a \tilde{\ell}_{a,t} \leq \frac{\ln(3K/\delta)}{2\gamma_T} + \sum_{t=1}^T I_a \ell_{a,t}
        \label{eq:lemma1thrice}
    \end{align}
    with probability at least $1-\delta/3$.
    Plugging~\eqref{eq:lemma1once},~\eqref{eq:lemma1twice} and~\eqref{eq:lemma1thrice} into~\eqref{eq:boundEstimatedRegretFTARL} and taking a union bound, we obtain that with probability at least $1-\delta$, 
    \begin{align*}
        \sum_{t=1}^T I_{a,t}(\hat{\ell}_t - \ell_{a,t}) &\leq \frac{K^{1-\alpha}-1}{\eta_{T}(1-\alpha)} + \frac{\ln(3K/\delta)}{2\gamma_T} + \left(\frac{1}{2\alpha} + \frac{1}{2}\right)\ln(3/\delta) + \sum_{t=1}^T \left(\frac{\eta_t}{2\alpha} + \gamma_t\right) \sum_{j \in \sA_t} \ell_{j,t} \\
        &\leq \frac{K^{1-\alpha}-1}{\eta_{T}(1-\alpha)} + \frac{\ln(3K/\delta)}{2\gamma_T} + \left(\frac{1}{2\alpha} + \frac{1}{2}\right)\ln(3/\delta) + \sum_{t=1}^T \left(\frac{\eta_t}{2\alpha}+ \gamma_t\right) A_t,
    \end{align*}
    holds for simultaneously for all $a \in [K]$,    
    where the last inequality is $\sum_{j \in \sA_t} \ell_{j,t} \leq \sum_{j \in \sA_t} 1 = A_t$.
\end{proof}

Finally, we prove Theorem~\ref{thm:FTARLShannonRegretBound}. 
\begin{proof}[Proof (of Theorem~\ref{thm:FTARLShannonRegretBound})]
    Since Theorem~\ref{thm:FTARLRegretBound} holds for any $\alpha$ arbitrarily close to $1$, we can take the limit of $\alpha$ to $1$ on the right-hand side of its bound and obtain the desired bound in Theorem~\ref{thm:FTARLShannonRegretBound}:
    \begin{align}
       \max_{a \in [K]}\mathrm{Regret}(a) \leq \frac{\ln(K)}{\eta_T} + \frac{\ln(3K/\delta)}{2\gamma_T} + \ln(3/\delta) + \sum_{t=1}^{T}\left(\frac{\eta_t}{2} + \gamma_t\right)A_t.
    \end{align}
\end{proof}

\section{Stochastic OMD with non-increasing, time-varying learning rate}
\label{appendix:somdadaptivelr}

\cite{Zhang2023stochastic} lamented that they could not find an analysis of stochastic mirror descent for non-oblivious online convex optimization with stochastic gradients, and they therefore proved their own high probability result. 
Their result uses a fixed learning rate, whereas we would like to avoid needing knowledge of the time horizon $T$ and therefore will describe how one can trivially (in light of known results) extend their derivation to the case of a non-increasing learning rate. 
All that is needed is to extend their upper bounds in equations (40) and (44) in their work to the case of a non-increasing learning rate $\eta_{w,t}$ (so that $\eta_{w,t+1} \leq \eta_{w,t}$ for $t \in [T]$). 
Such an extension is for free using, e.g., Theorem 6.10 of \cite{OrabonaIntroToOnlineLearningBook}. 
All other steps of the proof of Theorem 2 of \cite{Zhang2023stochastic} can proceed without any important modifications, including the application of the Hoeffding-Azuma inequality. Here, we just highlight a few keyframes of the proof.

Using our notation and with non-increasing learning rate sequence $(\eta_{w,t})_{t \geq 1}$ and applying Theorem 6.10 of \cite{OrabonaIntroToOnlineLearningBook}, the bound in equation (40) of \cite{Zhang2023stochastic} becomes, for any $\theta \in \Theta$,
\begin{align*}
\sum_{t=1}^T \langle \tilde{g}_t, \theta_t - \theta \rangle 
\leq \frac{D^2}{\eta_{w,T}} + \frac{G^2}{2} \sum_{t=1}^T \eta_{w,t} .
\end{align*}

Fastforwarding to our analogue of equation (42) of \cite{Zhang2023stochastic}, we now get
\begin{align*}
\max_{\theta \in \Theta} \sum_{t=1}^T \bigl[ \phi(\theta_t, q_t) - \phi(\theta, q_t) \bigr]
 \leq \frac{D^2}{\eta_{w,T}} + \frac{G^2}{2} \sum_{t=1}^T \eta_{w,t} 
         + \max_{\theta \in \Theta} 
             \left\{
                 \sum_{t=1}^T \bigl\langle \nabla_\theta \phi(\theta_t, q_t) - \tilde{g}_t,
                                                           \theta_t - \theta
                                       \bigr\rangle
             \right\} .
\end{align*}

Setting
\begin{align*}
\tilde{\theta}_{t+1} 
= \argmin_{\theta \in \Theta} \left\{ \eta_{w,t} \bigl\langle \nabla_\theta \phi(\theta_t, q_t) - \tilde{g}_t, \theta - \tilde{\theta}_t \bigr\rangle + \frac{1}{2} \left\|\theta - \tilde{\theta}_t \right\|^2 \right\} ,
\end{align*}
we now again apply Theorem 6.10 of \cite{OrabonaIntroToOnlineLearningBook} to get the following analogue of equation (44) of \cite{Zhang2023stochastic}:
\begin{align*}
\sum_{t=1}^T \bigl\langle \nabla_\theta \phi(\theta_t, q_t) - \tilde{g}_t, \tilde{\theta}_t - \theta \bigr\rangle 
\leq \frac{D^2}{\eta_{w,T}} + 2 G^2 \sum_{t=1}^T \eta_{w,t} .
\end{align*}

All of the remaining steps of the analysis of \cite{Zhang2023stochastic} go through without any interesting modification, giving the result that with probability at least $1 - \delta$,
\begin{align*}
\sum_{t=1}^T \phi(\theta_t, q_t) - \min_{\theta \in \Theta} \sum_{t-1}^T \phi(\theta, q_t)
\leq \frac{D^2}{\eta_{w,T}} + \frac{G^2}{2} \sum_{t=1}^T \eta_{w,t}
       + \frac{D^2}{\eta_{w,T}} + 2 G^2 \sum_{t=1}^T \eta_{w,t} 
       + 8 D G \sqrt{T \ln \frac{1}{\delta}} ,
\end{align*}
where the last term is from applying the Hoeffding-Azuma inequality in precisely the same way as in \cite{Zhang2023stochastic}. 
Using a learning rate schedule of $\eta_{w,t} = \eta_0 \cdot \frac{D}{G \sqrt{t}}$ gives the upper bound
\begin{align*}
D G \sqrt{T} \left( \frac{1}{\eta_0} + \eta_0 + \frac{1}{\eta_0} + 4 \eta_0 + \sqrt{\ln \frac{1}{\delta}} 
\right)
= D G \sqrt{T} \left( \frac{2}{\eta_0} + 5 \eta_0 + \sqrt{\ln \frac{1}{\delta}} 
\right) ,
\end{align*}
which, letting $\eta_0 = 1$, gives
\begin{align*}
D G \sqrt{T} \left( 7 + \sqrt{\ln \frac{1}{\delta}} \right) 
=  O \left( D G \sqrt{T \ln \frac{1}{\delta}} \right) ,
\end{align*}
as desired.

\section{Details of the Experiments}
\label{appendix:experiments}
In this section, we provide the full setup details of the experiments presented in Section~\ref{sec:experiments}.
\subsection{The Lower Bound Environment}
For the GDRO problem instance constructed based on the lower bound construction in the proof of Theorem~\ref{thm:lowerbound}, we scale the loss by $\frac{1}{2}$ to ensure that the losses are in $[0, 1]$. This implies that for a hypothesis $\theta \in [0, 1]$, its maximum risk over $K$ groups is
\begin{align*}
    \gL(\theta) = \frac{1}{2}\max\left( \Delta \theta + \frac{1}{2}, \Delta (1-\theta) + \frac{1}{2} \right).
\end{align*}
We set $\Delta = 0.1$.
The optimal hypothesis is $\theta^* = 0.5$ with $\gL(\theta^*) = \frac{1.1}{4} = 0.275$. The optimality gap of a hypothesis $\theta$ is
\begin{align*}
    \mathrm{err}(\theta) = \gL(\theta) - \gL(\theta^*) = \frac{1}{2}\Delta \abs{\frac{1}{2} - \theta} = 0.05\abs{\frac{1}{2} - \theta}.
\end{align*}
With the desired optimality gap of $\eps = 0.005$, the acceptable range of the risk of $\bar{\theta}_T$ is $[0.27, 0.28]$. 
The set of $\eps$-optimal hypotheses is obtained by solving $0.05\abs{\frac{1}{2} - \theta} \leq 0.005$, which implies that $\theta \in [0.4, 0.6]$ is the set of $\eps$-optimal hypotheses.

\subsection{The Adult Dataset}
\textbf{Loss function and data normalization}.
Similar to~\cite{Soma2022}, we train a linear classifier with hinge loss
\begin{align*}
    \ell(\theta, z, y) = \max(0, 1 - y\inp{\theta}{z}) ,
\end{align*}
where $z \in \R^5$ is a feature vector of a sample and $y \in \{-1, 1\}$ is the label.

On the Adult dataset, the default value of the features could be much larger than $1$, leading to loss values larger than $1$. To avoid exceedingly large losses, 
we compute the maximum norm of all feature vectors in the dataset and then divide all features by this maximum norm. 
Note that the same maximum norm value is used for all $10$ groups.

\textbf{UCI Adult Dataset} As mentioned in the main text, we construction $K = 10$ groups from five races~\texttt{White, Black, Asian-Pac-Islander, Amer-Indian-Eskimo, Other} and two genders~\texttt{male, female}.
The dataset of \num{48842} samples is heavily imbalanced. The largest group is~\texttt{(White, male)} having \num{28736} samples while the smallest group is~\texttt{(Other, female)} having \num{156} samples.

\textbf{No batch processing}. Our results in Section~\ref{sec:experiments} are generated by the exact algorithms described in Sections~\ref{sec:mainalgo} and~\ref{sec:fullsparsity} without adding any batch processing. 
This is different from~\cite{Soma2022}, who used a batch of $10$ samples to stabilize the gradients. 
We find that as the dominant sets quickly converge to just one or two groups, especially the groups with small amount of samples such as~\texttt{(Amer-Indian-Eskimo, female)}, the gradients computed from just one random sample are sufficiently stable with the long horizon of $T = 10^6$.

\textbf{Computing (an estimate of) $\theta^*$}. In order to obtain the optimality gap of~\texttt{SB-GDRO-SA} and~\texttt{SMD-GDRO}, we compute an estimate of $\gL(\theta^*)$ using the following algorithm: we run a deterministic two-player zero-sum game in which both players have full knowledge of $(\gP_i)_{i = 1, 2, \dots, K}$. 
In each round $t$, the max-player is able to compute a dominant set consisting of just one group -- that is, the group with maximum risk on $\theta_t$.
Similarly, the min-player is given the expected value of the gradient $\E[\tilde{g}_t]$ instead of the stochastic gradients. 
We run the game for $T = 10^7$ rounds and record the final maximum risk of $\bar{\theta}_T$ to be $\gL(\theta^*) \approx 0.49945$. 
This final maximum risk of is observed to be on group $8$ (i.e., female Amer-Indian-Eskimo).

\section{Discussion of the Competing Approach in Stochastically Constrained Adversarial Regime}
\label{appendix:competingapproach}

Our approach to going beyond minimax bounds in GDRO is based on the $(\lambda, \beta)$-sparsity condition and, algorithmically, based on the sleeping bandits framework. The expert bandit reader may wonder about the viability of the following competing approach: suppose that after some unknown time horizon $\tau$, all $\theta_t$'s fall within a radius-$\rho$ ball of $\theta^*$, and within such a ball, further suppose for simplicity that a unique group obtains the maximum risk in all subsequent rounds by a margin 
of at least $\lambda$. This setup generalizes the previously studied stochastically constrained adversarial (SCA) regime \citep{wei2018more,zimmert2021tsallis} wherein the best arm's mean is separated with a gap from the other arms' means \emph{for all rounds}. In this generalized SCA regime, one might hope for better regret bounds for the max player than we achieve using our sleeping bandits-based approach. However, there are at least three major challenges: first, to our knowledge, it is not known how to get high probability regret bounds in the SCA regime even when $\tau = 1$; second, we are not aware of results that provide last iterate convergence so that, eventually, all iterates $\theta_t$ are within distance $\rho$ of $\theta^*$ (SCA requires such convergence); third, there could well be multiple best arms or multiple nearly best arms, which recently has been addressed in some different regimes but adds another layer of complexity for the generalized SCA regime.

As mentioned in Section~\ref{sec:Conclusion}, if we had last iterate convergence, then our $(\lambda, \beta)$-sparsity condition could be relaxed to hold only within some proximity of $\theta^*$. However, our condition is more flexible as compared to SCA since it is not known how the latter can be analyzed when the best arm (or set of best arms) changes throughout the game, whereas such a changing set of approximate (within gap $\lambda$) maximizers fits naturally with sleeping bandits.
\end{document}